\documentclass{article}
\usepackage[a4paper, total={6in, 10in}]{geometry}

\usepackage{amsmath,amsfonts,bm}

\def\eqref#1{equation~\ref{#1}}

\def\1{\bm{1}}

\DeclareMathAlphabet{\mathsfit}{\encodingdefault}{\sfdefault}{m}{sl}
\SetMathAlphabet{\mathsfit}{bold}{\encodingdefault}{\sfdefault}{bx}{n}

\newcommand{\R}{\mathbb{R}}

\usepackage{hyperref}
\usepackage{url}

\usepackage{mylatexstyle}
\usepackage{mathrsfs,bbm}
\usepackage{algorithm}
\usepackage{algpseudocode}

\usepackage{wrapfig,lipsum,booktabs}
\usepackage{float}
\usepackage{subfigure}

\title{When Do Multi-Agent Systems Outperform? Analysing the Learning Efficiency of Agentic Systems}

\author{
Junwei Su,  Chuan Wu  \\
Department of Computer Science, University of Hong Kong \\
\texttt{junweisu.cs@gmail.com, cwu@cs.hku.hk} 
}

\begin{document}

\maketitle
\begin{abstract}
Reinforcement Learning (RL) has emerged as a crucial method for training or fine-tuning large language models (LLMs), enabling adaptive, task-specific optimizations through interactive feedback. Multi-Agent Reinforcement Learning (MARL), in particular, offers a promising avenue by decomposing complex tasks into specialized subtasks learned by distinct interacting agents, potentially enhancing the ability and efficiency of LLM systems. However, theoretical insights regarding when and why MARL outperforms Single-Agent RL (SARL) remain limited, creating uncertainty in selecting the appropriate RL framework. In this paper, we address this critical gap by rigorously analyzing the comparative sample efficiency of MARL and SARL within the context of LLM. Leveraging the Probably Approximately Correct (PAC) framework, we formally define SARL and MARL setups for LLMs, derive explicit sample complexity bounds, and systematically characterize how task decomposition and alignment influence learning efficiency. Our results demonstrate that MARL improves sample complexity when tasks naturally decompose into independent subtasks, whereas dependent subtasks diminish MARL's comparative advantage. Additionally, we introduce and analyze the concept of task alignment, quantifying the trade-offs when enforcing independent task decomposition despite potential misalignments. These theoretical insights clarify empirical inconsistencies and provide practical criteria for deploying MARL strategies effectively in complex LLM scenarios.
\end{abstract}


\section{Introduction}

Reinforcement Learning (RL) is a machine learning paradigm in which agents learn optimal decision-making through interaction with their environment, guided by feedback provided as rewards or penalties. Unlike supervised learning methods that rely exclusively on fixed labeled datasets, RL allows models to actively explore and adapt within dynamic environments, enabling continuous refinement of their behavior. Due to this capability for adaptation, RL has emerged as a foundational approach in modern artificial intelligence systems, notably in the training and fine-tuning of Large Language Models (LLMs)~\citep{novikov2025alphaevolvecodingagentscientific,jumper2021highly,zhang2025survey,guo2025deepseek}. A prominent RL approach used with LLMs is Reinforcement Learning from Human Feedback (RLHF), wherein models are trained to align closely with human preferences through iterative, interactive feedback provided by humans~\citep{ziegler2019fine, christiano2023RLHF,dong2024rlhfworkflowrewardmodeling}. This interactive feedback mechanism naturally connects with the broader concept of \textbf{agentic systems}, in which LLMs function as active agents engaging with their environment, continually refining their policies based on received rewards and progressively optimizing their decision-making processes~\citep{sutton1998reinforcement,silver2016mastering}.

Although agentic systems have a long and rich history, recent advances significantly extend traditional approaches by integrating sophisticated data-driven learning methods enabled by large-scale LLMs. Within this evolving paradigm, two primary frameworks have become prominent: Single-Agent Reinforcement Learning (SARL) and Multi-Agent Reinforcement Learning (MARL). SARL employs a unified agent that handles an entire complex task in an end-to-end fashion. By contrast, MARL explicitly decomposes complex tasks into multiple subtasks, each managed by specialized agents trained for distinct functionalities such as information retrieval, drafting textual content, content verification, or multi-stage reasoning and planning. This explicit task decomposition facilitates specialized optimization by individual agents, potentially enhancing scalability, efficiency, and overall task performance. Consequently, MARL frameworks have gained increasing attention for large-scale and sophisticated LLM applications~\citep{tran2025multi}, including retrieval-augmented generation~\citep{nakano2022webgpt}, tool-augmented language generation~\citep{schick2023toolformer}, collaborative content verification~\citep{sun2024llmbasedmultiagentreinforcementlearning}, and complex multi-step reasoning tasks~\citep{yuan2025reinforce,chen2024optima,guo2024large}.

\paragraph{Existing Gap.}
Despite significant efforts and progress in MARL algorithm and system research, theoretical understanding and results for MARL remain limited~\citep{la2025large}. A fundamental question remains unresolved: \textbf{Under what conditions, if any, does MARL provide a clear and consistent advantage over SARL?} The existing literature shows conflicting evidence; some empirical studies demonstrate clear benefits of MARL~\citep{ning2024survey}, while others emphasize scenarios in which SARL is preferable~\citep{feriani2021single}. Currently, comprehensive theoretical frameworks rigorously clarifying these discrepancies and identifying precise conditions under which MARL consistently outperforms SARL are lacking. Addressing this theoretical gap is critical, as it will illuminate precisely when and why MARL should be adopted, guiding more informed and effective deployment of RL strategies in complex, resource-intensive tasks common in LLM training and fine-tuning.

\paragraph{Challenges.}
This paper addresses this critical gap by theoretically analyzing the sample efficiency of MARL and SARL, with a focus on LLM scenarios (sequence-to-sequence tasks).
Two key challenges underpin this analysis. {\em First}, learning algorithms for both MARL and SARL are evolving rapidly; MARL, in particular, is still in its early development stages and lacking a universally accepted algorithmic standard. Therefore, our theoretical framework must remain agnostic to specific learning algorithms, focusing instead on general conditions and properties that may provide comparative advantages. {\em Second}, accurately characterizing \textbf{task decomposition} (the manner in which complex tasks are distributed among multiple agents) and quantifying \textbf{task alignment} (the degree to which MARL naturally aligns with the given task) are currently underdeveloped in theoretical research. Precisely modeling these concepts and understanding their impact on sample efficiency is crucial for rigorously comparing MARL and SARL. The insights will allow us to clearly identify conditions under which MARL provides a meaningful and consistent advantage over SARL.

\paragraph{Contribution.}
To address these challenges and our central research question, we adopt the Probably Approximately Correct (PAC) learning framework~\citep{valiant1984theory}, integrating perspectives of task decomposition and alignment based on the reward structure. Our results explicitly characterizes the sample complexities for MARL and SARL, identifying precise conditions under which MARL demonstrates superior sample efficiency. The main contributions are summarized as follows:

$\bullet$ We formulate MARL and SARL within the context of LLM tasks and perform rigorous PAC-based analyses of their sample complexities. Specifically, we derive theoretical learnability characterizations and establish sample complexity bounds for SARL (Theorem~\ref{thm:SARL-pac}) and MARL under distinct task decomposition conditions—dependent subtasks (Theorem~\ref{thm:MARL-pac-dep}) and independent subtasks (Theorem~\ref{thm:MARL-pac-indep}). We show that when tasks are decomposable into independent subtasks, the MARL sample complexity is dominated by the most challenging subtask. Conversely, when subtasks exhibit dependencies, MARL complexity is influenced by the cumulative difficulty across subtasks. These foundational results provide critical insights towards understanding practical learning requirements and form a theoretical foundation for comparative analyses between MARL and SARL.

$\bullet$  Building on these foundational analyses, we rigorously examine how task decomposition shapes the relative effectiveness of MARL compared to SARL. We demonstrate explicitly that tasks decomposable into independent subtasks significantly favor MARL in terms of sample complexity (Proposition~\ref{prop:independence}), whereas tasks with interdependent subtasks diminish MARL's advantages, yielding more nuanced comparative outcomes (Proposition~\ref{prop:dependence}). This theoretical characterization helps clarify previously inconsistent empirical observations and provides practical guidance for selecting between MARL and SARL based on inherent task characteristics.

$\bullet$  We further investigate the nuanced role of task alignment in MARL, examining the trade-off between alignment and MARL's sample efficiency advantage under forced independent task decomposition. We formally quantify task alignment as the discrepancy between MARL and SARL reward functions. By deriving an explicit sample complexity bound for MARL in this misaligned scenario (Theorem~\ref{thm:MARL-pac-imperfect}), we clearly characterize the conditions under which MARL retains its sample efficiency advantage despite imperfect task alignment (Proposition~\ref{prop:imperfect_alignment}). These findings provide guidance for deploying MARL strategies, offering a precise, quantifiable assessment of the alignment-decomposition trade-off.

\section{Related Work}

\paragraph{Agentic LLM System.} 
Recently, the integration of agentic systems with LLM has gained significant attention as a means to improve their capabilities in solving complex tasks~\citep{zhang2025landscapeagenticreinforcementlearning, schick2023toolformer,yao2023reactsynergizingreasoningacting,zhang2024policy,kazemnejad2024vineppo,dai2024process,guo2025deepseek,yu2025dapo,zheng2025group}. Unlike traditional models that generate outputs from fixed inputs, agentic LLMs actively interact with their environments, continuously adjusting their behavior based on dynamic feedback and reward signals. RL, especially RLHF, has become a dominant approach in developing these agentic LLM systems, allowing models to iteratively refine their outputs by learning from human-generated or environmental feedback~\citep{christiano2023RLHF, ouyang2022traininglanguagemodelsfollow, ziegler2019fine, dong2024rlhfworkflowrewardmodeling}. Within the realm of agentic LLMs, two primary paradigms have emerged: SARL and MARL. While SARL involves a single model optimizing task performance independently, MARL decomposes complex tasks into subtasks, enabling specialized agents to collaborate or compete to achieve overall objectives~\citep{tran2025multi, yuan2025reinforce, shinn2023reflexionlanguageagentsverbal}. MARL has shown promise in empirical studies, particularly for tasks benefiting from specialization, parallelization, and cooperative reasoning~\citep{sun2024llmbasedmultiagentreinforcementlearning, liang2024encouragingdivergentthinkinglarge, guo2024large}. However, the critical question of precisely when and why MARL outperforms SARL remains largely unanswered. Existing work primarily relies on empirical demonstrations without offering comprehensive theoretical explanations or clearly defining the conditions for MARL's superiority~\citep{ning2024survey}. This paper addresses this critical gap by providing a formal theoretical analysis of the comparative sample efficiency between MARL and SARL in the context of LLM-based agentic systems.

\paragraph{Theoretical Analysis of RL.} Another important direction related to this work is the theoretical investigation of RL, which has been extensively studied in the context of both single-agent and multi-agent systems~\citep{shoham2008multiagent,zhang2025landscapeagenticreinforcementlearning}. Theoretical research on RL typically focuses on objectives such as regret bounds~\citep{bubeck2012regretanalysisstochasticnonstochastic,he2021logarithmic, jin2019provablyefficientreinforcementlearning,hu2023nearlyminimaxoptimalreinforcement,xiong2023nearlyminimaxoptimaloffline}, which measure the difference between the agent’s performance and the optimal performance, and convergence properties, which describe how quickly an agent can learn an optimal policy~\citep{azar2017minimax,sutton1998reinforcement,agarwal2020theorypolicygradientmethods,zhao2024convergence}. These works have provided fundamental insights into the dynamics of learning in RL environments and have been central to developing efficient RL algorithms. Additionally, value decomposition methods, which explore how complex tasks can be decomposed into subtasks to improve learning efficiency, have received significant attention~\citep{koller1999computing, kearns1999efficient, guestrin2002coordinated, sunehag2017valuedecomp, rashid2018qmixmonotonicvaluefunction, son2019qtran}. Recent work has also extensively studied sample complexity within conventional RL frameworks~\citep{hastie2009elements, kearns2002near, dann2017unifying, zhan2022pac, wagenmaker2022beyond, tirinzoni2022near, tirinzoni2023optimistic}. However, existing analyses have primarily targeted standard RL settings rather than the specialized agentic scenarios typical of LLM training. Our work extends this body of theoretical research by explicitly analyzing sample complexity within the context of LLM-based agentic systems, placing a particular emphasis on task decomposition and alignment. Through this analysis, we provide precise conditions under which MARL achieves clear sample efficiency advantages over SARL, thereby advancing the theoretical understanding of RL in complex, agentic LLM scenarios.


\section{Preliminaries and Problem Formulation}
In this section, we introduce the learning frameworks for SARL and MARL within the context of LLMs. We briefly describe the PAC learning framework employed for analyzing sample complexity and explicitly state the key regularity assumptions underpinning our theoretical analysis.

\paragraph{Notation.} We use lowercase letters to denote scalars and 
lower and uppercase boldface letters to denote vectors and matrices. For a vector $\bm{x}$, $\bm{x}[i]$ denotes the $i$-th coordinate. For two functions $f(x) \ge 0$ and $g(x) \ge 0$ defined on $x > 0$, we write $f(x) \lesssim g(x)$ if $f(x) \le c\cdot g(x)$ for some absolute constant $c > 0$; we write $f(x) \gtrsim g(x)$ if $g(x) \lesssim f(x)$; we write $f(x) \eqsim g(x)$ if $f(x) \lesssim g(x)$ and $g(x) \lesssim f(x)$. We denote $[k]:={1,\ldots,k}$.

\subsection{Single-Agent Reinforcement Learning (SARL) Formulation}
\label{subsec:SARL}

Let $\cT$ denote the considered task and $\bm{x} \in \mathcal{X}$ denote a prompt drawn from a data distribution $\mathbb{D}$ (the input to the model). An LLM policy $\pi_{\bm{\theta}}$ generates a sequence $\bm{y} = (a_1,\dots,a_T)$ from a finite vocabulary $\mathcal{V}$, conditioned on the evolving text state $\bm{s}_t=(\bm{x},\bm{a}_{1:t-1})$. Transitions are deterministic: $\bm{s}_{t+1}=(\bm{s}_t,a_t)$. In the standard \emph{sequence-level reward} setting (as in RLHF), a scalar reward $\mathrm{R}(\bm{x},\bm{y})\in\mathbb{R}$ is received after the full sequence is produced. The population objective is,
{\small
\[
\mathrm{J}(\bm{\theta})
=\mathbb{E}_{\bm{x}\sim\mathbb{D},\,\bm{y}\sim\pi_{\bm{\theta}}(\cdot\mid\bm{x})}
\big[\mathrm{R}(\bm{x},\bm{y})\big]
\]
}
This may be viewed as a contextual bandit over sequences (no intermediate rewards) or an MDP with delayed reward. The policy class then can be defined as $\Pi=\{\pi_{\bm{\theta}}:\bm{\theta}\in\Theta\}$ with $\Theta \subset \RR^{d}$. We note that the dimension $d$ corresponds the \emph{effective dimension} of the trainable parameter subset for agent (e.g., LoRA rank or adapter dimension).

\paragraph{Empirical optimization.}
Given $n$  samples $\cS=\{(\bm{x}_i,\bm{y}_i)\}_{i=1}^n$ from $\mathbb{D}$ (e.g., rollouts or logged data), the empirical objective is given by, 
\[
\widehat{\mathrm{J}}_n(\bm{\theta})=\frac{1}{n}\sum_{i=1}^n \mathrm{R}(\bm{x}_i,\bm{y}_i),
\qquad
\hat{\bm{\theta}}\in\arg\max_{\bm{\theta}}\widehat{\mathrm{J}}_n(\bm{\theta}).
\]
Typical reward functions $\mathrm{R}(\bm{x},\bm{y})$ encountered in practice include human preference scores~\citep{ziegler2019fine, christiano2023RLHF}, correctness indicators for question-answering tasks~\citep{nakano2022webgpt}, or metrics that evaluate coherence and relevance in text generation~\citep{zhang2020bertscore}. The optimization is commonly solved through standard gradient-based methods, including gradient descent, stochastic gradient descent, or advanced policy gradient methods such as PPO and GRPO~\citep{schulman2017ppo,rafailov2024dpo}.

\subsection{Multi-Agent Reinforcement Learning (MARL) Formulation}
\label{subsec:MARL}

Our aim is to compare SARL and MARL \emph{on the same task and reward}. To make the comparison transparent and align with the LLM context, we adopt a \emph{segment factorization} of the output: the response $\bm{y}$ is partitioned into $K$ contiguous segments $\bm{y}^{(1)},\dots,\bm{y}^{(K)}$ such that
{\small
\[
\bm{y}=\mathrm{concat}\big(\bm{y}^{(1)},\ldots,\bm{y}^{(K)}\big), \quad
\bm{y}^{(<i)}:=\mathrm{concat}\big(\bm{y}^{(1)},\ldots,\bm{y}^{(i-1)}\big).
\]
}
\paragraph{Specialized, turn-taking policies.} 
MARL replaces the single monolithic policy with $K$ \emph{specialized} policies, one per segment, executed sequentially (turn-taking):
{\small
\[
\pi_{\bm{\theta}_i}:\ (\bm{x},\bm{y}^{(<i)})\longmapsto \pi_{\bm{\theta}_i}(\cdot\mid \bm{x},\bm{y}^{(<i)})\in\Delta(\mathcal{V}^{T_i}),
\]
}
for $i=1,\ldots,K$ with sequential sampling:
{\small
\[
\bm{y}^{(1)}\sim \pi_{\bm{\theta}_1}(\cdot\mid \bm{x}),\quad
\bm{y}^{(i)}\sim \pi_{\bm{\theta}_i}(\cdot\mid \bm{x},\bm{y}^{(<i)}),\ i=2{:}K,
\quad
\bm{y}=\mathrm{concat}\big(\bm{y}^{(1)},\ldots,\bm{y}^{(K)}\big).
\]
}
We denote the MARL parameter as $\overline{\bm{\theta}} = (\bm{\theta}_1, \ldots, \bm{\theta}_K)$ with $\bm{\theta}_i \in \Theta_i \subset \RR^{d_i}$. Then, we can write the joint policy as $\pi_{\overline{\bm{\theta}}}=(\pi_{\bm{\theta}_1},\ldots,\pi_{\bm{\theta}_K})$.  For simplicity, we assume that all agents condition on the same global prompt $\mathbf{x}$ together with the conversation history available up to their turn. This setup mirrors common multi-agent LLM pipelines (e.g., planner~$\rightarrow$~solver~$\rightarrow$~verifier), where later agents observe earlier outputs and therefore act under \emph{monotonically increasing} context rather than identical observations. Importantly, none of our later theoretical results depend on this particular modelling choice. If one prefers to model agent-specific partial observations, the analysis extends directly with modest modification.

\paragraph{Task decomposition.} To rigorously analyze how task decomposition impacts the performance of MARL, we assume that the original unified reward function $\mathrm{R}(\bm{x},\bm{y})$ can be equivalently represented in two distinct decomposed forms, each capturing different assumptions about subtask dependency:
{\small
\begin{align}\label{eq:reward-dep}
    \overline{\mathrm{R}}_{\mathrm{dep}}(\bm{x},\bm{y})=\frac{1}{K}\sum_{i=1}^K \mathrm{r}_i\!\big(\bm{x},\bm{y}^{(i)},\bm{y}^{(<i)}\big),
\end{align}
\begin{align}\label{eq:reward-indep}
    \overline{\mathrm{R}}_{\mathrm{indep}}(\bm{x},\bm{y})=\frac{1}{K}\sum_{i=1}^K \mathrm{r}_i\!\big(\bm{x},\bm{y}^{(i)}\big),
\end{align}
}
where $r_i(\cdot)$ is the reward function for the $i$-th agent. 
These two formulations capture a spectrum of practical task decomposition scenarios, ranging from fully independent segments (idealized settings) to dependent subtasks that capture realistic interdependencies. Specifically,
~\eqref{eq:reward-dep} explicitly models dependent subtasks, incorporating forward coherence by conditioning each segment's reward on preceding segments $\bm{y}^{(<i)}$. In contrast, 
~\eqref{eq:reward-indep} represents an idealized scenario where subtasks are fully independent, with each segment's reward depending solely on the corresponding segment. These formulations enable us to rigorously analyze the influence of task dependency on MARL's sample complexity, which we investigate systematically in subsequent sections. 

The MARL learning objective is thus defined as:
{\small
\[
\mathrm{J}_{\mathrm{MARL}}(\pi_{\overline{\bm{\theta}}})
=\mathbb{E}_{\bm{x}\sim\mathbb{D},\,\bm{y}\sim\pi_{\overline{\bm{\theta}}}(\cdot\mid\bm{x})}
\!\left[\overline{\mathrm{R}}_{\mathrm{dep/indep}}(\xb,\yb)\right]
\]
}
\paragraph{Empirical Optimization.} Using dataset $\cS=\{(\bm{x}_j,\bm{y}_j)\}_{j=1}^n$, the empirical MARL objective is:
{\small
\begin{align*}
    \widehat{\mathrm{J}}_{\mathrm{MARL},n}(\pi_{\overline{\bm{\theta}}})
 =\frac{1}{n}\sum_{j=1}^n \overline{\mathrm{R}}(\bm{x}_j,\bm{y}_j), \quad
\widehat{\pi}_{\overline{\bm{\theta}}} \in\arg\max_{\pi} \widehat{\mathrm{J}}_{\mathrm{MARL},n}(\pi_{\overline{\bm{\theta}}}).
\end{align*}
}
\subsection{Sample Complexity under the PAC Framework}
To rigorously compare the sample efficiency between SARL and MARL in the LLM context, we employ the PAC learning framework (we present a more detailed and complete introduction in Appendix~\ref{append:pac} for completeness purpose). Within this framework, the concept of sample complexity refers to the number of samples (or equivalently, environment interactions) required to learn an $\varepsilon$-optimal policy with high probability (at least $1-\delta$). Formally, the PAC learnability within our context can be defined as follows:

\begin{definition}[PAC Learnability]\label{def:pac-learnability}
Consider a policy class $\Pi$ and define the expected reward associated with a policy $\pi\in\Pi$ as: $
    \mathrm{J}(\pi) := \mathbb{E}_{\bm{x}\sim\mathbb{D},\,\bm{y}\sim\pi(\cdot\mid\bm{x})}[\mathrm{R}(\bm{x},\bm{y})].
$
We say the policy class $\Pi$ is PAC-learnable if, for any accuracy parameter $\varepsilon > 0$ and confidence parameter $\delta \in (0,1)$, there exists a finite sample complexity $\mathrm{N}(\varepsilon,\delta)$ such that, for all $n\geq \mathrm{N}(\varepsilon,\delta)$, any empirical maximizer 
{\small
\[
\hat{\pi} \in \arg\max_{\pi\in\Pi}\widehat{\mathrm{J}}_n(\pi), \text{ where }
\widehat{\mathrm{J}}_n(\pi) = \frac{1}{n}\sum_{i=1}^{n}\mathrm{R}(\bm{x}_i,\bm{y}_i),\text{ satisfies:}
 \Pr\left[\mathrm{J}(\hat{\pi}) \ge \sup_{\pi\in\Pi}\mathrm{J}(\pi) - \varepsilon\right] \ge 1-\delta.
\]
}
\end{definition}


\paragraph{Comparison Framework.} This PAC formulation forms the theoretical foundation for systematically comparing the sample complexities of SARL and MARL. Specifically, our analysis involves the following key steps:

1. \textbf{Derivation of explicit PAC bounds:} We explicitly derive the PAC sample complexity bounds necessary to achieve an $\varepsilon$-optimal policy for both SARL and MARL.

2. \textbf{Comparative analysis of sample complexity bounds:} Using these derived bounds, we rigorously examine the relative efficiency between SARL and MARL. In particular, we highlight how \emph{task decomposition} (the degree to which a task can naturally be segmented among multiple agents) influences the relative sample complexity of the two frameworks. 

3. \textbf{Impact of task alignment:} Finally, we investigate the role of \emph{task alignment} within MARL, explicitly characterizing how alignment between subtask structures and the MARL framework impacts overall learning efficiency.

Ultimately, this structured approach enables us to formally identify and clearly articulate the conditions under which MARL yields statistically significant advantages over SARL (and vice versa), providing precise theoretical criteria to guide the selection of appropriate reinforcement learning frameworks in practical LLM-based scenarios.

\subsubsection{Regularization and Assumptions}
Next, we outline key regularity assumptions underpinning our analysis, ensuring theoretical tractability and alignment with practical LLM–RL scenarios. Modern RL practices for LLM operate under the following structural and regularizing constraints: (i) Generated text sequences are bounded by a finite vocabulary and a fixed maximum length $T_{\max}$;
(ii) Policies are trained with explicit capacity controls, such as weight decay, norm or gradient clipping, KL-divergence constraints around reference policies, or low-rank/adapter parameterizations, resulting in a compact parameter set $\Theta$;
(iii) Common neural network architectures induce Lipschitz continuity, ensuring smoothness and bounded variation in action distributions and sequence-level values.

These conditions guarantee finite metric entropy and covering numbers for the induced policy class, enabling rigorous PAC-style sample complexity analysis. Based on these practical considerations, we formally state our assumptions as follows:

\begin{assumption}[Bounded Reward]\label{assump:bounded_reward}
The reward functions satisfy $0 \le \mathrm{R}(\bm{x},\bm{y}) \le 1$ for SARL and $0 \le \mathrm{r}_i(\bm{x},\bm{y}) \le 1$ for each MARL agent $i$.
\end{assumption}

\begin{assumption}[Compact Parameter Sets]\label{assump:compact}
The parameter spaces are compact: $\Theta \subseteq \mathbb{R}^d$ with $\|\bm{\theta}\|_2 \le B$ for SARL, and $\Theta_i \subseteq \mathbb{R}^{d_i}$ with $\|\bm{\theta}_i\|_2 \le B_i$ for each MARL agent $i$.
\end{assumption}

\begin{assumption}[Lipschitz parameterization]\label{assump:smooth} For both SARL and MARL policies, there exists a finite constant $L_{\mathrm{step}}>0$ such that, for any text state $\mathbf{s}$ and any parameters $\bm{\theta}, \bm{\theta}'\in\Theta$, the policies satisfy:
{\small
\[
\big\|\pi_{\bm{\theta}}(\cdot\mid \mathbf{s})-\pi_{\bm{\theta}'}(\cdot\mid \mathbf{s})\big\|_1
\;\le\; L_{\mathrm{step}}\;\|\bm{\theta}-\bm{\theta}'\|_2.
\] 
}
\end{assumption}

\begin{assumption}[Finite Horizon]\label{assump:finite_horizon}
The sequence generation horizons are bounded by finite constants $T_{\max}$ (SARL) and $T_{\max,i}$ (each MARL agent $i$).
\end{assumption}

Detailed justification and further discussion are provided in Appendix~\ref{append:setting}.

\section{Main Results}

In this section, we present the primary theoretical findings of our analysis. We first provide the PAC sample complexity bounds for both SARL and MARL under different task decomposition scenarios. Subsequently, we explicitly characterize the relative sample efficiency between SARL and MARL. Finally, we analyse the impact of task misalignment on MARL's efficiency. Detailed proofs for these results are provided in the Appendix.

\subsection{Sample Complexity Bounds for SARL and MARL}
We begin by establishing the PAC sample complexity bound for the SARL setting:
\begin{theorem}[PAC sample complexity for SARL]\label{thm:SARL-pac}
Under Assumptions~\ref{assump:bounded_reward}--\ref{assump:finite_horizon}, the SARL framework is PAC-learnable with sample complexity:
{\small
\[
\mathrm{N}_{\mathrm{SARL}}(\varepsilon,\delta)
=\mathcal{O}\!\left(\frac{ d\,\log (L_{\text{seq}}B/\varepsilon ) + \log(1/\delta)}{\varepsilon^2}\right), \text{ where } L_{\mathrm{seq}} = T_{\max} L_{\mathrm{step}}.
\]
}
\end{theorem}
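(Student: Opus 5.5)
The plan is the standard uniform-convergence argument over a finite cover of the parameter set. The first step reduces the statement to a uniform deviation bound. Take any empirical maximizer $\hat\pi$ and any comparator $\pi^\star$ that is $\varepsilon/3$-close to $\sup_{\pi\in\Pi}\mathrm{J}(\pi)$. The usual three-term decomposition gives
\[
\mathrm{J}(\hat\pi)\ \ge\ \mathrm{J}(\pi^\star) - 2\sup_{\bm{\theta}\in\Theta}\big|\widehat{\mathrm{J}}_n(\bm{\theta})-\mathrm{J}(\bm{\theta})\big| .
\]
It therefore suffices to show that $\sup_{\bm{\theta}}|\widehat{\mathrm{J}}_n(\bm{\theta})-\mathrm{J}(\bm{\theta})|\le \varepsilon/3$ with probability at least $1-\delta$. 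Here I read $\widehat{\mathrm{J}}_n(\bm{\theta})$ as the average of $\mathrm{R}(\bm{x}_i,\bm{y}_i)$ with $\bm{y}_i\sim\pi_{\bm{\theta}}(\cdot\mid\bm{x}_i)$. Equivalently, it is the average of the per-prompt value $V_{\bm{\theta}}(\bm{x}_i):=\mathbb{E}_{\bm{y}\sim\pi_{\bm{\theta}}}[\mathrm{R}(\bm{x}_i,\bm{y})]$; the difference between the two is itself a bounded martingale/Hoeffding term that is absorbed into the same rate. Both quantities lie in $[0,1]$ by Assumption~\ref{assump:bounded_reward}.

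The second step, the key one, is Lipschitz continuity of the sequence-level value in $\bm{\theta}$. Transitions are deterministic and the sequence has at most $T_{\max}$ tokens (Assumption~\ref{assump:finite_horizon}). The sequence distribution is therefore the product of per-step conditionals. A telescoping (hybrid) argument swaps one step's conditional at a time, and each swap contributes at most $L_{\mathrm{step}}\|\bm{\theta}-\bm{\theta}'\|_2$ in $\ell_1$ by Assumption~\ref{assump:smooth}. This yields
\[
\big\|\pi_{\bm{\theta}}(\cdot\mid\bm{x})-\pi_{\bm{\theta}'}(\cdot\mid\bm{x})\big\|_{1,\ \mathcal{V}^{\le T_{\max}}}\ \le\ T_{\max}L_{\mathrm{step}}\|\bm{\theta}-\bm{\theta}'\|_2 = L_{\mathrm{seq}}\|\bm{\theta}-\bm{\theta}'\|_2 .
\]
Since $\mathrm{R}\in[0,1]$, it follows that $|V_{\bm{\theta}}(\bm{x})-V_{\bm{\theta}'}(\bm{x})|\le L_{\mathrm{seq}}\|\bm{\theta}-\bm{\theta}'\|_2$ uniformly in $\bm{x}$. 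The same bound then holds for both $\mathrm{J}$ and $\widehat{\mathrm{J}}_n$. The one thing to be careful about is that the telescoping must be done on the joint law with the shared prefix, so that the per-step bound applies conditionally on each state. This is where I expect the only real subtlety.

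The third step is covering plus a union bound. By Assumption~\ref{assump:compact}, the ball of radius $B$ in $\mathbb{R}^d$ admits an $\eta$-net $\mathcal{N}$ with $|\mathcal{N}|\le(3B/\eta)^d$. Choose $\eta=\varepsilon/(12L_{\mathrm{seq}})$, so that discretization costs at most $\varepsilon/12$ in each of $\mathrm{J}$ and $\widehat{\mathrm{J}}_n$. For each net point, Hoeffding's inequality gives $\Pr[|\widehat{\mathrm{J}}_n-\mathrm{J}|>\varepsilon/6]\le 2e^{-n\varepsilon^2/18}$. A union bound over $\mathcal{N}$ shows that the uniform deviation is at most $\varepsilon/3$ once
\[
n\ \ge\ \frac{18}{\varepsilon^2}\Big(d\log\frac{36L_{\mathrm{seq}}B}{\varepsilon}+\log\frac{2}{\delta}\Big).
\]
This is $\mathcal{O}\big((d\log(L_{\mathrm{seq}}B/\varepsilon)+\log(1/\delta))/\varepsilon^2\big)$. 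Since the bound holds for every $n\ge \mathrm{N}_{\mathrm{SARL}}$, PAC-learnability in the sense of Definition~\ref{def:pac-learnability} follows.
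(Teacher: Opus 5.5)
Your argument is essentially the paper's proof: summing the per-step $\ell_1$ discrepancies over $T_{\max}$ steps gives the $L_{\mathrm{seq}}$-Lipschitz value map $\bm\theta\mapsto f_{\bm\theta}(\bm x)$, a parameter-space net transfers to a sup-norm net of the value class, and Hoeffding plus a union bound gives the stated rate. Your explicit reduction from ERM to uniform deviation is a small improvement, since the paper stops at the uniform deviation bound.

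You should drop the aside that the sampled-rollout estimator differs from $\frac1n\sum_i f_{\bm\theta}(\bm x_i)$ by a term ``absorbed into the same rate.'' A per-$\bm\theta$ Hoeffding bound does not control that difference uniformly over $\bm\theta$, and the sampled average is not Lipschitz in $\bm\theta$. Your discretization step is therefore valid only for the value-average estimator, which is exactly the estimator the paper's proof uses.
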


Theorem~\ref{thm:SARL-pac} demonstrates that the sample complexity of SARL scales with the (effective) dimension $d$, sequence length $L_{\mathrm{seq}}$, and parameter space radius $B$. This provides foundational insights into the key trade-offs that influence SARL's learning efficiency.

Next, we provide results for MARL under dependent and independent task decompositions:
\begin{theorem}[PAC Sample Complexity for MARL: Dependent Sub-task]\label{thm:MARL-pac-dep}
Consider a MARL system of $K$ agents whose task structure follows 
\eqref{eq:reward-dep}, i.e., tasks decompose into dependent subtasks. Under Assumptions~\ref{assump:bounded_reward}--\ref{assump:finite_horizon}, MARL is PAC-learnable with sample complexity:
{\small
\begin{align*}
    \mathrm{N}_{\mathrm{MARL}}(\varepsilon,\delta)
\;=\;\mathcal{O}\!\left(
\frac{\displaystyle \sum_{i=1}^K d_i\,\log (L_{\text{seq},i} B_i/\varepsilon) + \log(K/\delta)}{(\varepsilon/K)^2}
\right), 
\end{align*}
}
where $L_{\mathrm{seq},i} = T_{\max, i} L_{\mathrm{step}}.$
\end{theorem}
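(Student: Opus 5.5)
The plan is to follow the same uniform-convergence-plus-covering template that underlies Theorem~\ref{thm:SARL-pac}, applied agent by agent, and then to pay for the averaging over $K$ agents and for the coupling that the dependent reward \eqref{eq:reward-dep} introduces. Let $\mathrm{J}_i(\overline{\bm{\theta}}) := \mathbb{E}[\mathrm{r}_i(\bm{x},\bm{y}^{(i)},\bm{y}^{(<i)})]$ denote the $i$-th term of the objective, so that $\mathrm{J}_{\mathrm{MARL}} = \frac1K\sum_i \mathrm{J}_i$, and let $\widehat{\mathrm{J}}_{i,n}$ be its empirical counterpart.

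The first step is the standard ERM reduction. If
\[
\sup_{\overline{\bm{\theta}}}\big|\widehat{\mathrm{J}}_{\mathrm{MARL},n}-\mathrm{J}_{\mathrm{MARL}}\big|\le \varepsilon/2,
\]
then the empirical maximizer is $\varepsilon$-optimal. By the triangle inequality, this holds as soon as every agent satisfies $\sup_{\overline{\bm{\theta}}}|\widehat{\mathrm{J}}_{i,n}-\mathrm{J}_i|\le \varepsilon/(2K)$. This is where the conservative accuracy $\varepsilon/K$ in the statement comes from: we ask each subtask to be learned to accuracy $\varepsilon/K$ instead of exploiting the $1/K$ averaging. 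We then allocate confidence $\delta/K$ to each agent and take a union bound, which produces the $\log(K/\delta)$ term.

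The second step establishes the per-agent uniform deviation bound. The key issue is that $\mathrm{J}_i$ depends on $\bm{\theta}_1,\dots,\bm{\theta}_i$, not only on $\bm{\theta}_i$, because $\bm{y}^{(<i)}$ is generated by the earlier agents. I would first prove a Lipschitz bound for the joint policy. The segment distribution $\pi_{\overline{\bm{\theta}}}(\cdot\mid\bm{x})$ is a product of per-token conditionals, so a telescoping (hybrid) argument over tokens, combined with Assumption~\ref{assump:smooth}, gives
\[
\big\|\pi_{\overline{\bm{\theta}}}(\cdot\mid\bm{x})-\pi_{\overline{\bm{\theta}}'}(\cdot\mid\bm{x})\big\|_1\le \sum_{j} L_{\mathrm{seq},j}\,\|\bm{\theta}_j-\bm{\theta}_j'\|_2 .
\]
Since $\mathrm{r}_i\in[0,1]$ by Assumption~\ref{assump:bounded_reward}, $\mathrm{J}_i$ inherits this Lipschitz constant. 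Next, I would cover each ball $\Theta_j$ (Assumption~\ref{assump:compact}) at scale $\varepsilon/(K L_{\mathrm{seq},j})$. A product of these covers is an $O(\varepsilon/K)$-net for the whole class $\{\mathrm{J}_i\}$ in sup norm, with log-cardinality at most $\sum_{j\le K} d_j\log(3L_{\mathrm{seq},j}B_j K/\varepsilon)$. Applying Hoeffding's inequality at each net point, taking a union bound over the net, and then using the Lipschitz bound to pass from the net to all of $\Theta_1\times\cdots\times\Theta_K$ gives the per-agent deviation. Requiring that deviation to be at most $\varepsilon/(2K)$ yields
\[
n\gtrsim \frac{\sum_j d_j\log(L_{\mathrm{seq},j}B_j/\varepsilon)+\log(K/\delta)}{(\varepsilon/K)^2}.
\]
The extra $\log K$ inside the covering logarithm is absorbed into the $\mathcal{O}(\cdot)$, or equivalently into $\log(K/\delta)$. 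This is exactly the claimed bound, and the sum $\sum_i d_i$ reflects that, under dependence, every agent's reward term is influenced by all upstream parameters.

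The step I expect to be the main obstacle is the joint Lipschitz bound for the sequential, history-dependent policy. The $\ell_1$ distance between the two induced distributions over concatenated segments must be controlled by a sum over segments. I would handle this with the hybrid argument: swap one token-conditional at a time, bound each swap by $L_{\mathrm{step}}\|\bm{\theta}_j-\bm{\theta}_j'\|_2$ uniformly over states, and count the at most $T_{\max,j}$ tokens generated in segment $j$. A secondary subtlety is whether the samples are on-policy. I would interpret $\mathrm{J}$ and $\widehat{\mathrm{J}}$ as expectations of the sequence-level value $V_{\overline{\bm{\theta}}}(\bm{x}) := \mathbb{E}_{\bm{y}\sim\pi_{\overline{\bm{\theta}}}}[\overline{\mathrm{R}}_{\mathrm{dep}}]$ over i.i.d.\ prompts $\bm{x}$, exactly as in Theorem~\ref{thm:SARL-pac}. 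Under that reading the Hoeffding and union-bound steps apply directly.
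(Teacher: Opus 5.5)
Your route is genuinely different from the paper's. The paper works with the partial objectives $J^{(k)}(\theta_{1:k})$, which are the first $k$ normalized reward terms. It proves uniform convergence at accuracy $\varepsilon/K$ simultaneously for all $k$; the union over $k$ produces $\log(K/\delta)$. It then analyzes a \emph{greedy stagewise} maximizer $\hat\theta_k\in\arg\max_{\theta_k}\widehat{J}^{(k)}_n(\hat\theta_{1:k-1},\theta_k)$. Each stage loses at most $2\varepsilon/K$, and a telescoping sum over the $K$ stages gives $2\varepsilon$, so in the paper the $(\varepsilon/K)^2$ is the price of per-stage accuracy.

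You instead analyze the joint empirical maximizer, which is what Definition~\ref{def:pac-learnability} actually refers to. You use a joint Lipschitz bound that tracks every upstream parameter; the paper's aggregation lemma treats each summand as a function of its own $\theta_i$ alone, which is inaccurate when $\mathbb{E}[\mathrm{r}_i]$ depends on $\theta_{1:i}$. Your $(\varepsilon/K)^2$ comes only from a deliberate over-requirement.

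Your route is the more robust one. The paper's telescoping inequality $\sup_{\theta_{1:K}}J^{(K)}-J^{(K)}(\hat\theta_{1:K})\le\sum_k[G_k(\hat\theta_{1:k-1})-J^{(k)}(\hat\theta_{1:k})]$ fails in general for myopic stagewise maximization. Take $K=2$, $\mathrm{r}_1(a)=1$, $\mathrm{r}_1(b)=1/2$, and $\mathrm{r}_2=0$ after $a$ but $\mathrm{r}_2=1$ after $b$. The greedy rule picks $a$ and every stage regret is zero, yet the joint regret is $1/4$. Uniform convergence plus joint ERM needs no such step. Your route also makes visible that, for joint ERM, the $K^2$ is not forced by the dependent reward itself.

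One step of yours needs repair. Covering each $\Theta_j$ at radius $\varepsilon/(KL_{\mathrm{seq},j})$ gives, through your Lipschitz bound, an error up to $\sum_{j\le i}L_{\mathrm{seq},j}\cdot\varepsilon/(KL_{\mathrm{seq},j})=i\varepsilon/K$, which can reach $\varepsilon$. So the product cover is an $O(\varepsilon)$-net, not an $O(\varepsilon/K)$-net. Passing from that net to the whole class cannot deliver your per-term target $\varepsilon/(2K)$.

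Keeping that target forces a radius of order $\varepsilon/(K^2L_{\mathrm{seq},j})$. The resulting surplus $2(\sum_j d_j)\log K$ scales with the dimension, so it cannot be absorbed into $\log(K/\delta)$ as you suggest. It is absorbed into $\sum_j d_j\log(L_{\mathrm{seq},j}B_j/\varepsilon)$ only when $L_{\mathrm{seq},j}B_j/\varepsilon\gtrsim K$. The paper silently drops an analogous $\log K$.

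The cleaner fix is to drop the over-requirement. Since $\overline{\mathrm{R}}_{\mathrm{dep}}\in[0,1]$, you have $|\mathrm{J}_{\mathrm{MARL}}(\theta_{1:K})-\mathrm{J}_{\mathrm{MARL}}(\theta'_{1:K})|\le\frac12\sum_j L_{\mathrm{seq},j}\|\theta_j-\theta'_j\|_2$. A single net at radius $\varepsilon/(4KL_{\mathrm{seq},j})$, with uniform deviation $\varepsilon/2$ of the averaged objective, then gives $n\gtrsim(\sum_j d_j\log(KL_{\mathrm{seq},j}B_j/\varepsilon)+\log(1/\delta))/\varepsilon^2$. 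The stated bound dominates this, because its extra factor $K^2$ swallows the $\log K$ whenever the logarithms are bounded away from zero.
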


\begin{theorem}[PAC Sample Complexity for MARL: Independent Sub-task]\label{thm:MARL-pac-indep}
Consider a MARL system of $K$ agents whose task structure follows \eqref{eq:reward-indep}, i.e., tasks decompose into independent subtasks. Under Assumptions~\ref{assump:bounded_reward}--\ref{assump:finite_horizon}, MARL is PAC-learnable with sample complexity:
{\small
\[
\mathrm{N}_{\mathrm{MARL}}(\varepsilon,\delta)
\;=\;\mathcal{O}\!\left(
\frac{\displaystyle \tilde{d}\,\log (\gamma/\varepsilon ) + \log(1/\delta)}{\varepsilon^2}
\right),
\]
}
where $\tilde{d} = \max \{d_1,...,d_K\}$ and  $\gamma = \max \{L_{\mathrm{seq},1}B_1,...,L_{\mathrm{seq},K}B_K\}$ with $L_{\mathrm{seq},i} = T_{\max, i} L_{\mathrm{step}}$.
\end{theorem}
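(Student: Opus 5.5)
The plan is to reuse the uniform-convergence argument behind Theorem~\ref{thm:SARL-pac}, applied agent by agent. Under \eqref{eq:reward-indep}, the objective separates as
\[
\mathrm{J}_{\mathrm{MARL}}(\pi_{\overline{\bm{\theta}}})=\frac{1}{K}\sum_{i=1}^K \mathrm{J}_i(\bm{\theta}_{1:i}),
\qquad
\mathrm{J}_i := \mathbb{E}\big[\mathrm{r}_i(\bm{x},\bm{y}^{(i)})\big].
\]
We interpret independence as saying that agent $i$'s segment is evaluated only through its own policy, so $\mathrm{J}_i$ depends on $\bm{\theta}_i$ alone. This is the modelling point to fix first: the reward ignores $\bm{y}^{(<i)}$, and the optimal behaviour of agent $i$ is taken not to hinge on the earlier segments. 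Under this reading, the empirical maximizer decouples. Maximizing $\frac1n\sum_j \overline{\mathrm{R}}_{\mathrm{indep}}$ over the product space $\Theta_1\times\cdots\times\Theta_K$ is equivalent to maximizing each $\widehat{\mathrm{J}}_{i,n}(\bm{\theta}_i)$ separately over $\Theta_i$.

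\textbf{Step 1: per-agent guarantee.} For each agent $i$, the class $\{\bm{\theta}_i\mapsto \mathrm{r}_i\}$ behaves exactly like a SARL problem with dimension $d_i$, radius $B_i$ and horizon $T_{\max,i}$. By Assumption~\ref{assump:smooth}, composed over the $T_{\max,i}$ steps, the segment distribution is $L_{\mathrm{seq},i}$-Lipschitz in $\bm{\theta}_i$. Together with Assumption~\ref{assump:bounded_reward}, this makes $\mathrm{J}_i$ Lipschitz with constant $L_{\mathrm{seq},i}$. An $\varepsilon/(4L_{\mathrm{seq},i})$-cover of the ball of radius $B_i$ then has log-size at most $d_i\log(12L_{\mathrm{seq},i}B_i/\varepsilon)\le \tilde d\log(12\gamma/\varepsilon)$. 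Hoeffding's inequality plus a union bound over this cover, exactly as in Theorem~\ref{thm:SARL-pac}, gives
\[
\sup_{\bm{\theta}_i}|\widehat{\mathrm{J}}_{i,n}-\mathrm{J}_i|\le\varepsilon/2
\]
with probability $1-\delta_i$ once $n\gtrsim(\tilde d\log(\gamma/\varepsilon)+\log(1/\delta_i))/\varepsilon^2$.

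\textbf{Step 2: aggregation.} The standard ERM argument gives each agent excess risk $\varepsilon_i\le\varepsilon$. Since the overall objective is an \emph{average}, the total excess risk is $\frac1K\sum_i\varepsilon_i\le\varepsilon$. This averaging is why there is no $K^2$ blow-up, in contrast with Theorem~\ref{thm:MARL-pac-dep}. The sample requirement is then the maximum over agents, which explains $\tilde d$ and $\gamma$.

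\textbf{Main obstacle: the confidence term.} A naive union bound with $\delta_i=\delta/K$ yields $\log(K/\delta)$ rather than $\log(1/\delta)$. There are two ways to remove the extra $\log K$. The first is to avoid splitting: apply a single concentration argument to the averaged class $\frac1K\sum_i \mathrm{r}_i$. Its cover is a product cover, but an average of independent bounded terms can be controlled through bounded differences, since each sample changes the average by at most $1/n$. The delicate part is the covering number. The product cover has log-size $\sum_i d_i\log(\cdot)$, not $\tilde d\log(\cdot)$. To get $\tilde d$, I would instead exploit separability: the supremum over the product of deviations of the averaged objective is at most the average of the per-agent suprema. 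Each of these per-agent suprema concentrates around its mean $\lesssim\sqrt{\tilde d\log(\gamma/\varepsilon)/n}$, via Dudley or a chaining bound on each $\Theta_i$. McDiarmid's inequality applied once to the average of suprema then contributes only $\log(1/\delta)$.

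If this finer argument fails, the fallback is to absorb $\log K$ into the $\mathcal{O}(\cdot)$, treating $K$ as a constant. I would flag this explicitly in the write-up.
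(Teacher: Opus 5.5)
Your skeleton matches the paper's proof. Like you, the paper bounds the aggregated uniform deviation by $\frac1K\sum_i\sup_{\theta_i}|J(f_{i,\theta_i})-\widehat J_n(f_{i,\theta_i})|$, runs the SARL cover-plus-Hoeffding argument per agent, applies the ERM argument, and replaces $d_i$ and $L_{\mathrm{seq},i}B_i$ by $\tilde d$ and $\gamma$.

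The paper also makes your decoupling assumption, but silently: it defines $f_{i,\theta_i}(x)=r_i(x,y^{(i)};\theta_i)$ as a function of $\theta_i$ alone. You are right to flag it. Under the literal turn-taking model, the law of $y^{(i)}$ depends on $\theta_{1:i-1}$ through $y^{(<i)}$. The per-agent class is then indexed by $\theta_{1:i}$, its log-covering number scales with $\sum_{j\le i}d_j$, and you would not get $\tilde d$.

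The genuine difference is the confidence term. The paper asserts that the per-agent bounds hold simultaneously with probability $1-\delta$ ``because of independency,'' and that step does not hold up. All $K$ events are functions of the same prompts $x_1,\dots,x_n$. Even independent events would only give $(1-\delta)^K$, essentially $1-K\delta$, which is the same $\log(K/\delta)$ as your union bound.

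Your McDiarmid argument is what actually yields the stated $\log(1/\delta)$, and it goes through, so make it the main line rather than a fallback:
\begin{itemize}
\item Set $Z:=\frac1K\sum_i Z_i$ with $Z_i:=\sup_{\theta_i}|\widehat{\mathrm{J}}_{i,n}-\mathrm{J}_i|$. Then $Z$ has bounded differences $1/n$ and dominates the aggregated deviation.
\item A single-scale $\eta$-cover plus the finite sub-Gaussian maximal inequality gives $\mathbb{E}Z_i\le 2\eta+\sqrt{\log(2\mathcal{N}_i(\eta))/(2n)}$; no chaining is needed.
\item Hence $\mathbb{E}Z\le\max_i\mathbb{E}Z_i\le\varepsilon/4$ once $n\gtrsim\tilde d\log(\gamma/\varepsilon)/\varepsilon^2$, taking e.g.\ $\eta=\varepsilon/16$.
\item One McDiarmid step then adds only $\sqrt{\log(1/\delta)/(2n)}$.
\end{itemize}
The paper's route is shorter; yours is the one that proves the bound as stated.

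A minor constant slip: an $\varepsilon/4$ function cover with Hoeffding gives deviation $\varepsilon/2+t$, not $\varepsilon/2$. Refine the cover; this only changes constants.
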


Theorems~\ref{thm:MARL-pac-dep} and~\ref{thm:MARL-pac-indep} characterize the PAC sample complexity bounds for MARL under dependent and independent task decomposition scenarios, respectively. Specifically, Theorem~\ref{thm:MARL-pac-dep} shows that when subtasks exhibit interdependencies, the overall sample complexity increases, scaling roughly with the cumulative complexities of all component tasks. Moreover, the additional $K^2$ factor reflects the fact that dependencies introduce \emph{error propagation}: estimation errors in early agents can amplify through subsequent agents, leading to a quadratic penalty in the worst case. This indicates that task interdependencies impose an additional learning burden due to coordination and coherence requirements among subtasks. In contrast, Theorem~\ref{thm:MARL-pac-indep} demonstrates that if tasks decompose into fully independent subtasks, the total sample complexity is dominated solely by the most challenging individual subtask. These results underscore the critical role played by task decomposition structure, showing that MARL achieves greater efficiency when subtasks are independent, but can incur higher complexity when significant interdependencies exist.

\subsection{Relative Sample Efficiency}
Leveraging the results above, we can now compare the relative sample efficiency between SARL and MARL under different task decompositions.

\begin{proposition}\label{prop:independence}
Under assumptions of Theorems~\ref{thm:SARL-pac} and~\ref{thm:MARL-pac-indep}, MARL consistently achieves equal or superior sample efficiency compared to SARL:
{\small
    \[
\mathrm{N}_{\mathrm{MARL}}(\varepsilon,\delta) \lesssim \mathrm{N}_{\mathrm{SARL}}(\varepsilon,\delta).
\]
}
In particular, when the task decomposes into $K$ homogeneous independent segments (i.e., $\tilde{d} = d/K$ and $T_{\max,i} = T_{\max}/K$ for all segments), the efficiency gain becomes explicit: 
{\small
\[
\frac{\mathrm{N}_{\mathrm{MARL}}(\varepsilon,\delta)}{\mathrm{N}_{\mathrm{SARL}}(\varepsilon,\delta)}\lesssim \frac{1}{K}.\]
}
\end{proposition}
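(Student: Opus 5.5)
We prove the proposition by comparing the explicit bounds of Theorem~\ref{thm:SARL-pac} and Theorem~\ref{thm:MARL-pac-indep} term by term. Both bounds have the same shape $(\text{complexity term} + \log(1/\delta))/\varepsilon^2$, and the confidence and accuracy factors are identical. So it suffices to show
\[
\tilde{d}\,\log(\gamma/\varepsilon) \lesssim d\,\log(L_{\mathrm{seq}}B/\varepsilon).
\]
For this we need a comparison of the structural quantities. Under the segment factorization of Section~\ref{subsec:MARL}, each agent handles a sub-segment of the same response space, so $T_{\max,i}\le T_{\max}$ and hence $L_{\mathrm{seq},i}\le L_{\mathrm{seq}}$. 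We would state explicitly the natural modelling convention that a specialized agent's effective trainable dimension and parameter radius do not exceed those of the monolithic policy, i.e.\ $d_i\le d$ and $B_i\le B$. Then $\tilde d\le d$ and $\gamma\le L_{\mathrm{seq}}B$.

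By monotonicity of $\log$, these give $\tilde{d}\log(\gamma/\varepsilon)\le d\log(L_{\mathrm{seq}}B/\varepsilon)$. The $\log(1/\delta)$ terms coincide. Absorbing the hidden constants of the two $\mathcal{O}(\cdot)$ bounds into a single absolute constant yields $\mathrm{N}_{\mathrm{MARL}}\lesssim\mathrm{N}_{\mathrm{SARL}}$.

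For the homogeneous case, substitute $\tilde d=d/K$, $T_{\max,i}=T_{\max}/K$, and $B_i\le B$. This gives $\gamma\le L_{\mathrm{seq}}B/K\le L_{\mathrm{seq}}B$, so
\[
\frac{\mathrm{N}_{\mathrm{MARL}}}{\mathrm{N}_{\mathrm{SARL}}}\lesssim \frac{(d/K)\log(L_{\mathrm{seq}}B/\varepsilon)+\log(1/\delta)}{d\log(L_{\mathrm{seq}}B/\varepsilon)+\log(1/\delta)}.
\]
We then split into two regimes.
\begin{itemize}
\item In the dimension-dominated regime, $d\log(L_{\mathrm{seq}}B/\varepsilon)\gtrsim K\log(1/\delta)$ (the relevant regime for large $d$ and fixed $\delta$). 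Here the numerator is at most a constant times $(d/K)\log(L_{\mathrm{seq}}B/\varepsilon)$, and the ratio is $\lesssim 1/K$.
\item In the other regime the confidence term dominates both bounds, and the ratio tends to $1$ rather than $1/K$.
\end{itemize}
So the stated $1/K$ gain must be read as holding when the complexity term dominates. We would make this explicit, or equivalently compare only the complexity terms, which is how "efficiency gain" should be understood.

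The main obstacle is the second bullet: the $\log(1/\delta)$ term does not shrink with $K$, so the $1/K$ ratio is not literally true uniformly in $\delta$. We would handle this by adding the regime condition $d\log(L_{\mathrm{seq}}B/\varepsilon)\ge K\log(1/\delta)$. A secondary point is that the first claim needs the modelling assumptions $d_i\le d$ and $B_i\le B$, which are not among Assumptions~\ref{assump:bounded_reward}--\ref{assump:finite_horizon}. We would state them as part of ``the assumptions of the theorems,'' as implicit in comparing the same task under a factorization.
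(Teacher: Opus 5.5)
Your proposal is correct and follows the paper's proof essentially step for step. For the first claim, the paper likewise invokes a ``fair comparison'' convention ($\tilde d\le d$, $B_i\le B$, $T_{\max,i}\le T_{\max}$) together with monotonicity of the logarithm. For the homogeneous case, it derives the same ratio bound and restricts the $1/K$ conclusion to an accuracy-dominated regime. Your condition $d\log(L_{\mathrm{seq}}B/\varepsilon)\gtrsim K\log(1/\delta)$ is the precise form of the paper's looser requirement $d\log(L_{\mathrm{seq}}B/\varepsilon)\gg\log(1/\delta)$, because the residual term $(1-\tfrac1K)\tfrac{\log(1/\delta)}{d\log(L_{\mathrm{seq}}B/\varepsilon)+\log(1/\delta)}$ is guaranteed to be $O(1/K)$ only under your stronger condition.
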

Proposition~\ref{prop:independence} provides a formal characterization of MARL's efficiency advantage over SARL under independent task decomposition. Specifically, it establishes that MARL consistently achieves equal or superior sample complexity compared to SARL. Moreover, in scenarios involving homogeneous, independent subtasks, MARL exhibits significant efficiency gains as the number of agents grows, effectively reducing the overall sample complexity by partitioning the global task into smaller, independently learnable components. This result underscores how leveraging task independence can substantially enhance MARL's learning efficiency relative to SARL.

\begin{proposition}\label{prop:dependence}
     Under the assumptions of Theorems~\ref{thm:SARL-pac} and~\ref{thm:MARL-pac-dep}, the relative sample complexity between MARL and SARL satisfies:
{\small
\begin{align*}
\frac{\mathrm{N}_{\mathrm{MARL}}(\varepsilon,\delta)}{\mathrm{N}_{\mathrm{SARL}}(\varepsilon,\delta)} & \lesssim K^2 \cA(\varepsilon) \cC(\varepsilon,\delta), \quad \text{where,} \\
    \cA(\varepsilon) = \sum_{i=1}^K \frac{d_i\,\log (L_{\text{seq},i} B_i/\varepsilon)}{d\,\log\!(L_{\text{seq}} B/\varepsilon) }, \quad 
& \cC(\varepsilon,\delta) = \frac{
  1 + \dfrac{\log(K/\delta)}{\sum_i d_i \log\!\bigl(L_{\mathrm{seq},i} B_i / \varepsilon \bigr)} }{
  1 + \dfrac{\log(1/\delta)}{d \log\!\bigl(L_{\mathrm{seq}} B / \varepsilon \bigr)} }.
\end{align*}
}
\end{proposition}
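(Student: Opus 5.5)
The plan is to work directly with the explicit bounds of Theorem~\ref{thm:SARL-pac} and Theorem~\ref{thm:MARL-pac-dep}, treating the $\mathcal{O}(\cdot)$ expressions as the sample complexities up to absolute constants (this is what the $\lesssim$ in the statement hides). Introduce the shorthands
\[
D_{\mathrm{M}} := \sum_{i=1}^K d_i \log\bigl(L_{\mathrm{seq},i}B_i/\varepsilon\bigr), \qquad D_{\mathrm{S}} := d\log\bigl(L_{\mathrm{seq}}B/\varepsilon\bigr).
\]
With these, Theorem~\ref{thm:MARL-pac-dep} gives $\mathrm{N}_{\mathrm{MARL}} \lesssim K^2\bigl(D_{\mathrm{M}} + \log(K/\delta)\bigr)/\varepsilon^2$, and Theorem~\ref{thm:SARL-pac} gives $\mathrm{N}_{\mathrm{SARL}} \eqsim \bigl(D_{\mathrm{S}} + \log(1/\delta)\bigr)/\varepsilon^2$.

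Comparing a MARL upper bound with the SARL bound requires the SARL bound to be read as the reference (order-tight) rate. I will adopt the same convention as in Proposition~\ref{prop:independence}: both quantities are taken to be the stated bound expressions.

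The key steps, in order, are as follows.

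First, form the ratio. The factors $1/\varepsilon^2$ cancel, and the explicit $K^2$ comes from the $(\varepsilon/K)^2$ denominator of the MARL bound. This gives
\[
\frac{\mathrm{N}_{\mathrm{MARL}}}{\mathrm{N}_{\mathrm{SARL}}} \lesssim K^2\,\frac{D_{\mathrm{M}} + \log(K/\delta)}{D_{\mathrm{S}} + \log(1/\delta)}.
\]

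Second, factor $D_{\mathrm{M}}$ out of the numerator and $D_{\mathrm{S}}$ out of the denominator:
\[
\frac{D_{\mathrm{M}} + \log(K/\delta)}{D_{\mathrm{S}} + \log(1/\delta)} = \frac{D_{\mathrm{M}}}{D_{\mathrm{S}}}\cdot\frac{1 + \log(K/\delta)/D_{\mathrm{M}}}{1 + \log(1/\delta)/D_{\mathrm{S}}}.
\]
Here $D_{\mathrm{M}}/D_{\mathrm{S}}$ is exactly $\mathcal{A}(\varepsilon)$, and the second factor is exactly $\mathcal{C}(\varepsilon,\delta)$. Multiplying by $K^2$ yields the claimed bound.

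Third, check that the divisions are legitimate, i.e.\ that $D_{\mathrm{M}}$ and $D_{\mathrm{S}}$ are strictly positive. This requires $L_{\mathrm{seq},i}B_i > \varepsilon$ and $L_{\mathrm{seq}}B > \varepsilon$. I would handle this by noting that in the nontrivial regime, $\varepsilon$ is smaller than these Lipschitz-radius products. Otherwise the covering number of the corresponding class is $1$, and the $\log$ terms can be replaced by $\max\{1,\log(\cdot)\}$ without changing the theorems' orders.

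The main obstacle is not algebraic but interpretive. The statement divides two $\mathcal{O}$-bounds, so strictly one needs a matching lower-order estimate for $\mathrm{N}_{\mathrm{SARL}}$, or else the convention that $\mathrm{N}_{\mathrm{SARL}}$ denotes the bound of Theorem~\ref{thm:SARL-pac} itself. I would state this convention explicitly at the start, consistent with the treatment in Proposition~\ref{prop:independence}. I would then note that the absolute constants from the two theorems combine into the single constant absorbed by $\lesssim$. The remainder is the exact algebraic identity in the second step.
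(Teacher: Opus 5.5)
Your proposal is correct and matches the paper's proof essentially step for step. You divide the MARL bound by the SARL bound, with $K^2$ coming from the $(\varepsilon/K)^2$ denominator, and then apply the identity $\frac{x+y}{u+v}=\frac{x}{u}\cdot\frac{1+y/x}{1+v/u}$ to read off $\mathcal{A}(\varepsilon)$ and $\mathcal{C}(\varepsilon,\delta)$. Your explicit convention that $\mathrm{N}_{\mathrm{SARL}}$ denotes its order-tight bound is a useful addition, since the paper's proof divides two upper bounds ($\mathrm{N}_{\mathrm{SARL}}\le c_1(\cdot)$ and $\mathrm{N}_{\mathrm{MARL}}\le c_2(\cdot)$) without remarking that this needs a lower bound on the denominator.
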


Proposition~\ref{prop:dependence} explicitly characterizes the relative sample efficiency between MARL and SARL in settings involving dependent subtasks. The factors influencing this relative efficiency are clearly encapsulated by the decomposition ratio factor $\mathcal{A}(\varepsilon)$, which quantifies how closely the subtasks align with a natural decomposition, and the confidence correction factor $\mathcal{C}(\varepsilon,\delta)$, which accounts for statistical confidence adjustments. The result highlights that subtask dependencies introduce coordination overhead, potentially diminishing MARL's inherent advantage. Specifically, the sample complexity ratio depends critically on the factors $K^2$, $\mathcal{A}(\varepsilon)$, and $\mathcal{C}(\varepsilon,\delta)$, underscoring that MARL typically outperforms SARL under independent decompositions but may incur substantial efficiency losses as subtask dependencies grow.

In particular, considering practical scenarios involving large-scale models—where the dimensions of model parameters ($d$ and $\sum_i d_i$) typically dominate other terms—we can approximate the confidence correction as $\mathcal{C}(\varepsilon,\delta)\approx 1$ and the alignment factor as $\mathcal{A}(\varepsilon)\approx\sum_i d_i/d$. Under these conditions, MARL maintains a clear sample complexity advantage over SARL if and only if:
{\small
$$K^2\sum_{i}d_i/d \lesssim 1.$$ 
}
This condition intuitively indicates that MARL remains advantageous if subtasks can be effectively learned by specialized models that individually require fewer parameters compared to the single unified SARL model. Consequently, the practical efficiency of MARL relies critically on utilizing smaller, specialized sub-models whose combined parameter complexity is significantly lower than that of the unified SARL counterpart.

\subsection{Cost of Task Misalignment}
Earlier results have demonstrated that MARL holds significant advantages over SARL when a task naturally decomposes into independent subtasks and the associated reward structure aligns appropriately. However, in practice, tasks often do not perfectly align with independent decompositions. A critical practical question thus arises: \emph{Can we still exploit MARL’s benefits by enforcing independent task decompositions, despite potential misalignment with the original task structure?} We rigorously analyze this trade-off in this section.

\paragraph{Quantifying Task Misalignment.}
To explicitly measure misalignment, we introduce a \emph{task alignment factor} $\alpha$, defined as the maximum discrepancy between the MARL reward and the unified SARL reward. Formally, let $\overline{\mathrm{R}}(\bm{x},\bm{y})$ denote the MARL-decomposed reward and $\mathrm{R}(\bm{x},\bm{y})$ the SARL reward. The alignment factor $\alpha$ is:
{\small
\begin{align}\label{eq:reward-align}
\alpha := \sup_{\xb,\yb},\big|\overline{\mathrm{R}}(\xb,\yb) - \mathrm{R}(\xb,\yb)\big|.
\end{align}
}
The quantity $\alpha$ measures the worst-case misalignment between the true reward and its independent decomposition. A small value indicates that the subtasks faithfully preserve the structure of the original problem, whereas a large value reflects substantial distortion introduced by the decomposition.

The following theorem quantifies the impact of task misalignment on MARL’s sample complexity:
\begin{theorem}[PAC Sample Complexity for MARL]\label{thm:MARL-pac-imperfect}
Consider a MARL system with $K$ agents whose subtasks are independently decomposed (Eq.~\ref{eq:reward-indep}), but exhibit task alignment $\alpha$ as defined above. Under Assumptions~\ref{assump:bounded_reward}–\ref{assump:finite_horizon} and given that $\varepsilon>2\alpha$, MARL is PAC-learnable with sample complexity:
{\small
\[
\mathrm{N}_{\mathrm{MARL}}(\varepsilon,\delta)
\;=\;\mathcal{O}\!\left(
\frac{\displaystyle \tilde{d}\,\log\!\Big(\frac{ K \gamma}{\varepsilon-2\alpha}\Big) + \log(1/\delta)}{(\varepsilon-2\alpha)^2}
\right),
\]
}
where $\tilde{d} = \max \{d_1, \dots, d_K\}$ and  $\gamma = \max \{L_{\mathrm{seq},1}B_1, \dots, L_{\mathrm{seq},K}B_K\}$.
\end{theorem}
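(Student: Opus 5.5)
The plan is to reduce the misaligned case to the perfectly aligned independent case of Theorem~\ref{thm:MARL-pac-independent}, paying an additive $2\alpha$ bias. Throughout, $\mathrm{J}(\pi)$ is the true (SARL-reward) objective of a joint MARL policy. We write $\overline{\mathrm{J}}(\pi)$ for the population objective under $\overline{\mathrm{R}}_{\mathrm{indep}}$ and $\widehat{\overline{\mathrm{J}}}_n(\pi)$ for its empirical counterpart. The empirical maximizer is $\hat\pi\in\arg\max\widehat{\overline{\mathrm{J}}}_n$.

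\textbf{Step 1 (bias from misalignment).} By definition \eqref{eq:reward-align}, for every policy we have $|\overline{\mathrm{J}}(\pi)-\mathrm{J}(\pi)|\le\alpha$, since the pointwise bound survives taking expectations. Let $\pi^\star$ be a (near-)maximizer of $\mathrm{J}$ over the joint class. We decompose
\[
\mathrm{J}(\pi^\star)-\mathrm{J}(\hat\pi)\le \big[\mathrm{J}(\pi^\star)-\overline{\mathrm{J}}(\pi^\star)\big]+\big[\overline{\mathrm{J}}(\pi^\star)-\overline{\mathrm{J}}(\hat\pi)\big]+\big[\overline{\mathrm{J}}(\hat\pi)-\mathrm{J}(\hat\pi)\big]\le 2\alpha+\big[\overline{\mathrm{J}}(\pi^\star)-\overline{\mathrm{J}}(\hat\pi)\big].
\]
It therefore suffices to make the middle term at most $\varepsilon':=\varepsilon-2\alpha>0$ with probability $1-\delta$. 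The middle term is at most $2\sup_\pi|\widehat{\overline{\mathrm{J}}}_n(\pi)-\overline{\mathrm{J}}(\pi)|$ by the standard ERM argument.

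\textbf{Step 2 (uniform convergence for the decomposed reward).} This is exactly the situation of Theorem~\ref{thm:MARL-pac-indep}, now with accuracy $\varepsilon'$. I would redo its covering argument to track constants. First, using Assumption~\ref{assump:smooth} and a telescoping/hybrid argument over the $T_{\max,i}$ steps, the segment-$i$ value $\mathbb{E}[\mathrm{r}_i(\bm{x},\bm{y}^{(i)})]$ is $L_{\mathrm{seq},i}$-Lipschitz in $\bm{\theta}_i$ (rewards in $[0,1]$ by Assumption~\ref{assump:bounded_reward}). Second, because $\mathrm{r}_i$ depends only on $\bm{y}^{(i)}$, the averaged objective is $\frac1K\sum_i$ of per-agent terms. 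The key claim is that a uniform deviation of $\varepsilon'/2$ for each agent yields $\varepsilon'/2$ for the average. Each agent's class is covered at scale $\varepsilon'/(cK\gamma)$-type resolution in a ball of radius $B_i$ in $\mathbb{R}^{d_i}$, giving log-covering number $d_i\log(K\gamma/\varepsilon')$ after Assumption~\ref{assump:compact}. Third, Hoeffding plus a union bound over each agent's net gives per-agent deviation with failure probability $\delta/K$. The resulting $\log(K/\delta)$ is absorbed, as in Theorem~\ref{thm:MARL-pac-indep}, into $\tilde d\log(K\gamma/\varepsilon')+\log(1/\delta)$ up to constants. Taking the worst agent yields $\tilde d$ and $\gamma$.

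\textbf{Step 3 (conclusion).} Solving $n\gtrsim(\tilde d\log(K\gamma/\varepsilon')+\log(1/\delta))/\varepsilon'^2$ gives the stated bound with $\varepsilon'=\varepsilon-2\alpha$. The condition $\varepsilon>2\alpha$ is needed precisely so that $\varepsilon'>0$.

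The main obstacle is Step 2's claim that the complexity scales with $\tilde d$ rather than $\sum_i d_i$. The joint policy's cover naturally has log-size $\sum_i d_i\log(\cdot)$. Avoiding this requires exploiting additivity, so that each agent is controlled on its own net with a union bound over $K$ agents rather than a product net. The $K$ inside the logarithm is where I expect the discretization-scale bookkeeping to land, and I would follow the proof of Theorem~\ref{thm:MARL-pac-indep} verbatim there.
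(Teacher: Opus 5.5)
This is essentially the paper's proof: it too turns the pointwise bound $|\overline{\mathrm{R}}-\mathrm{R}|\le\alpha$ into the reduction that $(\varepsilon-2\alpha)$-optimality under $\overline{\mathrm{R}}$ suffices, bounds the aggregated uniform deviation by the worst agent's deviation via additivity, and invokes Theorem~\ref{thm:MARL-pac-indep} at accuracy $\varepsilon-2\alpha$. Your bookkeeping is more careful than the paper's: you track the factor $2$ from the ERM step, and you use an explicit $\delta/K$ union bound whose $\log K$ is absorbed by the $K$ inside the logarithm, whereas the paper asserts the $\log(K\gamma/\beta)$ term without derivation. Both arguments rest on the same unstated premise that the $i$-th segment value depends only on $\bm{\theta}_i$, even though agent $i$ conditions on $\bm{y}^{(<i)}$.
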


Theorem~\ref{thm:MARL-pac-imperfect} quantifies how misalignment impacts the feasibility and efficiency of MARL. Specifically, it shows that substantial misalignment ($\alpha$) relative to the desired accuracy ($\varepsilon$) can render MARL impractical. Conversely, when misalignment is within a manageable range, it can be mitigated through an increase in sample size. 

Building upon this result, we derive a precise condition that determines when MARL maintains its advantage over SARL despite task misalignment:
\begin{proposition}[Condition for MARL Advantage under Imperfect Alignment]\label{prop:imperfect_alignment}
Under the conditions specified in Theorems~\ref{thm:SARL-pac} and~\ref{thm:MARL-pac-imperfect}, MARL achieves superior sample efficiency over SARL, namely:
{\small
\[
\mathrm{N}_{\mathrm{MARL}}(\varepsilon,\delta) \lesssim \mathrm{N}_{\mathrm{SARL}}(\varepsilon,\delta), \quad \text{if:}
\]
\[
\kappa_d \kappa_\ell \leq \left(1-\frac{2\alpha}{\varepsilon}\right)^2, \quad \text{where} \quad \kappa_d := \frac{\tilde{d}}{d}, \quad 
\kappa_{\ell} := \frac{\log\left( \frac{K \gamma}{\varepsilon - 2\alpha} \right)}{\log\left( \frac{L_{\mathrm{seq}} B}{\varepsilon} \right)}.
\]
}
\end{proposition}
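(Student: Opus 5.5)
We prove Proposition~\ref{prop:imperfect_alignment} by comparing the two explicit bounds directly. From Theorem~\ref{thm:MARL-pac-imperfect} and Theorem~\ref{thm:SARL-pac},
\[
\frac{\mathrm{N}_{\mathrm{MARL}}(\varepsilon,\delta)}{\mathrm{N}_{\mathrm{SARL}}(\varepsilon,\delta)}
\eqsim
\frac{\varepsilon^2}{(\varepsilon-2\alpha)^2}\cdot
\frac{\tilde{d}\,\log\!\big(\frac{K\gamma}{\varepsilon-2\alpha}\big)+\log(1/\delta)}{d\,\log\!\big(\frac{L_{\mathrm{seq}}B}{\varepsilon}\big)+\log(1/\delta)} .
\]
It therefore suffices to show that the right-hand side is bounded by an absolute constant under the stated condition. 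We treat both $\mathcal{O}(\cdot)$ expressions as the operative sample-complexity bounds with the same absolute constant, as is done in Propositions~\ref{prop:independence} and~\ref{prop:dependence}.

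The first factor is $(1-2\alpha/\varepsilon)^{-2}$, since $\varepsilon>2\alpha$ makes it well defined. The second factor is a ratio of the form $(a+c)/(b+c)$ with $a=\tilde d\log(K\gamma/(\varepsilon-2\alpha))$, $b=d\log(L_{\mathrm{seq}}B/\varepsilon)$ and $c=\log(1/\delta)\ge 0$. We use the elementary fact that $a\le b$ implies $(a+c)/(b+c)\le 1$, and more generally
\[
\frac{a+c}{b+c}\le \max\!\left\{\frac{a}{b},\,1\right\}.
\]
The ratio $a/b$ equals exactly $\kappa_d\kappa_\ell$ by definition. The condition $\kappa_d\kappa_\ell\le(1-2\alpha/\varepsilon)^2\le 1$ then gives $a\le b$. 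This yields the cruder bound $(a+c)/(b+c)\le 1$, which alone is insufficient because of the prefactor.

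The main obstacle is precisely the $\log(1/\delta)$ term, which does not scale with $\kappa_d\kappa_\ell$. The sharp estimate is
\[
\frac{(a+c)\varepsilon^2}{(b+c)(\varepsilon-2\alpha)^2}\le \max\!\left\{\frac{\kappa_d\kappa_\ell}{(1-2\alpha/\varepsilon)^2},\ \frac{1}{(1-2\alpha/\varepsilon)^2}\right\},
\]
and the confidence part of the second term is not controlled by the hypothesis. I would handle this in the same way the paper treats $\mathcal{C}(\varepsilon,\delta)$ after Proposition~\ref{prop:dependence}: in the regime of interest the dimension terms dominate, namely $\log(1/\delta)\lesssim \tilde d\log(K\gamma/(\varepsilon-2\alpha))$. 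Under this assumption the numerator is $\lesssim a$ and the denominator is $\ge b$, so the ratio is $\lesssim \kappa_d\kappa_\ell/(1-2\alpha/\varepsilon)^2\le 1$.

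Alternatively, the confidence term can be split off separately. We write $\mathrm{N}_{\mathrm{MARL}}\lesssim \frac{a}{(\varepsilon-2\alpha)^2}+\frac{c}{(\varepsilon-2\alpha)^2}$. The first piece is $\le \frac{b}{\varepsilon^2}$ by the hypothesis. The second piece is absorbed into $\frac{b}{\varepsilon^2}$ whenever $c\lesssim b(1-2\alpha/\varepsilon)^2$, which is the same dominance regime. In either form, the constant hidden in $\lesssim$ absorbs the remaining absolute constants from the two $\mathcal{O}(\cdot)$ bounds, completing the argument.
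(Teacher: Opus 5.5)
Your proposal is correct and follows essentially the same route as the paper. The paper also takes the ratio of the two $\mathcal{O}(\cdot)$ bounds and factors it as $(1-2\alpha/\varepsilon)^{-2}\cdot\kappa_d\kappa_\ell\cdot\mathcal{C}_\alpha(\varepsilon,\delta)$, where in your notation $\mathcal{C}_\alpha=(1+c/a)/(1+c/b)$, and it removes this confidence correction by assuming the dimension terms dominate $\log(1/\delta)$ — the same extra regime assumption you rightly note is not implied by the stated hypothesis; your split-off variant is slightly sharper, needing only $\log(1/\delta)\lesssim(1-2\alpha/\varepsilon)^2\,d\log(L_{\mathrm{seq}}B/\varepsilon)$ rather than requiring both $a$ and $b$ to dominate $c$.
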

Proposition~\ref{prop:imperfect_alignment} identifies the conditions under which MARL retains sample efficiency advantages over SARL despite imperfect task decomposition. Thus, it provides a quantitative measure of the alignment-decomposition trade-off, serving as practical guidance for deploying MARL strategies.

\begin{figure*}[t!]
    \centering
    \subfigure[Synthetic Dep]{
\includegraphics[width=.3\textwidth]{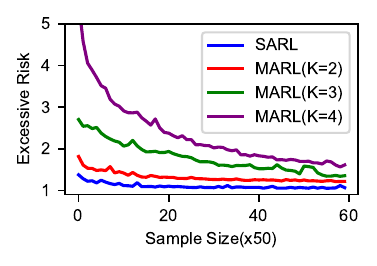}
}
    \subfigure[Synthetic Indep]{
\includegraphics[width=.3\textwidth]{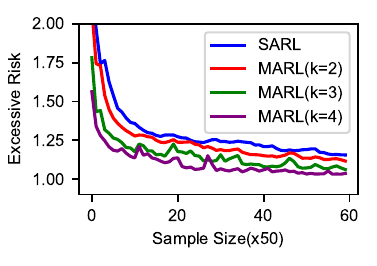}
}
    \subfigure[Task Alignment]{
\includegraphics[width=.3\textwidth]{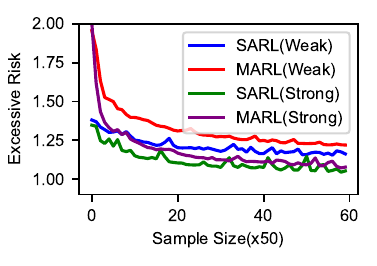}
}
    \subfigure[GSM8K Dep]{
\includegraphics[width=.3\textwidth]{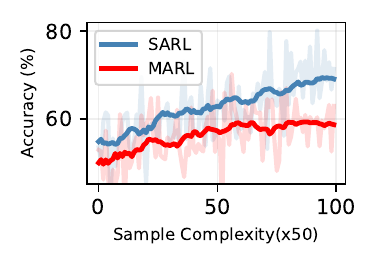}
}
    \subfigure[GSM8K Indep]{
\includegraphics[width=.3\textwidth]{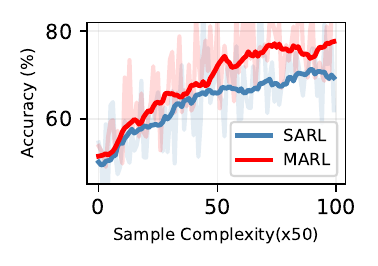}
}
  \caption{Empirical comparison of SARL and MARL learning efficiency under (a) dependent and (b) independent task decompositions using synthetic tasks, and under (d) dependent and (e) independent decompositions on GSM8K. (c) illustrates the effect of task alignment on the relative learning efficiency of SARL and MARL, where “strong” indicates high alignment and “weak” indicates low alignment.}
  \label{fig:exp}  
\end{figure*}

\subsection{Empirical Study}
We conduct an empirical evaluation to validate our theoretical results, comparing the performance and sample efficiency of SARL and MARL across varying task decomposition scenarios.

\paragraph{Experimental Setup.}
We consider a synthetic sequence-to-sequence regression task generated under two distinct scenarios representing independent and dependent subtasks:
{\small
\[
y_i = 
\begin{cases}
\wb_i^\top \xb_i + \xi_i, & \text{(independent subtasks)}, \\[6pt]
\wb_i^\top \xb_i + \lambda \cdot \mathrm{average}(y^{(<i)}) + \xi_i, & \text{(dependent subtasks)},
\end{cases}
\]
}
\noindent where $\xi_i \sim \NN(0,\sigma^2)$. The independent scenario generates subtasks without interdependencies, while the dependent scenario explicitly incorporates dependencies among subtasks through the average of previous outputs. In both scenarios, we compare two configurations: (i) SARL employing a unified parameter space, and (ii) MARL composed of sequentially trained specialized agents with distinct parameter spaces. The feature vector $\xb_i$ drawn from  Gaussian distributions and $\sigma^2=1$. Each experiment is conducted over five independent trials, and the averaged results are reported. We additionally evaluate on the real-world GSM8K math reasoning dataset~\citep{cobbe2021training}. Independent and dependent subtasks are created either by concatenating multiple questions or by decomposing the solution process into ``thinking~$\rightarrow$~solving.'' For the synthetic experiments, each agent is implemented as a one-layer MLP, while for GSM8K we use \texttt{Qwen2.5-1.5B-Instruct} as the backbone model. Further

\paragraph{Results.}
Figure~\ref{fig:exp} presents our empirical results, which closely match the theoretical predictions. In the independent-subtask setting, MARL achieves markedly better sample efficiency than SARL, while in the dependent-subtask setting, SARL consistently outperforms MARL due to error propagation across agents. The contrast between the two regimes becomes more pronounced as the number of agents $K$ increases. Moreover, we observe that under strong task alignment $\lambda = 0.1$ (i.e.,small~$\alpha$), MARL performs comparably to SARL, whereas misaligned decompositions $\lambda = 1$  (large~$\alpha$) lead to the expected degradation in MARL performance.

\section{Concluding Discussion}\label{sec:discussion}

In this paper, we presented a theoretical analysis comparing the sample efficiency of MARL and SARL within LLM contexts. Leveraging the PAC learning framework, we characterized the sample complexity for both MARL and SARL and identified precise conditions under which MARL achieves superior sample efficiency. Our results demonstrate that MARL significantly reduces sample complexity when tasks naturally decompose into independent subtasks. Conversely, task dependencies diminish MARL's advantages, highlighting critical trade-offs in learning efficiency. Furthermore, we introduced and explicitly quantified the notion of task alignment to rigorously evaluate the consequences of enforcing independent task decompositions, even at the risk of misalignment with the original task structure. These theoretical insights reconcile empirical findings and provide practical guidelines for selecting reinforcement learning frameworks tailored to complex, agentic LLM applications. Consequently, our work bridges a critical gap between empirical practice and theoretical foundations, fostering informed and efficient reinforcement learning deployment.

\paragraph{Limitations and Future Work.}
Our study has several limitations that open promising avenues for future research. First, our theoretical framework deliberately abstracts away algorithm-specific optimization details to maintain generality. Future investigations could integrate specific policy-gradient algorithms—such as PPO, GRPO, or other widely-used optimization methods—to yield more precise, algorithm-dependent insights and actionable guidelines. Second, we focused primarily on idealized task decomposition scenarios: fully independent or fully dependent subtasks. Exploring intermediate scenarios involving partially dependent subtasks could yield deeper theoretical insights and offer more nuanced practical recommendations. Addressing these limitations will further strengthen the theoretical understanding of MARL and SARL, ultimately enhancing the applicability of reinforcement learning methods within complex LLM environments. Our analysis assumes i.i.d.\ training samples drawn from a fixed distribution, consistent with common LLM-based RL settings (e.g., RLHF, offline RLHF) where prompts are independently sampled and rewards are assigned per rollout. In fully interactive RL or MARL, trajectories can exhibit temporal dependence. Extending the theory to this regime would require replacing the i.i.d.\ assumption with mixing-time or martingale-based concentration tools (e.g., uniform convergence under $\beta$-mixing), introducing additional dependence on the mixing properties of the underlying Markov process.



\bibliography{reference}
\bibliographystyle{iclr2026_conference}

\clearpage
\newpage

\appendix
\section*{The Use of Large Language Models (LLMs)}
We used LLMs only as a general-purpose writing assistant to aid in grammar checking and polishing the writing. The LLM did not contribute to research ideas, experiment design, theoretical analysis, or result interpretation.

\section{Current Paradigm of RL Training for LLMs}\label{sec:rl_llm_paradigm}

Reinforcement Learning (RL) has emerged as a key approach for training and fine-tuning Large Language Models (LLMs), particularly in scenarios where models must generate accurate, contextually coherent, and human-aligned responses. In contrast to traditional supervised learning, RL enables models to iteratively improve their performance through feedback and interactive experiences.

\subsection{Common RL Objectives for LLMs}

In typical RL-based training frameworks for LLMs, the primary objective is maximizing a sequence-level expected reward. Formally, given a prompt $\bm{x}\sim\mathcal{D}$ and a generated sequence of tokens $\bm{y}=(a_1,\dots,a_T)$ drawn from the policy $\pi_{\bm{\theta}}$, the learning objective can be expressed as:
\begin{align}
    J(\bm{\theta}) &= \mathbb{E}_{\bm{x}\sim\mathcal{D},\,\bm{y}\sim\pi_{\bm{\theta}}(\cdot|\bm{x})}[R(\bm{x},\bm{y})],
\end{align}
where $R(\bm{x},\bm{y})\in[0,1]$ denotes a scalar reward evaluated after the sequence is produced.

Commonly used reward functions in practice include:
\begin{itemize}
    \item \textbf{Human preference scores}: Human annotators evaluate generated outputs for factors like coherence, accuracy, and relevance.
    
    \item \textbf{Task-specific correctness indicators}: Binary or continuous evaluations of task performance such as factual correctness or logical accuracy.
    
    \item \textbf{Automated evaluation metrics}: Metrics such as BLEU, ROUGE, and BERTScore measure linguistic quality, coherence, and semantic similarity to reference texts.
\end{itemize}

\subsection{Optimization Algorithms}

Contemporary RL training for LLMs predominantly employs policy gradient methods, often incorporating regularization techniques or trust-region constraints to ensure stable training and controlled updates. Two prominent methods are Proximal Policy Optimization (PPO) and Groupwise Relative Policy Optimization (GRPO):

\paragraph{Proximal Policy Optimization (PPO)} PPO optimizes a clipped surrogate objective to stabilize updates:
\begin{align}
    L^{\mathrm{PPO}}(\bm{\theta}) &= \mathbb{E}_{\bm{x},\bm{y}\sim\pi_{\bm{\theta}^{-}}}
    \Big[\min\big(r_{\bm{\theta}}(\bm{x},\bm{y})A^{-},\;\mathrm{clip}(r_{\bm{\theta}}(\bm{x},\bm{y}),1-\epsilon,1+\epsilon)A^{-}\big)\Big]\nonumber\\
    &\quad -\beta\,\mathbb{E}_{\bm{x}}[D_{\mathrm{KL}}(\pi_{\bm{\theta}}\|\pi_{\bm{\theta}^{-}})],
\end{align}
where:
\begin{itemize}
    \item $r_{\bm{\theta}}(\bm{x},\bm{y})=\frac{\pi_{\bm{\theta}}(\bm{y}|\bm{x})}{\pi_{\bm{\theta}^{-}}(\bm{y}|\bm{x})}$ denotes the importance ratio relative to a previous policy $\pi_{\bm{\theta}^{-}}$.
    \item $A^{-}(\bm{x},\bm{y})=R(\bm{x},\bm{y}) - V^{-}(\bm{x})$ is the advantage estimate.
    \item $\epsilon>0$ controls the update step size, and $\beta\ge 0$ regularizes updates via KL divergence constraints.
\end{itemize}

This method is widely applied due to its stability and efficiency in LLM fine-tuning.

\paragraph{Groupwise Relative Policy Optimization (GRPO)} GRPO uses group-based reward comparisons to enhance training stability and reduce variance:
\begin{align}
    L^{\mathrm{GRPO}}(\bm{\theta}) &= \mathbb{E}_{\bm{x}\sim\mathcal{D},\,\bm{y}^{1:G}\sim(\pi_{\bm{\theta}^{-}})^{\otimes G}}\left[\frac{1}{G}\sum_{g=1}^{G} w(\bm{x},\bm{y}^{1:G},g)\,r_{\bm{\theta}}(\bm{x},\bm{y}^{(g)})\right]\nonumber\\
    &\quad - \beta\,\mathbb{E}_{\bm{x}}[D_{\mathrm{KL}}(\pi_{\bm{\theta}}\|\pi_{\bm{\theta}^{-}})],
\end{align}
where:
\begin{itemize}
    \item $w(\bm{x},\bm{y}^{1:G},g)$ represents a group-relative reward, typically constructed to be centered, reducing estimator variance.
    \item $G$ denotes the number of completions sampled per prompt, enabling inter-group reward comparison.
\end{itemize}

GRPO improves training efficiency and reliability, especially when evaluating subtler quality differences between multiple generated sequences.

\paragraph{Summary of Current RL Paradigm for LLMs} In summary, current LLM RL frameworks share key characteristics:
\begin{enumerate}
    \item Optimization of a sequence-level reward function reflecting task-specific performance, correctness, or human preferences.
    \item Autoregressive policies generating sequences token-by-token, enabling sequential decision-making.
    \item KL-regularized objectives or trust-region constraints ensuring stable policy updates.
    \item Utilization of advanced policy-gradient methods like PPO and GRPO, optimizing policy performance while ensuring efficient and stable training.
\end{enumerate}

Our analytical framework precisely aligns with and generalizes these prevalent RL methods, providing rigorous theoretical foundations and clear guidance for effectively deploying SARL and MARL strategies in sophisticated LLM training scenarios.

\section{Discussion on the Analytical Setting}\label{append:setting}

\subsection{LLM RL in Practice}

A common paradigm for reinforcement learning (RL) in large language models (LLMs) is reinforcement learning from human feedback (RLHF), where models are fine-tuned using rewards derived from human preference comparisons or proxy reward models. In this paradigm, the LLM functions as an agent that interacts with prompts, generates candidate responses, and receives sequence-level rewards. This aligns with the broader view of RL as optimizing agentic behavior through trial-and-error interaction with an environment.

More recent developments extend beyond single-agent RL (SARL) by leveraging multi-agent RL (MARL). In MARL, the overall task is decomposed into subtasks handled by specialized agents (e.g., retrieval, planning, generation, verification). These agents interact sequentially or in parallel, each contributing to the final outcome while receiving coordinated feedback. Such architectures are increasingly adopted in practice for complex LLM applications because they mirror natural task decomposition and enable specialization and parallelism.

Our analytical setting is designed to capture both paradigms within a unified PAC-learning framework. For SARL, the sequence is generated by a single policy, with rewards assigned at the sequence level, reflecting the RLHF setup. For MARL, we adopt a segment factorization where the output sequence is partitioned into segments, each handled by a specialized agent. This mirrors common multi-agent LLM workflows (e.g., planner–solver–verifier). By formulating both SARL and MARL under the same sequence-level reward function, our setting enables a direct and transparent comparison of their sample complexities.

\subsection{Discussion on Assumptions}

The assumptions adopted in our analysis are motivated by standard practices in LLM RL and thus are reasonable approximations of practical training regimes:

\begin{itemize}
    \item Bounded rewards. In RLHF and related paradigms, rewards are typically normalized or clipped to a bounded interval, commonly 
$[0,1]$. This reflects both the probabilistic nature of preference-based rewards and the need for stable optimization.

\item Compact parameter space. Practical fine-tuning of LLMs employs regularization strategies such as weight decay, KL penalties around a reference policy, or low-rank adapter updates. These mechanisms constrain the parameter space, justifying our assumption that policy parameters lie in a compact set.

\item Lipschitz parameterization. Neural network parameterizations of policies exhibit Lipschitz continuity under standard smoothness assumptions. Additionally, training practices like gradient clipping further enforce bounded sensitivity of action distributions to parameter changes, ensuring the existence of a finite Lipschitz constant.

\item Finite horizon. LLM training always imposes a hard cap on sequence length (e.g., maximum token limit), so assuming a finite horizon is consistent with real-world deployments.
\end{itemize}

Together, these assumptions ensure that the hypothesis class of policies has finite effective complexity, making PAC analysis feasible. Importantly, they reflect routine constraints in contemporary LLM training pipelines rather than idealized simplifications. Consequently, the theoretical results derived under these assumptions can be meaningfully interpreted as guidance for practical RL in LLMs.
\section{The PAC Framework and Sample Complexity Analysis}\label{append:pac}

The \emph{Probably Approximately Correct} (PAC) learning framework, introduced by Valiant~\citep{valiant1984theory}, provides a rigorous foundation for reasoning about the learnability of hypothesis classes under uncertainty. Formally, let $\cD$ be an unknown distribution over input space $\cX$, and let $\cH$ denote a hypothesis class mapping $\cX$ to labels or real-valued outputs. Given i.i.d.\ samples $S = \{x_i\}_{i=1}^n \sim \cD^n$, the goal is to identify a hypothesis $h \in \cH$ such that its population risk is close to the optimum achievable within $\cH$.

\paragraph{PAC Learnability.} A hypothesis class $\cH$ is said to be PAC-learnable if there exists a function $N(\varepsilon,\delta,\cH)$ such that for all $\varepsilon, \delta \in (0,1)$, with at least $n \ge N(\varepsilon,\delta,\cH)$ samples, the empirical risk minimizer $\hat{h}$ satisfies
\[
\Pr\!\left( L(\hat{h}) \le \inf_{h \in \cH} L(h) + \varepsilon \right) \ge 1-\delta,
\]
where $L(h) := \EE_{x \sim \cD}[ \ell(h(x)) ]$ denotes the population risk under a loss $\ell$. 
Here $\varepsilon$ quantifies the approximation tolerance and $\delta$ the confidence level.

\paragraph{Sample Complexity.} The function $N(\varepsilon,\delta,\cH)$ is called the \emph{sample complexity}. In supervised learning, sample complexity is characterized via combinatorial measures such as VC-dimension or via capacity measures such as covering numbers and Rademacher complexity. In reinforcement learning (RL), and in particular the LLM–RL setting, the notion of sample complexity directly corresponds to the number of environment interactions (rollouts or queries) required to guarantee learning of an $\varepsilon$-optimal policy with probability at least $1-\delta$.

\paragraph{Application to LLM–RL.} In our analysis, the PAC framework is instantiated for both single-agent RL (SARL) and multi-agent RL (MARL). Policies $\pi \in \Pi$ map text states to action distributions over tokens, and the objective is to maximize the sequence-level reward
\[
J(\pi) \;=\; \EE_{x \sim \cD,\, y \sim \pi(\cdot|x)}[ R(x,y) ].
\]
The PAC perspective asks: how many samples are sufficient to guarantee that the empirical maximizer $\hat{\pi}$ achieves $J(\hat{\pi}) \ge \sup_{\pi \in \Pi} J(\pi) - \varepsilon$ with probability at least $1-\delta$? Deriving explicit bounds $N_{\text{SARL}}(\varepsilon,\delta)$ and $N_{\text{MARL}}(\varepsilon,\delta)$ allows us to quantify the comparative sample efficiency of these paradigms.

\paragraph{Role of Regularity Assumptions.} To obtain finite and non-asymptotic sample complexity bounds, one must control the effective complexity of the policy class $\Pi$. As discussed, in practice, LLM–RL training operates under constraints that naturally satisfy this requirements of bounded rewards, compact parameter space, lipschitz parameterization, and finite horizon. These assumptions imply finite covering numbers for the induced function class $\{f_\theta(x) := \EE_{y \sim \pi_\theta}[R(x,y)]\}$, thereby ensuring PAC-learnability. In addition, as we discussed earlier, these assumptions are naturally aligned with the common practice.

\paragraph{Summary.} The PAC framework thus provides a principled lens for evaluating RL strategies in LLMs, connecting empirical training practices (bounded rewards, regularization, finite horizon) to provable guarantees. By deriving explicit $N(\varepsilon,\delta)$ bounds for SARL and MARL, we identify structural conditions—task decomposition and alignment—that determine when MARL is provably more sample efficient than SARL. This bridges the gap between empirical observations and theoretical understanding, offering a rigorous foundation for the design of agentic LLM systems.


\section{PAC Sample Complexity for SARL}\label{append:}
In this appendix, we present the proof and derivation for the learning-agnostic PAC sample complexity bound for the single-agent system (SARL) under sequence-level rewards, using covering numbers. 

First, let's begin with a brief recall of our SARL setting, which is given as follow:
the population objective is
\[
\mathrm{J}(\bm{\theta})
=\mathbb{E}_{\bm{x}\sim\mathbb{D},\,\bm{y}\sim\pi_{\bm{\theta}}(\cdot\mid\bm{x})}
\big[\mathrm{R}(\bm{x},\bm{y})\big],
\]
Given $n$ \ samples $\cS=\{(\bm{x}_i,\bm{y}_i)\}_{i=1}^n$ from $\mathbb{D}$ (e.g., rollouts or logged data),
\[
\widehat{\mathrm{J}}_n(\bm{\theta})=\frac{1}{n}\sum_{i=1}^n \mathrm{R}(\bm{x}_i,\bm{y}_i),
\qquad
\hat{\bm{\theta}}\in\arg\max_{\bm{\theta}}\widehat{\mathrm{J}}_n(\bm{\theta}),
\]
which is the usual empirical reward maximization (solved in practice by learning algorithms such as gradient descent and stochastic gradient descent). Then under Assumption~\ref{assump:bounded_reward}, we can define the value function as follow,
\[
f_{\bm{\theta}}(\xb)\;:=\;\EE_{y\sim \bm{\pi}_{\bm{\theta}}(\cdot\mid \xb)}[\mathrm{R}(\xb,\yb)]\in[0,1],
\qquad
\mathrm{J}(\bm{\theta})\;:=\EE_{\xb\sim\mathbb{D}}\big[f_{\bm{\theta}}(\xb)\big],
\]
with hypothesis (value) class \(\cF=\{f_{\bm{\theta}}:\bm{\theta}\in\Theta\}\subset[0,1]^{\cX}\).
Given prompts \(\xb_1,\ldots,\xb_n\sim\mathbb{D}\), the estimator is
\[
\widehat{\mathrm{J}}_n(\bm{\theta})\;:=\;\frac{1}{n}\sum_{i=1}^n f_{\bm{\theta}}(\xb_i).
\]

\subsection{From Per-Step Control to Sequence Distributions}
We begin with deriving some useful results for the sequence level structure. For this, we use the concept of Total Variation (TV) distance between two probability distributions 
$\PP$ and $\QQ$ is defined as:
$$\mathrm{TV}(\PP\mid\QQ) = \frac{1}{2}\sum_{x \in \cX}|\PP(x)-\QQ(x)|.$$

\begin{lemma}[TV control for autoregressive sequence  ]\label{lem:SARL_tv} Under Assumption ~\ref{assump:smooth} and ~\ref{assump:finite_horizon}, let $L_{\mathrm{seq}}:= T_{\mathrm{max}} L_{\mathrm{step}}$. Then,
for any fixed $\xb$ and the induced full-sequence laws $\PP_{\bm{\theta}}(\cdot\mid \xb)$ and $\PP_{\bm{\theta'}}(\cdot\mid \xb)$, we have 
\[
\mathrm{TV}\!\left(\PP_{\bm{\theta}}(\cdot\mid \xb),\,\PP_{\bm{\theta'}}(\cdot\mid \xb)\right)
\;\le\; L_{\mathrm{seq}}\|\bm{\theta}-\bm{\theta}'\|_2 .
\]
\end{lemma}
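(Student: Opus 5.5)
The plan is a hybrid (telescoping) argument over the autoregressive factorization. Fix $\xb$ and write the full-sequence law as a product of per-step conditionals,
\[
\PP_{\bm{\theta}}(\yb\mid\xb)=\prod_{t=1}^{T}\pi_{\bm{\theta}}(a_t\mid \bm{s}_t),\qquad \bm{s}_t=(\xb,\bm{a}_{1:t-1}),
\]
with $T\le T_{\max}$ by Assumption~\ref{assump:finite_horizon}. Variable-length outputs can be handled by treating an end-of-sequence token as absorbing and padding to length $T_{\max}$. The padding step is deterministic and identical under both parameters, so it contributes nothing to the distance.

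For $k=0,\dots,T_{\max}$, introduce hybrid laws $\mathbb{H}_k$. Under $\mathbb{H}_k$, the first $k$ tokens are drawn from $\pi_{\bm{\theta}'}$ and the remaining tokens from $\pi_{\bm{\theta}}$. Then $\mathbb{H}_0=\PP_{\bm{\theta}}$ and $\mathbb{H}_{T_{\max}}=\PP_{\bm{\theta}'}$. By the triangle inequality for total variation,
\[
\mathrm{TV}(\PP_{\bm{\theta}},\PP_{\bm{\theta}'})\le\sum_{k=1}^{T_{\max}}\mathrm{TV}(\mathbb{H}_{k-1},\mathbb{H}_k).
\]

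The laws $\mathbb{H}_{k-1}$ and $\mathbb{H}_k$ differ only in the kernel used at step $k$. They share the prefix law $\mu_k$ (from $\pi_{\bm{\theta}'}$) and the suffix kernels (from $\pi_{\bm{\theta}}$). Summing out the suffix tokens leaves the suffix kernels integrating to one, which gives
\[
\mathrm{TV}(\mathbb{H}_{k-1},\mathbb{H}_k)=\tfrac12\,\mathbb{E}_{\bm{s}_k\sim\mu_k}\big\|\pi_{\bm{\theta}}(\cdot\mid\bm{s}_k)-\pi_{\bm{\theta}'}(\cdot\mid\bm{s}_k)\big\|_1 .
\]
Assumption~\ref{assump:smooth} holds uniformly over all text states, so this term is at most $\tfrac12 L_{\mathrm{step}}\|\bm{\theta}-\bm{\theta}'\|_2$. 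Summing over the $T_{\max}$ steps gives $\tfrac12 T_{\max}L_{\mathrm{step}}\|\bm{\theta}-\bm{\theta}'\|_2\le L_{\mathrm{seq}}\|\bm{\theta}-\bm{\theta}'\|_2$, which is the claim with a factor $\tfrac12$ to spare.

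The only delicate point is the exact-equality step for a single hybrid swap. There one must check that marginalizing the common suffix kernels leaves exactly the expected per-step $\ell_1$ gap under the shared prefix law. This also requires the uniformity of $L_{\mathrm{step}}$ over states, because the prefix distribution is random. Everything else is routine. An alternative route is induction on $T$ via the coupling inequality $\mathrm{TV}(P\otimes K,P'\otimes K')\le \mathrm{TV}(P,P')+\sup_s\mathrm{TV}(K(\cdot\mid s),K'(\cdot\mid s))$, which yields the same bound.
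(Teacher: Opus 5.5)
Your proof is correct and follows essentially the same route as the paper. The paper asserts directly that the sequence-level TV is at most $\sum_{t=1}^{T_{\max}}\mathbb{E}\bigl[\tfrac12\|\pi_{\bm{\theta}}(\cdot\mid\mathbf{s}_t)-\pi_{\bm{\theta}'}(\cdot\mid\mathbf{s}_t)\|_1\bigr]$, applies Assumption~\ref{assump:smooth} uniformly over states, and sums over the $T_{\max}$ steps; your hybrid-law telescoping justifies that first inequality (including which prefix law the expectation is taken under, which the paper leaves implicit) and keeps the factor $\tfrac12$ that the paper drops.
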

\begin{proof}
By the definition of $\mathrm{TV}$ and Assumption~\ref{assump:finite_horizon}, we have that,
\begin{align*}
    \mathrm{TV}\!\left(\PP_{\bm{\theta}}(\cdot\mid \xb),\,\PP_{\bm{\theta'}}(\cdot\mid \xb)\right) &\le \sum_{t=1}^{T_{\mathrm{max}}}
\EE\!\left[\frac{1}{2}\big\|{\bm{\pi}}_{\bm{\theta}}(\cdot \mid \mathbf{s}_t)-\bm{\pi}_{\bm{\theta}'}(\cdot\mid \mathbf{s}_t)\big\|_1\right]
\end{align*}
By Assumption~\ref{assump:smooth}, we have that
\begin{align*}
    \sum_{t=1}^{T_{\mathrm{max}}}
\EE\!\left[\frac{1}{2}\big\|{\bm{\pi}}_{\bm{\theta}}(\cdot \mid \mathbf{s}_t)-\bm{\pi}_{\bm{\theta}'}(\cdot\mid \mathbf{s}_t)\big\|_1\right] \le T_{\mathrm{max}} L_{\mathrm{step}} \|\bm{\theta} - \bm{\theta'}\|_2.
\end{align*}
Subsituting in the definition $L_{\mathrm{seq}} = T_{\mathrm{max}} L_{\mathrm{step}}$, and combining the results above, we get, 
\begin{align*}
    \mathrm{TV}\!\left(\PP_{\bm{\theta}}(\cdot\mid \xb),\,\PP_{\bm{\theta'}}(\cdot\mid \xb)\right) \le L_{\mathrm{seq}}\|\bm{\theta} - \bm{\theta'}\|_2
\end{align*}
    
\end{proof}

\begin{lemma}[Parameter-to-value Lipschitzness]\label{lem:param_to_value}
For any \(\xb\),
   \[
|f_{\bm{\theta}}(\xb)-f_{\bm{\theta}'}(\xb)|
\;\le\; L_{\mathrm{seq}} \|\bm{\theta}-\bm{\theta'}\|_2.
\]
\end{lemma}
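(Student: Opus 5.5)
The plan is to write $f_{\bm{\theta}}(\xb)-f_{\bm{\theta}'}(\xb)$ as the difference of the expectations of one fixed bounded function, the reward $\mathrm{R}(\xb,\cdot)$, under the two sequence laws $\PP_{\bm{\theta}}(\cdot\mid\xb)$ and $\PP_{\bm{\theta}'}(\cdot\mid\xb)$. We then control it by the total variation distance from Lemma~\ref{lem:SARL_tv}. Fix $\xb$. Since sequences take values in the finite set of strings over $\mathcal{V}$ of length at most $T_{\max}$ (Assumption~\ref{assump:finite_horizon}), we can write
\[
f_{\bm{\theta}}(\xb)-f_{\bm{\theta}'}(\xb)=\sum_{\yb}\mathrm{R}(\xb,\yb)\big(\PP_{\bm{\theta}}(\yb\mid\xb)-\PP_{\bm{\theta}'}(\yb\mid\xb)\big).
\]

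The key step is the standard fact that for any $g$ with values in $[0,1]$, $|\EE_{\PP}g-\EE_{\QQ}g|\le \mathrm{TV}(\PP,\QQ)$, rather than the cruder $2\,\mathrm{TV}$. To see this, note that $\sum_{\yb}(\PP(\yb)-\QQ(\yb))=0$. Hence subtracting the constant $1/2$ from $g$ leaves the difference unchanged. Then $|g-1/2|\le 1/2$ gives
\[
|\EE_{\PP}g-\EE_{\QQ}g|\le \tfrac12\sum_{\yb}|\PP(\yb)-\QQ(\yb)|=\mathrm{TV}(\PP,\QQ).
\]
Apply this with $g=\mathrm{R}(\xb,\cdot)$, which lies in $[0,1]$ by Assumption~\ref{assump:bounded_reward}, and with $\PP=\PP_{\bm{\theta}}(\cdot\mid\xb)$, $\QQ=\PP_{\bm{\theta}'}(\cdot\mid\xb)$.

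The last step is to invoke Lemma~\ref{lem:SARL_tv}, which gives $\mathrm{TV}\le L_{\mathrm{seq}}\|\bm{\theta}-\bm{\theta}'\|_2$; this yields the claim. The bound holds uniformly in $\xb$ because Lemma~\ref{lem:SARL_tv} is uniform in $\xb$.

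The only place any care is needed is the constant. A naive bound $|\sum g(\PP-\QQ)|\le \|\PP-\QQ\|_1=2\,\mathrm{TV}$ would give $2L_{\mathrm{seq}}$, so we use the centering trick above to obtain exactly $L_{\mathrm{seq}}$. If one did not want to rely on centering, the factor $2$ could simply be absorbed into constants, since downstream only the logarithm of the Lipschitz constant enters the covering-number bound. Beyond this, the argument is routine.
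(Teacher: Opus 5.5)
Your proof is correct and follows the same route as the paper: write $f_{\bm{\theta}}(\xb)-f_{\bm{\theta}'}(\xb)$ as a difference of expectations of $\mathrm{R}(\xb,\cdot)$ under the two sequence laws, bound that difference by $\mathrm{TV}$, and then apply Lemma~\ref{lem:SARL_tv}. The paper asserts the step $|\EE_{\PP}\mathrm{R}-\EE_{\QQ}\mathrm{R}|\le\mathrm{TV}(\PP,\QQ)$ without proof, and your centering argument (using $\mathrm{R}\in[0,1]$ and the paper's $\tfrac12\|\cdot\|_1$ normalization of TV) is what justifies getting exactly $L_{\mathrm{seq}}$ rather than $2L_{\mathrm{seq}}$.
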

\begin{proof}
The result follow immediately from the definition of $f_{\bm{\theta}}(\cdot)$ and Lemma.~\ref{lem:SARL_tv},
\begin{align*}
    |f_{\bm{\theta}}(\xb)-f_{\bm{\theta}'}(\xb)| &= \Big|\EE_{\yb \sim \PP_{\bm{\theta}}}[\mathrm{R}(\xb,\yb)]-\EE_{\yb \sim \PP_{\bm{\theta}'}}[\mathrm{R}(\xb,\yb)]\Big|,\\
    &\le \mathrm{TV}(\PP_{\bm{\theta}},\PP_{\bm{\theta}'}),\\
    &\le L_{\mathrm{seq}} \|\bm{\theta}-\bm{\theta}' \|_2
\end{align*}
\end{proof}

\begin{lemma}[Parameter-to-function cover]
\label{lemma:param-to-func}
Let $\Theta\subset\mathbb{R}^{d}$ be a compact parameter set and let
$\{f_\theta:\mathcal{X}\to[0,1]\}_{\theta\in\Theta}$ be a function class.
Assume the parameterization is Lipschitz in the uniform (sup) norm:
there exists $L>0$ such that for all $\theta,\theta'\in\Theta$,
\begin{equation*}
\|f_\theta-f_{\theta'}\|_\infty \;\le\; L\,\|\theta-\theta'\|_2.
\end{equation*}
Then for every $\varepsilon>0$,
\begin{equation*}
\mathcal{N}\!\left(\varepsilon,\,\{f_\theta:\theta\in\Theta\},\,\|\cdot\|_\infty\right)
\;\le\;
\mathcal{N}\!\left(\tfrac{\varepsilon}{L},\,\Theta,\,\|\cdot\|_2\right).
\end{equation*}
\end{lemma}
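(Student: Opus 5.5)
The plan is to push a parameter-space cover forward through the map $\theta\mapsto f_\theta$, using the assumed sup-norm Lipschitz bound to control distances.

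Fix $\varepsilon>0$ and set $\eta:=\varepsilon/L$. Let $m:=\mathcal{N}(\eta,\Theta,\|\cdot\|_2)$, which is finite because $\Theta$ is compact. Choose a minimal $\eta$-cover $\{\theta_1,\dots,\theta_m\}$ of $\Theta$ in $\|\cdot\|_2$. Compactness guarantees such a finite cover exists; if $m$ were infinite the inequality would be vacuous anyway. I will use whichever convention the paper uses for covers: centers inside $\Theta$, or external centers if that is what $\mathcal{N}(\cdot,\Theta,\|\cdot\|_2)$ denotes.

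The candidate cover of the function class is $\{f_{\theta_1},\dots,f_{\theta_m}\}$. Take an arbitrary $\theta\in\Theta$ and pick $j$ with $\|\theta-\theta_j\|_2\le\eta$. The Lipschitz hypothesis then gives
\[
\|f_\theta-f_{\theta_j}\|_\infty\le L\,\|\theta-\theta_j\|_2\le L\eta=\varepsilon.
\]
So every element of $\{f_\theta:\theta\in\Theta\}$ lies within $\varepsilon$ of some $f_{\theta_j}$ in sup norm. This is an $\varepsilon$-cover with at most $m$ elements, and taking the minimum over covers yields
\[
\mathcal{N}(\varepsilon,\{f_\theta\},\|\cdot\|_\infty)\le m .
\]

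The only delicate point is the cover convention. If the parameter cover uses external centers $\theta_j\notin\Theta$, then $f_{\theta_j}$ may be undefined, or the Lipschitz bound may not apply to it. I would handle this by working with internal covers throughout. Alternatively, I can replace each external center by a point of $\Theta$ within its ball. That replacement changes the radius by at most a factor of $2$, which is absorbed into constants wherever the lemma is later used (e.g., $\log\mathcal{N}(\varepsilon/L,\Theta)\le d\log(3BL/\varepsilon)$ for the ball of radius $B$). It does not affect the $\mathcal{O}(\cdot)$ statements of Theorem~\ref{thm:SARL-pac} and its MARL analogues.

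Finally, I will note how the lemma is used. Combined with Lemma~\ref{lem:param_to_value}, which supplies $L=L_{\mathrm{seq}}$ uniformly in $\xb$, this is the step that converts the parameter-space volume bound into the metric entropy $d\log(L_{\mathrm{seq}}B/\varepsilon)$ appearing in the sample complexity.
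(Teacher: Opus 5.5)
Your proposal is correct and is exactly the paper's argument: push an $(\varepsilon/L)$-cover $\{u_j\}$ of $\Theta$ forward through $\theta\mapsto f_\theta$, and the sup-norm Lipschitz hypothesis makes $\{f_{u_j}\}$ an $\varepsilon$-cover of the function class. Your caveat about cover conventions is a fair refinement, because the paper silently assumes the centers lie in $\Theta$ (it needs $f_{u_j}$ to be defined). You also correctly invoke the lemma's own Lipschitz hypothesis, whereas the paper's proof mis-cites Lemma~\ref{lem:SARL_tv} at that step.
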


\begin{proof}
Let $\{u_1,\ldots,u_M\}$ be a $(\varepsilon/L)$-cover of $\Theta$ under $\|\cdot\|_2$,
i.e., $M=\mathcal{N}(\varepsilon/L,\Theta,\|\cdot\|_2)$ and for every $\theta\in\Theta$
there exists $u_j$ with $\|\theta-u_j\|_2\le \varepsilon/L$.
By Lemma~\ref{lem:SARL_tv},
\[
\|f_\theta - f_{u_j}\|_\infty
\;\le\; L\,\|\theta-u_j\|_2 \;\le\; \varepsilon,
\]
so $\{f_{u_1},\ldots,f_{u_M}\}$ is an $\varepsilon$-cover of the function class
in $\|\cdot\|_\infty$.
This completes the proof.
\end{proof}

\subsection{Sample Complexity via Covering the Value Class}

\begin{lemma}[Sample Complexity via Covering the Value Class]
Let \( \cN(\varepsilon, \cF, \|\cdot\|_\infty) \) denote the covering number of a function class \( \cF \) with respect to the \( \ell_\infty \)-norm, and let \( H_\cF(\varepsilon) := \log \cN(\varepsilon, \cF, \|\cdot\|_\infty) \). Then, under SARL setting and Assumption~\ref{assump:compact}, the following holds:
$$H_\cF(\varepsilon) \le d \log\left(\frac{c L_{\mathrm{seq}} B}{\varepsilon}\right),$$
where \( c \) is a universal constant. 
\end{lemma}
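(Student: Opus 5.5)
The plan is to chain the two preceding lemmas with a standard volumetric bound on the covering number of a Euclidean ball. First, Lemma~\ref{lem:param_to_value} gives, for every prompt $\xb$, $|f_{\bm{\theta}}(\xb)-f_{\bm{\theta}'}(\xb)|\le L_{\mathrm{seq}}\|\bm{\theta}-\bm{\theta}'\|_2$. Taking the supremum over $\xb$ yields $\|f_{\bm{\theta}}-f_{\bm{\theta}'}\|_\infty\le L_{\mathrm{seq}}\|\bm{\theta}-\bm{\theta}'\|_2$, uniformly in $\xb$. So the hypothesis of Lemma~\ref{lemma:param-to-func} holds with $L=L_{\mathrm{seq}}$, and
\[
\mathcal{N}(\varepsilon,\cF,\|\cdot\|_\infty)\le \mathcal{N}\!\left(\varepsilon/L_{\mathrm{seq}},\,\Theta,\,\|\cdot\|_2\right).
\]

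Second, I bound the parameter-space covering number. By Assumption~\ref{assump:compact}, $\Theta\subseteq\{\bm{\theta}\in\mathbb{R}^d:\|\bm{\theta}\|_2\le B\}$. The standard volumetric argument then applies. Take a maximal $r$-packing of $\Theta$, which is automatically an $r$-cover. The disjoint balls of radius $r/2$ around its points all lie in the ball of radius $B+r/2$. Comparing volumes gives
\[
\mathcal{N}(r,\Theta,\|\cdot\|_2)\le \left(1+\frac{2B}{r}\right)^d .
\]
One technical point: the cover centres must lie in $\Theta$ so that $f_{u_j}$ belongs to the class. Using a maximal packing inside $\Theta$ itself (rather than a grid on the ball) ensures this, at no cost in the constant.

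Third, I substitute $r=\varepsilon/L_{\mathrm{seq}}$ and take logs:
\[
H_\cF(\varepsilon)\le d\log\!\left(1+\frac{2L_{\mathrm{seq}}B}{\varepsilon}\right).
\]
To reach the stated form $d\log(cL_{\mathrm{seq}}B/\varepsilon)$, I split into two regimes.
\begin{itemize}
\item If $\varepsilon\le L_{\mathrm{seq}}B$, then $1+2L_{\mathrm{seq}}B/\varepsilon\le 3L_{\mathrm{seq}}B/\varepsilon$, so $c=3$ works.
\item If $\varepsilon\ge 1$, a single function already gives an $\varepsilon$-cover, since $f_{\bm{\theta}}\in[0,1]$. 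Hence $H_\cF(\varepsilon)=0$ and the bound is trivial as long as its right-hand side is nonnegative.
\end{itemize}
The remaining case is $L_{\mathrm{seq}}B<\varepsilon<1$. There, Lipschitzness shows $\cF$ has sup-diameter at most $2L_{\mathrm{seq}}B$, so any one function is a $2L_{\mathrm{seq}}B$-cover and $H_\cF(\varepsilon)=0$ again.

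The main obstacle is exactly this constant and edge-case bookkeeping: the bound $d\log(cL_{\mathrm{seq}}B/\varepsilon)$ is only meaningful when its argument is at least $1$. I will state that the lemma is understood in the nontrivial regime $\varepsilon\le cL_{\mathrm{seq}}B$ and that the covering number is $1$ otherwise. Everything else is a direct composition of the earlier lemmas.
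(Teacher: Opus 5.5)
Your proposal is correct and follows the paper's route: you transfer the cover from $\Theta$ to the value class via Lemma~\ref{lemma:param-to-func} with $L=L_{\mathrm{seq}}$, then apply the standard $\ell_2$-ball covering bound, which you make explicit as $(1+2B/r)^d$ using a maximal packing inside $\Theta$ (a point the paper glosses over). One small slip in your extra edge-case bookkeeping: the sup-diameter bound $2L_{\mathrm{seq}}B$ only shows that a single function is an $\varepsilon$-cover when $\varepsilon\ge 2L_{\mathrm{seq}}B$, so it does not settle the window $L_{\mathrm{seq}}B<\varepsilon<2L_{\mathrm{seq}}B$, where $c=3$ is also too small for the volumetric bound; split at $2L_{\mathrm{seq}}B$ instead and take $c=4$, since $1+2t\le 4t$ whenever $t=L_{\mathrm{seq}}B/\varepsilon\ge 1/2$.
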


\begin{proof}
By Lemma~\ref{lemma:param-to-func}, we have the following relationship between the covering numbers of the function class \( \cF \) and the parameter space \( \Theta \):
\[
\cN\left(\varepsilon, \cF, \|\cdot\|_\infty\right) \le \cN\left(\frac{\varepsilon}{L_{\mathrm{seq}}}, \Theta, \|\cdot\|_2\right).
\]
This bound essentially says that if the parameter space is bounded in the \( \ell_2 \)-norm, the covering number for the value class \( \cF \) (with respect to the \( \ell_\infty \)-norm) is at most the covering number for the parameter space \( \Theta \) scaled by \( L_{\mathrm{seq}} \), which accounts for the sensitivity of the policy to parameter changes.

Next, for an \( \ell_2 \)-ball of radius \( B \) in \( \mathbb{R}^d \), we can apply standard results in covering number theory. Specifically, we know that:
\[
\log \cN\left(\eta, \Theta, \|\cdot\|_2\right) \le d \log\left(\frac{cB}{\eta}\right),
\]
where \( c \) is a universal constant that depends on the properties of the parameter space.

Combining these results, we obtain the final bound for the covering number of the value class \( \cF \):
\[
H_\cF(\varepsilon) \le d \log\left(\frac{c L_{\mathrm{seq}} B}{\varepsilon}\right).
\]
This completes the proof.
\end{proof}

\subsection{Proof of SARL Sample Complexity}

Now that we have the main ingredients ready, we present the proof for the main theorem.

\begin{proof}

Let $\mathcal{C} \subset \mathcal{F}$ be an $\epsilon/4$-net, which means that $\mathcal{C}$ is a finite set of functions such that for any function $f \in \mathcal{F}$, there exists a function $g \in \mathcal{C}$ such that:
\[
\|f - g\|_\infty \leq \frac{\epsilon}{4}.
\]
In other words, the functions in $\mathcal{C}$ serve as approximations for all the functions in $\mathcal{F}$, with an approximation error of at most $\frac{\epsilon}{4}$.

The size of $\mathcal{C}$, denoted $|\mathcal{C}|$, is given by the covering number $\mathcal{N}(\epsilon/4)$, which counts the minimum number of functions needed to cover $\mathcal{F}$ with an error of $\frac{\epsilon}{4}$.

For any fixed function $g \in \mathcal{C}$, we apply Hoeffding’s inequality to the empirical mean $\hat{\mathbb{E}}_g$ of the function $g$. Hoeffding's inequality provides a bound on the probability that the deviation between the empirical mean and the true mean exceeds a certain threshold. In this case, we are interested in the deviation exceeding $\frac{\epsilon}{2}$, so we use Hoeffding's inequality as follows.

Let $\hat{\mathbb{E}}_g$ represent the empirical mean:
\[
\hat{\mathbb{E}}_g = \frac{1}{n} \sum_{i=1}^n g(x_i),
\]
where $x_1, x_2, \dots, x_n$ are samples drawn from the distribution $\mathbb{D}$. The true expected value of $g$ is:
\[
\mathbb{E}_g = \mathbb{E}_{x \sim \mathbb{D}}[g(x)].
\]
Hoeffding’s inequality tells us that for a bounded random variable $g(x)$ taking values in the interval $[0,1]$, the probability of the deviation between the empirical mean and the true mean being larger than $\frac{\epsilon}{2}$ is bounded by:
\[
\mathbb{P}\left( \left| \hat{\mathbb{E}}_g - \mathbb{E}_g \right| > \frac{\epsilon}{2} \right) \leq 2 \exp\left(-2n\left(\frac{\epsilon}{2}\right)^2\right) = 2e^{-n\epsilon^2 / 2}.
\]
This means that the probability of the deviation exceeding $\frac{\epsilon}{2}$ decreases exponentially with the number of samples $n$.

Now, we want to bound the probability that the deviation for any function $g \in \mathcal{C}$ exceeds $\frac{\epsilon}{2}$. Since $\mathcal{C}$ is a finite set of functions, we can apply the union bound over all functions in $\mathcal{C}$. The union bound states that the probability of the maximum deviation for all functions $g \in \mathcal{C}$ exceeding $\frac{\epsilon}{2}$ is at most the sum of the probabilities for each function. Specifically:
\[
\mathbb{P}\left( \sup_{g \in \mathcal{C}} \left| \hat{\mathbb{E}}_g - \mathbb{E}_g \right| > \frac{\epsilon}{2} \right) \leq 2 \mathcal{N}(\epsilon/4) e^{-n\epsilon^2 / 2}.
\]
Here, $\mathcal{N}(\epsilon/4)$ is the number of functions in the net $\mathcal{C}$, and each function is independently bounded by Hoeffding’s inequality.

This means that the probability that the deviation of the empirical mean from the true mean exceeds $\frac{\epsilon}{2}$ for any function in the net $\mathcal{C}$ is bounded by $2 \mathcal{N}(\epsilon/4) e^{-n\epsilon^2 / 2}$.

Now, we move to the deviation for any function $f \in \mathcal{F}$. For any $f \in \mathcal{F}$, we choose a function $g \in \mathcal{C}$ such that the approximation error $\|f - g\|_\infty$ is at most $\frac{\epsilon}{4}$, meaning:
\[
\|f - g\|_\infty \leq \frac{\epsilon}{4}.
\]
Using the triangle inequality for the difference between the true and empirical means of $f$, we have:
\[
\left| \hat{\mathbb{E}}_f - \mathbb{E}_f \right| \leq \left| \hat{\mathbb{E}}_f - \hat{\mathbb{E}}_g \right| + \left| \hat{\mathbb{E}}_g - \mathbb{E}_g \right| + \left| \mathbb{E}_g - \mathbb{E}_f \right|.
\]
Now, we bound each term:

$\bullet$ The term $\left| \hat{\mathbb{E}}_f - \hat{\mathbb{E}}_g \right|$ is the empirical deviation between $f$ and $g$. Since $\|f - g\|_\infty \leq \frac{\epsilon}{4}$, we have this deviation to be at most $\frac{\epsilon}{4}$.

$\bullet$ The second term $\left| \hat{\mathbb{E}}_g - \mathbb{E}_g \right|$ is bounded by Hoeffding's inequality, and we know it is at most $\frac{\epsilon}{2}$.

$\bullet$ The third term $\left| \mathbb{E}_g - \mathbb{E}_f \right|$ is the true deviation between the expected values of $f$ and $g$, and this term is also bounded by $\frac{\epsilon}{4}$ because $\|f - g\|_\infty \leq \frac{\epsilon}{4}$.

Thus, combining these terms, we get:
\[
\left| \hat{\mathbb{E}}_f - \mathbb{E}_f \right| \leq \frac{\epsilon}{4} + \frac{\epsilon}{2} + \frac{\epsilon}{4} = \epsilon.
\]
This shows that, on the complement event (where the deviation does not exceed $\frac{\epsilon}{2}$ for any $g \in \mathcal{C}$), the deviation for any function $f \in \mathcal{F}$ is bounded by $\epsilon$.

Finally, we ensure that with high probability (at least $1 - \delta$), the deviation is bounded by $\epsilon$. To achieve this, we require:
\[
2 \mathcal{N}(\epsilon/4) e^{-n\epsilon^2 / 2} \leq \delta.
\]
Solving for $n$, we get:
\[
n \geq \frac{2}{\epsilon^2} \left( \log \mathcal{N}(\epsilon/4) + \log \frac{2}{\delta} \right),
\]
which gives the number of samples required to ensure that, with probability at least $1 - \delta$, the deviation between the true and empirical value functions is bounded by $\epsilon$.

By the result in Lemma~\ref{lem:param_to_value}, we have, 
\begin{align*}
    \log \mathcal{N}(\epsilon/4) & = H_\cF(\varepsilon/4) \\
    & \le d\,\log\!\big(\tfrac{4c\,L_{\mathrm{seq}}B}{\varepsilon}\big) \\
    & =d\,\log\!\big(\tfrac{c' L_{\mathrm{seq}}B}{\varepsilon}\big).
\end{align*}
Substitute this result in the derivation above, we get, 
\begin{align*}
     n & \geq \frac{2}{\epsilon^2} \left( \log \mathcal{N}(\epsilon/4) + \log \frac{2}{\delta} \right) \\
     & \geq \frac{2}{\epsilon^2} \left( d\,\log\!\big(\tfrac{c' L_{\mathrm{seq}}B}{\varepsilon}\big) + \log \frac{2}{\delta} \right)
\end{align*}

This completes the proof.

\end{proof}

\section{Sample Complexity for MARL}
In this appendix, we present the proofs for the sample complexity of MARL.
We derive a learning-agnostic (uniform-convergence) PAC bound for a multi-agent system (MARL) under \emph{perfect task decomposition}.
Throughout, rewards are bounded in $[0,1]$ and all expectations are w.r.t.\ the data distribution $\mathcal{D}$ and the policies' internal randomness.

\paragraph{Same sequence-level reward, additive form.}
To align with SARL, we use the \emph{same} sequence-level reward $\mathrm{R}(\bm{x},\bm{y})$. For readability and to match our PAC analysis, we consider the following two normalized additive form:
\begin{align}\label{appeq:reward-dep}
    \overline{\mathrm{R}}_{\mathrm{dep}}(\bm{x},\bm{y})=\frac{1}{K}\sum_{i=1}^K \mathrm{r}_i\!\big(\bm{x},\bm{y}^{(i)},\bm{y}^{(<i)}\big),
\end{align}
\begin{align}\label{appeq:reward-indep}
    \overline{\mathrm{R}}_{\mathrm{indep}}(\bm{x},\bm{y})=\frac{1}{K}\sum_{i=1}^K \mathrm{r}_i\!\big(\bm{x},\bm{y}^{(i)}\big).
\end{align}
Eq.~\ref{eq:reward-dep} and Eq.~\ref{eq:reward-indep} capture the spectrum of task decomposition. Eq.~\ref{eq:reward-dep} allows dependence on previous segments (forward coherence) via $\bm{y}^{(<i)}$. On the other hand, Eq.~\ref{eq:reward-indep} indicate that the reward can be decomposed into independent segments. In later sections, we investigate how the sample complexity of MARL changes under different task decomposition above.

Similar to SARL, the MARL objective is then given by,
\[
\mathrm{J}_{\mathrm{MARL}}(\boldsymbol{\pi}_{\bm{\theta}_{\mathrm{MARL}}})
=\mathbb{E}_{\bm{x}\sim\mathbb{D},\,\bm{y}\sim\boldsymbol{\pi}_{\bm{\theta}_{\mathrm{MARL}}}(\cdot\mid\bm{x})}
\!\left[\overline{\mathrm{R}}_{\mathrm{dep/indep}}(\xb,\yb)\right]
\]


\paragraph{Empirical optimization.}
Using the same dataset $\cS=\{(\bm{x}_j,\bm{y}_j)\}_{j=1}^n$,
\[
\widehat{\mathrm{J}}_{\mathrm{MARL},n}(\boldsymbol{\pi}_{\bm{\theta}_{\mathrm{MARL}}})
=\frac{1}{n}\sum_{j=1}^n \overline{\mathrm{R}}(\bm{x}_j,\bm{y}_j),
\qquad
\widehat{\boldsymbol{\pi}}_{\bm{\theta}_{\mathrm{MARL}}}\in\arg\max_{\boldsymbol{\pi}} \widehat{\mathrm{J}}_{\mathrm{MARL},n}(\boldsymbol{\pi}_{\bm{\theta}_{\mathrm{MARL}}}).
\]
Any approximate maximizer suffices for the PAC guarantees (statistical bounds separate from optimization error).

\paragraph{Setup.}
There are $K$ agents with policy classes $\Pi_i=\{\pi_{\theta_i}:\theta_i\in\Theta_i\}$.
For a prompt $x\sim\mathcal{D}$, the joint policy $\bpi=(\pi_{\theta_1},\ldots,\pi_{\theta_K})$ produces a transcript $Y$ and obtains a bounded reward $R(x,Y)\in[0,1]$.
Let
\[
F_{\btheta}(x)\;:=\;\EE[R(x,Y)]\in[0,1],
\qquad
J(\btheta)\;:=\;\EE_{x\sim\mathcal{D}}[F_{\btheta}(x)].
\]
Given i.i.d.\ $x_1,\ldots,x_n\sim\mathcal{D}$, define the FI empirical objective
\[
\widehat{J}_n(\btheta)\;:=\;\frac{1}{n}\sum_{j=1}^n F_{\btheta}(x_j).
\]

\subsection{Per-agent sequence discrepancy and value Lipschitzness}
Through similar argument as in the single agent case, we can derive the following similar results for MARL. 

\begin{lemma}[Sequence TV control for agent $i$ ]\label{lem:MARL_tv} Let $T_{\mathrm{max}}^{(i)}$ denote the maximum sequence length for agent $i$. Under Assumption ~\ref{assump:smooth}, let $L_{\mathrm{seq}}^{(i)}:= T_{\mathrm{max}}^{(i)} L_{\mathrm{step}}$. Then,
for any fixed $\xb$ and the induced full-sequence laws for agent $i$, $\PP_{\bm{\theta}_i}(\cdot\mid \xb)$ and $\PP_{\bm{\theta'}_i}(\cdot\mid \xb)$, we have 
\[
\mathrm{TV}\!\left(\PP_{\bm{\theta}_i}(\cdot\mid \xb),\,\PP_{\bm{\theta'}_i}(\cdot\mid \xb)\right)
\;\le\; L_{\mathrm{seq}}^{(i)}\|\bm{\theta}_i-\bm{\theta}_i'\|_2 .
\]
\end{lemma}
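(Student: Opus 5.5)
The plan is to repeat the argument of Lemma~\ref{lem:SARL_tv} with the single-agent quantities replaced by those of agent $i$. Fix $\xb$ and any conditioning history $\bm{y}^{(<i)}$, and regard agent $i$ as an autoregressive generator of its segment $\bm{y}^{(i)}=(a_1,\dots,a_{T})$ with $T\le T_{\mathrm{max}}^{(i)}$. At step $t$ its text state is $\mathbf{s}_t=(\xb,\bm{y}^{(<i)},a_{1:t-1})$, and transitions are deterministic. The induced segment law factorizes as
\[
\PP_{\bm{\theta}_i}(\bm{y}^{(i)}\mid \xb)=\prod_{t}\pi_{\bm{\theta}_i}(a_t\mid \mathbf{s}_t).
\]
Whenever $\bm{y}^{(<i)}$ is itself random and produced by the other agents with fixed parameters, the same bound passes to the joint law by averaging over $\bm{y}^{(<i)}$, since $\mathrm{TV}$ is jointly convex.

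The core step is a hybrid (telescoping) argument. For $t=0,\dots,T_{\mathrm{max}}^{(i)}$, let $\QQ_t$ be the sequence law that uses $\pi_{\bm{\theta}'_i}$ for the first $t$ tokens and $\pi_{\bm{\theta}_i}$ afterwards. Then $\QQ_0=\PP_{\bm{\theta}_i}$ and $\QQ_{T}=\PP_{\bm{\theta}'_i}$, and the triangle inequality gives
\[
\mathrm{TV}(\PP_{\bm{\theta}_i},\PP_{\bm{\theta}'_i})\le\sum_{t}\mathrm{TV}(\QQ_{t-1},\QQ_t).
\]
The laws $\QQ_{t-1}$ and $\QQ_t$ agree on the prefix distribution, and their continuation kernels after step $t$ coincide. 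They differ only in the step-$t$ kernel. Applying the data-processing inequality and then conditioning on the prefix yields
\[
\mathrm{TV}(\QQ_{t-1},\QQ_t)\le \EE_{\mathbf{s}_t}\!\left[\tfrac12\|\pi_{\bm{\theta}_i}(\cdot\mid\mathbf{s}_t)-\pi_{\bm{\theta}'_i}(\cdot\mid\mathbf{s}_t)\|_1\right].
\]
Assumption~\ref{assump:smooth} holds uniformly over text states, so each term is at most $\tfrac12 L_{\mathrm{step}}\|\bm{\theta}_i-\bm{\theta}'_i\|_2\le L_{\mathrm{step}}\|\bm{\theta}_i-\bm{\theta}'_i\|_2$. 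Summing over at most $T_{\mathrm{max}}^{(i)}$ steps (Assumption~\ref{assump:finite_horizon}) gives the claimed bound $L_{\mathrm{seq}}^{(i)}\|\bm{\theta}_i-\bm{\theta}'_i\|_2$.

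The only delicate point is justifying the per-step TV bound rigorously. This needs the hybrid/coupling construction to be set up correctly when the prefix distribution depends on the parameters; Lemma~\ref{lem:SARL_tv} cites this step but does not spell it out. One must also handle variable segment lengths. I would treat an end-of-segment token as an absorbing action with padding up to $T_{\mathrm{max}}^{(i)}$, so that every segment has exactly $T_{\mathrm{max}}^{(i)}$ steps and the telescoping sum has a fixed number of terms.
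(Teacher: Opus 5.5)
Your proposal is correct and follows the paper's route. The paper proves this lemma only by saying it goes "through similar argument" as Lemma~\ref{lem:SARL_tv}: bound the sequence-level TV by the sum over at most $T_{\mathrm{max}}^{(i)}$ steps of expected per-step half-$\ell_1$ distances, each controlled by Assumption~\ref{assump:smooth}. Your hybrid/telescoping construction, averaging over $\bm{y}^{(<i)}$, and absorbing-token padding simply make rigorous the per-step decomposition that the paper asserts without proof.
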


\begin{lemma}[Per-agent value Lipschitzness]\label{lem:param_to_value_MARL}
For agent $i$ and any \(\xb\),
   \[
|f_{\bm{\theta}_i}(\xb)-f_{\bm{\theta}_i'}(\xb)|
\;\le\; L_{\mathrm{seq}}^{(i)} \|\bm{\theta}_i-\bm{\theta'}_i\|_2.
\]
\end{lemma}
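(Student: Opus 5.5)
The plan is to mirror the proof of Lemma~\ref{lem:param_to_value} agent by agent, using Lemma~\ref{lem:MARL_tv} in place of Lemma~\ref{lem:SARL_tv}. First I will make the object precise. For agent $i$, with the parameters of the other agents held fixed, define
\[
f_{\bm{\theta}_i}(\xb):=\EE_{\yb^{(<i)}}\Big[\EE_{\yb^{(i)}\sim\PP_{\bm{\theta}_i}(\cdot\mid \xb,\yb^{(<i)})}\big[\mathrm{r}_i(\xb,\yb^{(i)},\yb^{(<i)})\big]\Big].
\]
Here the outer expectation is over the prefix produced by agents $1,\dots,i-1$. In the independent case the reward simply does not depend on $\yb^{(<i)}$.

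The key structural observation is that, under turn-taking, the law of $\yb^{(<i)}$ is generated only by $\bm{\theta}_1,\ldots,\bm{\theta}_{i-1}$. It is therefore the same under $\bm{\theta}_i$ and $\bm{\theta}_i'$, so the difference $f_{\bm{\theta}_i}(\xb)-f_{\bm{\theta}_i'}(\xb)$ is an average, over a common prefix distribution, of conditional differences
\[
\Delta(\yb^{(<i)}):=\EE_{\PP_{\bm{\theta}_i}(\cdot\mid\xb,\yb^{(<i)})}[\mathrm{r}_i]-\EE_{\PP_{\bm{\theta}_i'}(\cdot\mid\xb,\yb^{(<i)})}[\mathrm{r}_i].
\]
By Jensen's inequality, $|f_{\bm{\theta}_i}(\xb)-f_{\bm{\theta}_i'}(\xb)|\le \sup_{\yb^{(<i)}}|\Delta(\yb^{(<i)})|$. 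The later agents $j>i$ play no role, since $\mathrm{r}_i$ only reads $\yb^{(i)}$ and $\yb^{(<i)}$.

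Next, for a fixed prefix, I will bound $|\Delta|$ by the total variation distance. By Assumption~\ref{assump:bounded_reward}, $\mathrm{r}_i\in[0,1]$. Subtracting the constant $1/2$, which does not change a difference of expectations, gives $|\Delta|=\big|\sum_{\yb^{(i)}}(\mathrm{r}_i-\tfrac12)(\PP_{\bm{\theta}_i}-\PP_{\bm{\theta}_i'})\big|\le \tfrac12\|\PP_{\bm{\theta}_i}-\PP_{\bm{\theta}_i'}\|_1=\mathrm{TV}$. This is the same step used in Lemma~\ref{lem:param_to_value}. Lemma~\ref{lem:MARL_tv} then gives $\mathrm{TV}\le L_{\mathrm{seq}}^{(i)}\|\bm{\theta}_i-\bm{\theta}_i'\|_2$. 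That lemma is stated for a fixed conditioning input, and I apply it with the extended context $(\xb,\yb^{(<i)})$ as the text state. This is legitimate because Assumption~\ref{assump:smooth} holds uniformly over all text states and segment $i$ has horizon $T_{\max,i}$ by Assumption~\ref{assump:finite_horizon}.

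Combining the two steps yields the claimed bound, uniformly in $\xb$. I expect the only delicate point to be the first step: verifying that the prefix distribution is genuinely independent of $\bm{\theta}_i$, and that Lemma~\ref{lem:MARL_tv} holds uniformly over prefixes rather than only for the bare prompt $\xb$. Both follow from the sequential sampling order and the uniform-in-state form of Assumption~\ref{assump:smooth}. The remaining steps are routine.
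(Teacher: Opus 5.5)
Your proposal is correct and follows the paper's route. The paper gives no separate proof and says only that the lemma follows by the same argument as the single-agent Lemma~\ref{lem:param_to_value}: bound the value gap by the total variation distance using $\mathrm{r}_i\in[0,1]$, then invoke the per-agent TV bound of Lemma~\ref{lem:MARL_tv}. Your two extra observations make explicit what the paper leaves implicit: the law of the prefix $\bm{y}^{(<i)}$ does not depend on $\bm{\theta}_i$ under turn-taking, and Lemma~\ref{lem:MARL_tv} may be applied with the extended context $(\bm{x},\bm{y}^{(<i)})$ because Assumption~\ref{assump:smooth} is uniform over text states.
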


\subsection{Per-agent covering numbers via parameter covers}
Let $\mathcal{F}_i:=\{f_{i,\theta_i}:\theta_i\in\Theta_i\}\subset[0,1]^{\mathcal{X}}$. Denote covering numbers by $\mathcal{N}(\varepsilon,\cdot,\cdot)$ and $H(\varepsilon):=\log\mathcal{N}(\varepsilon,\cdot,\cdot)$.

\begin{lemma}[Agent Sample Complexity via Covering the Value Class]\label{lemma:cover_MARL}
Let \( \cN(\varepsilon, \cF_i, \|\cdot\|_\infty) \) denote the covering number of a function class \( \cF_i \) with respect to the \( \ell_\infty \)-norm, and let \( H_{\cF_i}(\varepsilon) := \log \cN(\varepsilon, \cF_i, \|\cdot\|_\infty) \). Then, under MARL setting and Assumption~\ref{assump:compact}, the following holds:
$$H_{\cF_i}(\varepsilon) \le d_i \log\left(\frac{c L_{\mathrm{seq}}^{(i)} B_i}{\varepsilon}\right),$$
where \( c \) is a universal constant. 
\end{lemma}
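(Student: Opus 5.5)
The plan mirrors the SARL argument (the lemma preceding Theorem~\ref{thm:SARL-pac}'s proof), applied agent by agent. First, Lemma~\ref{lem:param_to_value_MARL} gives the uniform bound
\[
\|f_{\bm{\theta}_i}-f_{\bm{\theta}_i'}\|_\infty \le L_{\mathrm{seq}}^{(i)}\|\bm{\theta}_i-\bm{\theta}_i'\|_2 .
\]
This holds because it is valid pointwise in $\xb$ and the constant does not depend on $\xb$. It places us in the hypothesis of Lemma~\ref{lemma:param-to-func} with $\Theta=\Theta_i$ and $L=L_{\mathrm{seq}}^{(i)}$. The key check here is that the per-agent TV bound of Lemma~\ref{lem:MARL_tv} controls the discrepancy of agent $i$'s segment law. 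Since $\mathrm{r}_i$ is bounded in $[0,1]$ by Assumption~\ref{assump:bounded_reward}, the expectation over agent $i$'s segment then changes by at most that TV distance.

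Second, I invoke Lemma~\ref{lemma:param-to-func}. It gives
\[
\mathcal{N}(\varepsilon,\cF_i,\|\cdot\|_\infty)\le \mathcal{N}\!\left(\varepsilon/L_{\mathrm{seq}}^{(i)},\Theta_i,\|\cdot\|_2\right).
\]

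Third, I bound the parameter cover. By Assumption~\ref{assump:compact}, $\Theta_i$ lies in the Euclidean ball of radius $B_i$ in $\mathbb{R}^{d_i}$. The standard volumetric argument then gives
\[
\mathcal{N}(\eta,\Theta_i,\|\cdot\|_2)\le (1+2B_i/\eta)^{d_i}\le (3B_i/\eta)^{d_i}
\]
for $\eta\le B_i$. One subtlety is that $\Theta_i$ is only a subset of the ball, not the whole ball. A cover of the ball with centers not necessarily in $\Theta_i$ still suffices for the sup-norm cover of $\cF_i$ only if those centers lie in $\Theta_i$. To handle this, I will pass to an internal cover at radius $2\eta$, which costs only a constant absorbed into $c$. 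Setting $\eta=\varepsilon/L_{\mathrm{seq}}^{(i)}$ and taking logarithms yields $H_{\cF_i}(\varepsilon)\le d_i\log\bigl(cL_{\mathrm{seq}}^{(i)}B_i/\varepsilon\bigr)$.

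The main obstacle is the regime $\varepsilon> L_{\mathrm{seq}}^{(i)}B_i$, where the logarithm could be negative. In that regime a single function is an $\varepsilon$-cover, since any two elements of $\cF_i$ differ by at most $L_{\mathrm{seq}}^{(i)}\cdot 2B_i$ in sup norm. Choosing $c\ge 2$ and using $(1+x)\le 3\max(1,x)$ then makes the stated bound valid for all $\varepsilon$, up to the universal constant.
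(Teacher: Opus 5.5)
Your main argument follows the paper's route. The paper gives no separate proof of this lemma and defers to the single-agent covering lemma. That lemma consists of exactly your three steps:
\begin{itemize}
\item sup-norm Lipschitzness, here Lemma~\ref{lem:param_to_value_MARL};
\item the transfer $\mathcal{N}(\varepsilon,\cF_i,\|\cdot\|_\infty)\le\mathcal{N}(\varepsilon/L_{\mathrm{seq}}^{(i)},\Theta_i,\|\cdot\|_2)$ of Lemma~\ref{lemma:param-to-func};
\item the volumetric bound for the $\ell_2$-ball of radius $B_i$.
\end{itemize}
You also observe that the cover centers must lie in $\Theta_i$, so that $f_{u_j}$ is defined and the Lipschitz assumption applies, and you handle this with an internal cover at radius $2\eta$. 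That is a correct refinement the paper skips. For $\varepsilon\le L_{\mathrm{seq}}^{(i)}B_i$ it gives $\mathcal{N}\le(1+4L_{\mathrm{seq}}^{(i)}B_i/\varepsilon)^{d_i}\le(5L_{\mathrm{seq}}^{(i)}B_i/\varepsilon)^{d_i}$, which is the claim with $c=5$.

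Your last paragraph, however, claims something false. No universal $c$ makes the bound hold for all $\varepsilon$. Always $H_{\cF_i}(\varepsilon)=\log\mathcal{N}(\varepsilon,\cF_i,\|\cdot\|_\infty)\ge 0$, whereas $d_i\log(cL_{\mathrm{seq}}^{(i)}B_i/\varepsilon)<0$ as soon as $\varepsilon>cL_{\mathrm{seq}}^{(i)}B_i$. Your own device $(1+x)\le 3\max(1,x)$ only yields $H_{\cF_i}(\varepsilon)\le d_i\max\{\log 3,\log(c'L_{\mathrm{seq}}^{(i)}B_i/\varepsilon)\}$. That is dominated by $d_i\log(cL_{\mathrm{seq}}^{(i)}B_i/\varepsilon)$ only when $\varepsilon\le cL_{\mathrm{seq}}^{(i)}B_i/3$.

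There is a separate slip in the same paragraph. The diameter bound $2L_{\mathrm{seq}}^{(i)}B_i$ makes an arbitrary single element of $\cF_i$ an $\varepsilon$-cover only for $\varepsilon\ge 2L_{\mathrm{seq}}^{(i)}B_i$, not for every $\varepsilon>L_{\mathrm{seq}}^{(i)}B_i$.

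There are two correct fixes. One is to read the statement for $\varepsilon\lesssim L_{\mathrm{seq}}^{(i)}B_i$, as the paper implicitly does when it restricts to ``the regime where the logarithms are positive.'' The other is to write the bound as $d_i\log(1+cL_{\mathrm{seq}}^{(i)}B_i/\varepsilon)$, which your internal-cover argument proves for all $\varepsilon>0$ with $c=4$.
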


\begin{lemma}\label{lem:lips-to-cover-karr}
Let $\Theta \subset \R^d$ be a compact parameter set and let 
$\{ f_\theta : \theta \in \Theta \}$ be a function class such that 
the parameterization is $L$-Lipschitz in the uniform (sup) norm:
\[
\| f_\theta - f_{\theta'} \|_\infty \;\leq\; L \| \theta - \theta' \|_2,
\quad \forall \theta,\theta' \in \Theta.
\]
Define the $k$-ary aggregation map
\[
\rho_k(z_1,\dots,z_k) \;=\; \frac{1}{K}\sum_{i=1}^k z_i,
\]
which is $L_\rho$-Lipschitz with respect to $\ell_1$, with $L_\rho=1/K$. 
Consider the aggregated function class
\[
\cF_k := \Bigl\{ (z_1,\dots,z_k) \mapsto 
\rho_k(f_{\theta_1}(z_1), \dots, f_{\theta_k}(z_k)) : \theta_i \in \Theta \Bigr\}.
\]
Then for every $\varepsilon > 0$,
\[
\mathcal{N}\!\left(\varepsilon, \cF_k, \|\cdot\|_\infty\right) 
\;\leq\; \Biggl[ \mathcal{N}\!\left(\frac{K\varepsilon}{L}, \Theta, \|\cdot\|_2 \right) \Biggr]^k.
\]
\end{lemma}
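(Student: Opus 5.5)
The plan is to build an explicit product cover for $\cF_k$ from $k$ copies of a single parameter cover of $\Theta$, and then push parameter closeness through two Lipschitz maps in sequence: the map $\theta\mapsto f_\theta$ ($L$-Lipschitz into $\|\cdot\|_\infty$) and the aggregation $\rho_k$ ($1/K$-Lipschitz with respect to $\ell_1$). This mirrors Lemma~\ref{lemma:param-to-func}, with the aggregation as an extra layer. I have not managed to close the bound at the stated radius $K\varepsilon/L$; the counting argument below forces a factor $k$ into the radius, and I say at the end where it enters.

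\textbf{Step 1 (product net).} Fix a radius $\eta>0$ and let $\{u_1,\dots,u_M\}$ be an $\eta$-cover of $\Theta$ in $\|\cdot\|_2$, with $M=\mathcal N(\eta,\Theta,\|\cdot\|_2)$. For every multi-index $(j_1,\dots,j_k)\in[M]^k$, define
\[
G_{j_1,\dots,j_k}(z_1,\dots,z_k):=\rho_k\bigl(f_{u_{j_1}}(z_1),\dots,f_{u_{j_k}}(z_k)\bigr).
\]
This gives at most $M^k$ candidate centres, which is exactly the right-hand side of the claim once $\eta$ is fixed.

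\textbf{Step 2 (approximation).} Take an element $F\in\cF_k$ with parameters $(\theta_1,\dots,\theta_k)$. For each $i$, choose $j_i$ with $\|\theta_i-u_{j_i}\|_2\le\eta$. Pointwise in $(z_1,\dots,z_k)$, the $\ell_1$-Lipschitzness of $\rho_k$ with $L_\rho=1/K$ gives
\[
|F-G_{j_1,\dots,j_k}|\le\frac1K\sum_{i=1}^k|f_{\theta_i}(z_i)-f_{u_{j_i}}(z_i)|\le\frac1K\sum_{i=1}^k\|f_{\theta_i}-f_{u_{j_i}}\|_\infty.
\]
Applying the sup-norm Lipschitz hypothesis (as in Lemma~\ref{lem:param_to_value_MARL}) to each term then gives
\[
|F-G_{j_1,\dots,j_k}|\le\frac{L}{K}\sum_{i=1}^k\|\theta_i-u_{j_i}\|_2\le\frac{kL\eta}{K}.
\]
Taking the supremum over $(z_1,\dots,z_k)$ yields $\|F-G\|_\infty\le kL\eta/K$.

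\textbf{Step 3 (choice of $\eta$ and the obstacle).} The main obstacle is this radius bookkeeping. Requiring $kL\eta/K\le\varepsilon$ gives $\eta=K\varepsilon/(kL)$, hence only
\[
\mathcal N(\varepsilon,\cF_k,\|\cdot\|_\infty)\le\Bigl[\mathcal N\bigl(K\varepsilon/(kL),\Theta,\|\cdot\|_2\bigr)\Bigr]^k .
\]
This is weaker than the stated bound whenever $k\ge2$, because covering numbers increase as the radius shrinks. The stated radius $\eta=K\varepsilon/L$ would require the per-agent errors to contribute a maximum rather than a sum, i.e.\ $\frac1K\max_i L\|\theta_i-u_{j_i}\|_2\cdot K$ in place of $\frac LK\sum_i\|\theta_i-u_{j_i}\|_2$. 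The $\ell_1$-Lipschitz property of $\rho_k$ does not provide that.

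Before settling for the weaker radius, I would first try to recover the $\max$ form by measuring discrepancies in a norm that matches how $\rho_k$ is used. One option is to cover $\Theta^k$ directly in the max-of-$\ell_2$ metric, so that each agent's error is controlled coordinatewise. The other is to exploit the fact that each $f_{\theta_i}$ takes values in $[0,1]$ to absorb the sum. If neither removes the factor $k$, the lemma holds as written only for $k=1$. For general $k$ it should be stated with $\eta=K\varepsilon/(kL)$; in the intended case $k=K$ this is $\eta=\varepsilon/L$. The downstream MARL bounds change only inside the logarithm, by the factor $k$ (equal to $K$ there).
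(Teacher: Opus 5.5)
Your construction is the paper's own argument: a product of $k$ copies of one parameter cover, then the $\ell_1$-Lipschitzness of $\rho_k$ combined with the sup-norm Lipschitzness of $\theta\mapsto f_\theta$. Your bookkeeping is correct. This construction gives radius $K\varepsilon/(kL)$ and nothing better.

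The paper does not get around the factor $k$ either. Its proof starts from an $(\varepsilon/L)$-cover and obtains aggregated error $L_\rho k\varepsilon=(k/K)\varepsilon\le\varepsilon$ for $k\le K$. So what it actually proves is $\mathcal{N}(\varepsilon,\mathcal{F}_k,\|\cdot\|_\infty)\le[\mathcal{N}(\varepsilon/L,\Theta,\|\cdot\|_2)]^k$. Its closing sentence then replaces $\varepsilon/L$ by $K\varepsilon/L$ without justification; with a $(K\varepsilon/L)$-cover the same computation gives error $k\varepsilon$. Your radius $K\varepsilon/(kL)$ is at least $\varepsilon/L$ whenever $k\le K$, with equality at $k=K$, so your version is the sharper correct statement.

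What your proposal is missing is the conclusion that the stated bound is false for $k\ge 2$. That makes both of your suggested remedies hopeless. An $\eta$-cover of $\Theta^k$ in the max-of-$\ell_2$ metric is exactly your product net, and boundedness in $[0,1]$ does not help, since the counterexample below is $[0,1]$-valued.

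Take $K=k=2$, $\Theta=[0,1]\subset\mathbb{R}$, and $f_\theta\equiv\theta$ (constant functions), so $L=1$. Then $\mathcal{F}_2$ is the set of all constant functions with values $(\theta_1+\theta_2)/2\in[0,1]$. For $\varepsilon=1/4$, the right-hand side is $\mathcal{N}(1/2,[0,1],|\cdot|)^2=1$, since one ball centred at $1/2$ suffices. But no sup-norm ball of radius $1/4$ contains both constants $0$ and $1$, so $\mathcal{N}(1/4,\mathcal{F}_2,\|\cdot\|_\infty)\ge 2$.

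So state the lemma with radius $K\varepsilon/(kL)$ and stop there rather than searching for a fix. This correction is harmless downstream. In the proof of Theorem~\ref{thm:MARL-pac-dep}, the entropy estimate $H(\alpha,\mathcal{F}^{(k)},\|\cdot\|_\infty)\le\sum_{i\le k}d_i\log\bigl(c\,L_\rho k\,L_{\mathrm{seq},i}B_i/\alpha\bigr)$ is precisely what the per-agent radius $K\alpha/(kL_{\mathrm{seq},i})$ produces. The stated radius would give no factor $k$ there. The factor $k$ you expected inside the logarithm is therefore already present, and the MARL bound does not change.
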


\begin{proof}
By assumption, for each $\theta_i \in \Theta$ there exists $u_i$ from an 
$(\varepsilon/L)$-cover of $\Theta$ under $\|\cdot\|_2$ such that 
$\| f_{\theta_i} - f_{u_i} \|_\infty \leq \varepsilon$. 
Now consider two tuples $\theta_{1:k}$ and $u_{1:k}$.
By the Lipschitz property of $\rho_k$ with constant $L_\rho=1/K$, we have
\[
\biggl\| \rho_k\!\bigl(f_{\theta_1},\dots,f_{\theta_k}\bigr)
      - \rho_k\!\bigl(f_{u_1},\dots,f_{u_k}\bigr) \biggr\|_\infty
\;\leq\; L_\rho \sum_{i=1}^k \| f_{\theta_i} - f_{u_i} \|_\infty.
\]
Since each difference is at most $\varepsilon$, this yields
\[
\biggl\| \rho_k\!\bigl(f_{\theta_1},\dots,f_{\theta_k}\bigr)
      - \rho_k\!\bigl(f_{u_1},\dots,f_{u_k}\bigr) \biggr\|_\infty
\;\leq\; L_\rho \cdot k \cdot \varepsilon
\;=\; \frac{k}{K}\varepsilon.
\]
When $k \leq K$, this is bounded by $\varepsilon$. Thus, the cover of $\Theta$ 
induces an $\varepsilon$-cover of $\cF_k$ in sup norm. 
Since a cover of $\Theta$ of size 
$\mathcal{N}(K\varepsilon/L, \Theta, \|\cdot\|_2)$ 
induces a cover of $\Theta^k$ of size 
$\bigl[\mathcal{N}(K\varepsilon/L, \Theta, \|\cdot\|_2)\bigr]^k$, 
the result follows.
\end{proof}

\begin{coro}[Per-agent specialization used in Theorem~4.2]
\label{cor:per-agent-P1}
For agent $i$, let $\Theta_i\subset\mathbb{R}^{d_i}$ be contained in an
$\ell_2$-ball of radius $B_i$, and assume
\[
\|f_{i,\theta_i}-f_{i,\theta'_i}\|_\infty
\;\le\; L_{\mathrm{seq},i}\,\|\theta_i-\theta'_i\|_2
\qquad\forall\ \theta_i,\theta'_i\in\Theta_i.
\]
Then
\[
H\!\left(\varepsilon,\,\mathcal{F}_i,\,\|\cdot\|_\infty\right)
\;\le\; d_i \log\!\left(\frac{c\,L_{\mathrm{seq},i} B_i}{\varepsilon}\right),
\]
with the same universal constant $c>0$ as in Lemma~\ref{lemma:param-to-func}.
\end{coro}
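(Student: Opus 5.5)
The corollary follows by chaining two results already established: the parameter-to-function cover of Lemma~\ref{lemma:param-to-func}, and the standard volumetric bound on covering numbers of an $\ell_2$-ball used in the SARL covering lemma. The argument is the SARL argument run for a single agent $i$, with $(d,B,L_{\mathrm{seq}})$ replaced by $(d_i,B_i,L_{\mathrm{seq},i})$.

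\textbf{Step 1 (function cover from parameter cover).} The hypothesis gives sup-norm Lipschitzness of $\theta_i\mapsto f_{i,\theta_i}$ with constant $L_{\mathrm{seq},i}$; this is exactly what Lemma~\ref{lem:param_to_value_MARL} supplies via the per-agent TV bound of Lemma~\ref{lem:MARL_tv}. Applying Lemma~\ref{lemma:param-to-func} with $\Theta=\Theta_i$ and $L=L_{\mathrm{seq},i}$ yields
\[
\mathcal{N}\!\left(\varepsilon,\mathcal{F}_i,\|\cdot\|_\infty\right)\le \mathcal{N}\!\left(\varepsilon/L_{\mathrm{seq},i},\,\Theta_i,\,\|\cdot\|_2\right).
\]

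\textbf{Step 2 (cover the parameter ball).} Since $\Theta_i$ lies in the ball of radius $B_i$ in $\mathbb{R}^{d_i}$, the volumetric argument gives
\[
\mathcal{N}(\eta,\Theta_i,\|\cdot\|_2)\le (1+2B_i/\eta)^{d_i}\le (cB_i/\eta)^{d_i}
\]
for $\eta\le B_i$, with a universal $c$ (e.g.\ $c=3$). The cover may need to be taken inside the ball rather than inside $\Theta_i$ itself; this costs at most a factor of $2$ in radius, which is absorbed into $c$. Substituting $\eta=\varepsilon/L_{\mathrm{seq},i}$ and taking logarithms gives the claimed bound $d_i\log(cL_{\mathrm{seq},i}B_i/\varepsilon)$.

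\textbf{Main obstacle.} The only delicate point is bookkeeping, namely ensuring the constant $c$ is the same universal one as in the SARL lemma. Two regimes need checking. In the regime $\varepsilon\ge L_{\mathrm{seq},i}B_i$, a single function already covers $\mathcal{F}_i$, so $H=0$; the stated bound still holds provided the logarithm is read as nonnegative, or the range of $\varepsilon$ is restricted. Beyond this, I expect no substantive difficulty, since the corollary is a direct specialization of the SARL covering lemma to one agent.
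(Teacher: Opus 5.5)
Your proposal is correct and follows essentially the paper's route, which is the SARL covering lemma with $(d,B,L_{\mathrm{seq}})$ replaced by $(d_i,B_i,L_{\mathrm{seq},i})$: Lemma~\ref{lemma:param-to-func} turns an $(\varepsilon/L_{\mathrm{seq},i})$-cover of $\Theta_i$ into an $\varepsilon$-cover of $\mathcal{F}_i$, and the standard $\ell_2$-ball bound $\log\mathcal{N}(\eta,\Theta_i,\|\cdot\|_2)\le d_i\log(cB_i/\eta)$ finishes it. Your caveats are points the paper silently skips: covers must have centres in $\Theta_i$, the bound is only meaningful for $\varepsilon\le cL_{\mathrm{seq},i}B_i$, and $c$ comes from the SARL covering lemma rather than from Lemma~\ref{lemma:param-to-func}, which has no constant.
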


\subsection{Proof of MARL Sample Complexity with Dependence}

\begin{proof}
Fix $\varepsilon,\delta\in(0,1)$ and let $K$ be the number of agents/segments.
For $k\in[K]$, write the \emph{partial (normalized) value} after the first $k$ agents as
\[
J^{(k)}(\theta_{1:k})
:= \mathbb{E}_{x\sim\mathbb{D},\,y\sim\pi_{\theta_{1:k}}(\cdot\mid x)}
\!\Bigl[\frac{1}{K}\sum_{i=1}^k r_i(x,y^{(i)},y^{(<i)})\Bigr],
\]
\[
\widehat{J}^{(k)}_n(\theta_{1:k})
:= \frac{1}{n}\sum_{j=1}^n \frac{1}{K}\sum_{i=1}^k r_i(x_j,y_j^{(i)},y_j^{(<i)}).
\]
We also define the \emph{stage-$k$ envelope} $G_k(\theta_{1:k-1}):=\sup_{\theta_k}J^{(k)}(\theta_{1:k-1},\theta_k)$.

\smallskip

Let $\mathcal{F}^{(k)}$ be the function class $\{x\mapsto \frac{1}{K}\sum_{i=1}^k r_i(\cdot)\}$ induced by $\{\theta_1,\dots,\theta_k\}$.

By Lemma~\ref{lem:lips-to-cover-karr} and Lemma~\ref{lem:param_to_value_MARL},
for any $\alpha>0$, we have,
\begin{equation}
\begin{split}
    H\!\left(\alpha,\mathcal{F}^{(k)},\|\cdot\|_\infty\right)
&\;\le\;\sum_{i=1}^k d_i \log\!\left(\frac{c\,L_\rho\,k\,L_{\mathrm{seq},i}B_i}{\alpha}\right) \\
&\;\le\;\sum_{i=1}^k d_i \log\!\left(\frac{c\,L_\rho\,K\,L_{\mathrm{seq},i}B_i}{\alpha}\right).
\end{split}
\label{eq:partial-entropy}
\end{equation}
The second inequality just uses $k\le K$.

\smallskip
Then, for bounded $[0,1]$-valued classes, a standard covering-number generalization bound gives:
there exists a universal $C>0$ such that for any class $\mathcal{F}$ and any
$\alpha\in(0,1)$, if
\[
n \;\ge\; C\,\frac{H(\alpha,\mathcal{F},\|\cdot\|_\infty)+\log(1/\delta')}{\alpha^2},
\]
then with probability at least $1-\delta'$,
$\sup_{f\in\mathcal{F}}\bigl|\mathbb{E}f - \widehat{\mathbb{E}}_n f\bigr|\le \alpha$.
Apply this with $\mathcal{F}=\mathcal{F}^{(k)}$, $\alpha=\varepsilon/K$, and
$\delta'=\delta/K$. Using \eqref{eq:partial-entropy}, it suffices that
\begin{equation}
n \;\ge\; C\,
\frac{\displaystyle \sum_{i=1}^k d_i\,\log\!\Bigl(\frac{c\,L_\rho K L_{\mathrm{seq},i}B_i}{\varepsilon/K}\Bigr)
+\log(K/\delta)}{(\varepsilon/K)^2}.
\label{eq:n-suff-k}
\end{equation}
Because the right-hand side is nondecreasing in $k$, the same $n$ that satisfies
\eqref{eq:n-suff-k} for $k=K$ guarantees that, with probability at least $1-\delta$ (by a union bound over $k=1,\dots,K$), the following
\emph{simultaneous} uniform deviations hold:
\begin{equation}
\mathcal{E}_k:\quad
\sup_{\theta_{1:k}}\left|J^{(k)}(\theta_{1:k})-\widehat{J}^{(k)}_n(\theta_{1:k})\right|
\;\le\; \frac{\varepsilon}{K},
\qquad k=1,\dots,K.
\label{eq:UCk}
\end{equation}

\smallskip
Define the data-dependent sequence by
\[
\hat{\theta}_1 \in \arg\max_{\theta_1}\;\widehat{J}^{(1)}_n(\theta_1),
\qquad
\hat{\theta}_k \in \arg\max_{\theta_k}\;\widehat{J}^{(k)}_n(\hat{\theta}_{1:k-1},\theta_k),\quad k=2,\dots,K,
\]
and let $\hat{\theta}_{1:K}$ be the resulting $K$-agent policy.

\smallskip
We prove by induction on $k$ that, on $\mathcal{E}_1\cap\cdots\cap\mathcal{E}_k$,
\begin{equation}
G_k(\hat{\theta}_{1:k-1}) - J^{(k)}(\hat{\theta}_{1:k}) \;\le\; \frac{2\varepsilon}{K}.
\label{eq:stage-regret}
\end{equation}
For $k=1$, by \eqref{eq:UCk} and the ERM property,
\begin{align*}
    J^{(1)}(\hat{\theta}_1)&\;\ge\;\widehat{J}^{(1)}_n(\hat{\theta}_1)-\tfrac{\varepsilon}{K} \\
&\;\ge\; \sup_{\theta_1}\widehat{J}^{(1)}_n(\theta_1) - \tfrac{\varepsilon}{K}\\
&\;\ge\; \sup_{\theta_1}J^{(1)}(\theta_1) - \tfrac{2\varepsilon}{K},
\end{align*}
which is \eqref{eq:stage-regret} with $G_1(\emptyset)=\sup_{\theta_1}J^{(1)}(\theta_1)$.

Assume \eqref{eq:stage-regret} holds up to $k-1$.
For stage $k$, fix $\hat{\theta}_{1:k-1}$.
Because the class $\{\theta_k\mapsto J^{(k)}(\hat{\theta}_{1:k-1},\theta_k)\}$ is a subclass of $\mathcal{F}^{(k)}$,
\eqref{eq:UCk} implies
\[
\sup_{\theta_k}J^{(k)}(\hat{\theta}_{1:k-1},\theta_k)
\;\le\;
\sup_{\theta_k}\widehat{J}^{(k)}_n(\hat{\theta}_{1:k-1},\theta_k) + \tfrac{\varepsilon}{K}
\;=\;
\widehat{J}^{(k)}_n(\hat{\theta}_{1:k}) + \tfrac{\varepsilon}{K}
\;\le\;
J^{(k)}(\hat{\theta}_{1:k}) + \tfrac{2\varepsilon}{K},
\]
which proves \eqref{eq:stage-regret} at $k$.

\smallskip
Observe that
\[
\sup_{\theta_{1:K}} J^{(K)}(\theta_{1:K}) - J^{(K)}(\hat{\theta}_{1:K})
\;\le\; \sum_{k=1}^K \bigl[ G_k(\hat{\theta}_{1:k-1}) - J^{(k)}(\hat{\theta}_{1:k}) \bigr],
\]
which is obtained by iteratively inserting $\sup_{\theta_k}$ and subtracting/adding $J^{(k)}$.
Combining with \eqref{eq:stage-regret} yields, on $\bigcap_{k=1}^K\mathcal{E}_k$,
\[
\sup_{\theta_{1:K}} J^{(K)}(\theta_{1:K}) - J^{(K)}(\hat{\theta}_{1:K})
\;\le\; \sum_{k=1}^K \frac{2\varepsilon}{K} \;=\; 2\varepsilon.
\]
Absorbing the factor $2$ into big-$\mathcal{O}$, the learned multi-agent policy is $\varepsilon$-optimal with probability at least $1-\delta$.

\smallskip
It remains to ensure $n$ satisfies \eqref{eq:n-suff-k} with $k=K$. Using \eqref{eq:partial-entropy} and $\alpha=\varepsilon/K$ gives
\[
n \;=\; \mathcal{O}\!\left(
\frac{\displaystyle \sum_{i=1}^K d_i\,\log\!\Bigl(\frac{L_\rho K L_{\mathrm{seq},i}B_i}{\varepsilon}\Bigr)
+\log(K/\delta)}{(\varepsilon/K)^2}
\right),
\]
which is the claimed bound.  When $\rho$ is the normalized sum (our Eq.~(3.1)), $L_\rho=1/K$ and the factor $L_\rho K$ inside the logarithm cancels to $1$.
\end{proof}

\subsection{Proof of Sample Complexity for MARL under Independency}
Next, we present the proof for Theorem~\ref{thm:MARL-pac-indep}. The proof is more straightforward in this case as the agents are independent of each other. 


\begin{proof}
For each agent $i$, define
\[
\mathcal{F}_i
:=\bigl\{\,f_{i,\theta_i}(x):=r_i(x,y^{(i)};\theta_i)\in[0,1]\;:\;
\theta_i\in\Theta_i\subset\mathbb{R}^{d_i}\bigr\}.
\]
The aggregated (normalized) class is
\[
\mathcal{G}
:=\Bigl\{\,g_{\theta_{1:K}}(x)=\frac{1}{K}\sum_{i=1}^{K} f_{i,\theta_i}(x)\,:\,
\theta_i\in\Theta_i\ \forall i\Bigr\}\subset[0,1]^{\mathcal X}.
\]

Let $J(g):=\mathbb{E}[g(X)]$ and $\widehat J_n(g):=\frac1n\sum_{j=1}^n g(X_j)$.
If for some $\eta>0$ we have
\begin{equation}
\sup_{g\in\mathcal G}\bigl|J(g)-\widehat J_n(g)\bigr|\ \le\ \eta,
\label{eq:UC-event-G}
\end{equation}
then for an ERM $\hat g\in\arg\max_{g\in\mathcal G}\widehat J_n(g)$ and any
maximizer $g^\star\in\arg\max_{g\in\mathcal G}J(g)$,
\begin{align*}
    J(g^\star)-J(\hat g)
& \ \le\ [J(g^\star)-\widehat J_n(g^\star)]+[\widehat J_n(\hat g)-J(\hat g)] \\
& \ \le\ 2\eta.
\end{align*}
Thus taking $\eta=\varepsilon/2$ gives an $\varepsilon$–optimal policy.  It
remains to ensure \eqref{eq:UC-event-G} with $\eta=\varepsilon/2$.

For any fixed tuple $(\theta_1,\ldots,\theta_K)$,
\begin{align*}
    \bigl|J(g_{\theta_{1:K}})-\widehat J_n(g_{\theta_{1:K}})\bigr|
 & =\left|\frac{1}{K}\sum_{i=1}^{K}\!\left(J(f_{i,\theta_i})-\widehat J_n(f_{i,\theta_i})\right)\right| \\
 & \le \frac{1}{K}\sum_{i=1}^{K}\sup_{\theta_i\in\Theta_i}
\bigl|J(f_{i,\theta_i})-\widehat J_n(f_{i,\theta_i})\bigr|.
\end{align*}

Taking the supremum over all $\theta_{1:K}$ yields
\begin{equation}
\sup_{g\in\mathcal G}\bigl|J(g)-\widehat J_n(g)\bigr|
\ \le\
\frac{1}{K}\sum_{i=1}^{K}\sup_{\theta_i\in\Theta_i}
\bigl|J(f_{i,\theta_i})-\widehat J_n(f_{i,\theta_i})\bigr|.
\label{eq:avg-to-per-agent}
\end{equation}
Hence it suffices to control each agent's uniform deviation and then average.

By Theorem~\ref{thm:SARL-pac} and Lemma~\ref{lem:param_to_value_MARL}, we know that  if,
\begin{equation}
n\ \ge\ \frac{C}{\varepsilon^{2}}
\left(d_i\log\!\Big(\frac{L_{\mathrm{seq},i}B_i}{\varepsilon}\Big)
+\log\!\Big(\frac{1}{\delta}\Big)\right)
\qquad\text{for all }i\in[K],
\label{eq:n-per-agent}
\end{equation}
then with probability at least $1-\delta$ (because of independency),
\begin{equation}
\sup_{\theta_i\in\Theta_i}\bigl|J(f_{i,\theta_i})-\widehat J_n(f_{i,\theta_i})\bigr|
\ \le\ \varepsilon
\qquad\text{simultaneously for all }i\in[K].
\label{eq:per-agent-UC}
\end{equation}

Combining \eqref{eq:avg-to-per-agent} and \eqref{eq:per-agent-UC} gives
$\sup_{g\in\mathcal G}|J(g)-\widehat J_n(g)|\le \varepsilon$.  By Step~1, ERM
is $\varepsilon$–optimal with probability $\ge 1-\delta$, provided $n$ satisfies
\eqref{eq:n-per-agent}.  Since \eqref{eq:n-per-agent} must hold for all $i$,
we may replace $d_i$ and $L_{\mathrm{seq},i}B_i$ by their maxima
$\tilde d:=\max_i d_i$ and $\gamma:=\max_i (L_{\mathrm{seq},i}B_i)$ to obtain
\[
n\ \ge\ \frac{C}{\varepsilon^{2}}\left(
\tilde d\,\log\!\Big(\frac{\gamma}{\varepsilon}\Big)
+\log\!\Big(\frac{1}{\delta}\Big)\right).
\]
This completes the proof.
\end{proof}

\section{Proof of Relative Efficiency}
In this appendix, we present the proof and derivation for the relative efficiency results in the paper. 

\subsection{Proof of Proposition~\ref{prop:independence}}

\begin{proof}
From Theorem~4.1 (SARL) and Theorem~4.3 (MARL, independent tasks) we have,
suppressing universal constants,
\begin{align}
N_{\mathrm{SARL}}(\varepsilon,\delta)
&\;\asymp\;
\frac{\,d\,\log\!\big(\tfrac{L_{\mathrm{seq}}B}{\varepsilon}\big)
\;+\;\log(1/\delta)}{\varepsilon^{2}}, \\
& L_{\mathrm{seq}}=T_{\max}L_{\mathrm{step}},
\label{eq:sarl}\\
N_{\mathrm{MARL}}(\varepsilon,\delta)
&\;\asymp\;
\frac{\,\tilde d\,\log\!\big(\tfrac{\gamma}{\varepsilon}\big)
\;+\;\log(1/\delta)}{\varepsilon^{2}},\\
& \tilde d=\max_{i\in[K]}d_i,\quad
\gamma=\max_{i\in[K]}(L_{\mathrm{seq},i}B_i),\quad
L_{\mathrm{seq},i}=T_{\max,i}L_{\mathrm{step}}.
\label{eq:marl}
\end{align}

Under a fair comparison (same backbone/regularization scale), each segment
cannot be harder than the monolithic task:
\[
\tilde d=\max_i d_i\ \le\ d,
\qquad
\gamma=\max_i(L_{\mathrm{seq},i}B_i)\ \le\ L_{\mathrm{seq}}B,
\]
because $T_{\max,i}\le T_{\max}$ and each radius $B_i$ is bounded by the SARL
radius $B$. Since $x\mapsto\log(x/\varepsilon)$ is increasing on
$(\varepsilon,\infty)$, we obtain
\[
\tilde d\,\log\!\Big(\tfrac{\gamma}{\varepsilon}\Big)
\ \le\
d\,\log\!\Big(\tfrac{L_{\mathrm{seq}}B}{\varepsilon}\Big).
\]
Plugging this into \eqref{eq:marl}–\eqref{eq:sarl} yields
\begin{align*}
    N_{\mathrm{MARL}}(\varepsilon,\delta)
& \;\le\;
\frac{\,d\,\log(\tfrac{L_{\mathrm{seq}}B}{\varepsilon})+\log(1/\delta)}{\varepsilon^{2}} \\
& \;\asymp\;
N_{\mathrm{SARL}}(\varepsilon,\delta).
\end{align*}

i.e.\ $$N_{\mathrm{MARL}}\lesssim N_{\mathrm{SARL}}$$.

Assume $d_i=d/K$ and $T_{\max,i}=T_{\max}/K$ for all $i$.
Then $\tilde d=d/K$ and $\gamma=\max_i(L_{\mathrm{seq},i}B_i)
=(L_{\mathrm{seq}}/K)\,\max_i B_i\le (L_{\mathrm{seq}}B)/K$.
Substituting into \eqref{eq:marl} and dividing by \eqref{eq:sarl} gives
\[
\frac{N_{\mathrm{MARL}}}{N_{\mathrm{SARL}}}
\;\le\;
\frac{ \tfrac{d}{K}\,\log\!\big(\tfrac{L_{\mathrm{seq}}B}{K\varepsilon}\big)+\log(1/\delta)}
     { d\,\log\!\big(\tfrac{L_{\mathrm{seq}}B}{\varepsilon}\big)+\log(1/\delta)}.
\]
Using $\log(\tfrac{L_{\mathrm{seq}}B}{K\varepsilon})
=\log(\tfrac{L_{\mathrm{seq}}B}{\varepsilon})-\log K$ and
$\log(\tfrac{L_{\mathrm{seq}}B}{\varepsilon})-\log K
\le \log(\tfrac{L_{\mathrm{seq}}B}{\varepsilon})$, we get the clean upper bound
\begin{equation}
\begin{split}
\frac{N_{\mathrm{MARL}}}{N_{\mathrm{SARL}}}
&\;\le\;
\frac{\tfrac{d}{K}\,\log(\tfrac{L_{\mathrm{seq}}B}{\varepsilon})+\log(1/\delta)}
     {d\,\log(\tfrac{L_{\mathrm{seq}}B}{\varepsilon})+\log(1/\delta)}\\ 
& \;=\;
\frac{1}{K}
\;+\;
\Bigl(1-\frac{1}{K}\Bigr)\cdot
\frac{\log(1/\delta)}
{d\,\log(\tfrac{L_{\mathrm{seq}}B}{\varepsilon})+\log(1/\delta)}.
\label{eq:ratio-final}
\end{split}
\end{equation}
The second term in \eqref{eq:ratio-final} is at most $1-\tfrac{1}{K}$ and
vanishes as soon as the accuracy/complexity term dominates the confidence term,
i.e.\ $d\,\log(L_{\mathrm{seq}}B/\varepsilon)\gg\log(1/\delta)$.
Therefore,
\[
\frac{N_{\mathrm{MARL}}}{N_{\mathrm{SARL}}}
\;\le\;
\frac{1}{K}
\;+\;o\!\Big(\frac{1}{K}\Big)
\qquad\text{(accuracy-dominated regime)},
\]
which we summarize as
$$\;N_{\mathrm{MARL}}/N_{\mathrm{SARL}}\ \lesssim\ 1/K.$$
\end{proof}

\subsection{Proof of Proposition~\ref{prop:dependence}}

\begin{proof}
By Theorem~4.1 (SARL) and Theorem~4.2 (sequential MARL) there exist universal
constants $c_1,c_2>0$ such that, for all accuracies $\varepsilon$ in the regime
where the logarithms are positive and for all $\delta\in(0,1)$,
\begin{align}
N_{\mathrm{SARL}}(\varepsilon,\delta)
&\ \le\ c_1\;
\frac{A + L}{\varepsilon^{2}},
\qquad
A:=d\,\log\!\Big(\frac{L_{\mathrm{seq}}B}{\varepsilon}\Big),\quad
L:=\log\!\Big(\frac{1}{\delta}\Big), \label{eq:sarl-bound}\\[2mm]
N_{\mathrm{MARL}}(\varepsilon,\delta)
&\ \le\ c_2\;
\frac{S + L_K}{(\varepsilon/K)^{2}},
\qquad
S:=\sum_{i=1}^{K} d_i\,
     \log\!\Big(\frac{L_\rho K\,L_{\mathrm{seq},i}B_i}{\varepsilon}\Big),\quad
L_K:=\log\!\Big(\frac{K}{\delta}\Big). \label{eq:marl-bound}
\end{align}
For the normalized additive aggregator (the setting of Theorem~4.2),
$\rho(z)=\tfrac{1}{K}\sum_{i=1}^{K}z_i$ and hence $L_\rho=1/K$, so $L_\rho K=1$,
and the $K$ inside the logarithm cancels:
\[
S=\sum_{i=1}^{K} d_i\,\log\!\Big(\frac{L_{\mathrm{seq},i}B_i}{\varepsilon}\Big).
\]
Divide \eqref{eq:marl-bound} by \eqref{eq:sarl-bound} and absorb $c_1,c_2$ in
$\lesssim$:
\[
\frac{N_{\mathrm{MARL}}}{N_{\mathrm{SARL}}}
\ \lesssim\
K^{2}
\cdot
\frac{S + L_K}{A + L}.
\]
Factor the fraction using the identity
\[
\dfrac{x+y}{u+v}
=\dfrac{x}{u}\cdot\dfrac{1+y/x}{1+v/u},
\]
(valid for positive $x,y,u,v$):
\begin{align*}
    \frac{S + L_K}{A + L}
 & =\frac{S}{A}\cdot
  \frac{1 + \dfrac{L_K}{S}}
       {1 + \dfrac{L}{A}} \\
  & =
\underbrace{\frac{\sum_{i=1}^{K} d_i\log\!\big(\tfrac{L_{\mathrm{seq},i}B_i}{\varepsilon}\big)}
                 {d\log\!\big(\tfrac{L_{\mathrm{seq}}B}{\varepsilon}\big)}}_{\displaystyle \mathcal{A}(\varepsilon)}
\cdot
\underbrace{\frac{1+\dfrac{\log(K/\delta)}
                      {\sum_{i=1}^{K} d_i\log\!\big(\tfrac{L_{\mathrm{seq},i}B_i}{\varepsilon}\big)}}
                 {1+\dfrac{\log(1/\delta)}
                      {d\log\!\big(\tfrac{L_{\mathrm{seq}}B}{\varepsilon}\big)}}}_{\displaystyle \mathcal{C}(\varepsilon,\delta)}.
\end{align*}

Combining the last two displays gives the desired bound.
\end{proof}

\section{Imperfect Task Alignment}
In this appendix, we present the proof for the results under imperfect task alignemtn.

\subsection{Proof of Theorem~\ref{thm:MARL-pac-imperfect}}

\begin{proof}
Let $J_R(\pi):=\mathbb{E}[R(x,y)]$ and
$J_{\overline{R}}(\pi):=\mathbb{E}[\overline{R}(x,y)]$.
By the definition of $\alpha$,
\begin{equation}
\forall\pi:\quad
\bigl|J_R(\pi)-J_{\overline{R}}(\pi)\bigr|\le \alpha,
\qquad
\Bigl|\sup_{\pi} J_R(\pi)-\sup_{\pi}J_{\overline{R}}(\pi)\Bigr|\le \alpha.
\label{eq:alpha-gap}
\end{equation}
Hence for any policy $\hat\pi$,
\begin{equation}
\sup_{\pi}J_R(\pi)-J_R(\hat\pi)
\;\le\;
\Bigl[\sup_{\pi}J_{\overline{R}}(\pi)-J_{\overline{R}}(\hat\pi)\Bigr]
\;+\;2\alpha.
\label{eq:alpha-reduction}
\end{equation}
Therefore, to achieve $\varepsilon$–optimality under $R$, it suffices to learn a
policy that is $\beta$–optimal under $\overline{R}$ with
\begin{equation}
\beta:=\varepsilon-2\alpha \quad (>0).
\label{eq:beta-def}
\end{equation}

Write
\[
J_{\overline{R}}(\theta_{1:K})
=\frac{1}{K}\sum_{i=1}^{K} J_i(\theta_i),
\qquad
\widehat{J}_{\overline{R},n}(\theta_{1:K})
=\frac{1}{K}\sum_{i=1}^{K} \widehat{J}_{i,n}(\theta_i),
\]
where $J_i(\theta_i)$ and $\widehat{J}_{i,n}(\theta_i)$ are the population and
empirical expectations of the $i$-th segment reward under agent-$i$ parameters
$\theta_i$.
By the triangle inequality and the averaging,
\begin{equation}
\begin{split}
\sup_{\theta_{1:K}}
\bigl|J_{\overline{R}}(\theta_{1:K})-\widehat{J}_{\overline{R},n}(\theta_{1:K})\bigr| 
& \;\le\;
\frac{1}{K}\sum_{i=1}^{K}\sup_{\theta_i}
\bigl|J_i(\theta_i)-\widehat{J}_{i,n}(\theta_i)\bigr| \\
& \;\le\;
\max_{i\in[K]}\ \sup_{\theta_i}
\bigl|J_i(\theta_i)-\widehat{J}_{i,n}(\theta_i)\bigr|.
\label{eq:worst-agent-reduction}
\end{split}
\end{equation}
Thus, if for some $\beta>0$ the right-hand side of
\eqref{eq:worst-agent-reduction} is at most $\beta$, then the aggregated
uniform deviation is at most $\beta$ as well. Consequently, the sample size
required to learn the aggregated objective to accuracy $\beta$ is controlled by
the \emph{hardest} agent.

By Theorem~\ref{thm:MARL-pac-indep}, we immediately get that the aggregated uniform deviation is at most $\beta$ with probability
$\ge 1-\delta$ whenever
\begin{align*}
    n\ & \gtrsim\
\frac{\displaystyle \tilde d\,\log\!\Big(\frac{K\,\gamma}{\beta}\Big)
      +\log\!\Big(\frac{1}{\delta}\Big)}
     {\beta^{2}}, \\
& \tilde d:=\max_i d_i,\quad \\
&\gamma:=\max_i (L_{\mathrm{seq},i}B_i).
\end{align*}

Set $\beta=\varepsilon-2\alpha$ from \eqref{eq:beta-def} and then invoke
\eqref{eq:alpha-reduction} to convert $\beta$–optimality for $\overline{R}$ into
$\varepsilon$–optimality for $R$. This completes the proof.
\end{proof}

\subsection{Proof of Proposition~\ref{prop:imperfect_alignment}}

\begin{proof}
By Theorem~\ref{thm:SARL-pac} (SARL) and Theorem~\ref{thm:MARL-pac-imperfect} (MARL with alignment), for universal
constants $c_1,c_2>0$,
\begin{align*}
    N_{\mathrm{SARL}}(\varepsilon,\delta)\ \le\ c_1\,
\frac{\underbrace{d\log\!\big(\tfrac{L_{\mathrm{seq}}B}{\varepsilon}\big)}_{=:A}
+\underbrace{\log(1/\delta)}_{=:L}}{\varepsilon^2}, \\
N_{\mathrm{MARL}}(\varepsilon,\delta)\ \le\ c_2\,
\frac{\underbrace{\tilde d\log\!\big(\tfrac{K\gamma}{\varepsilon-2\alpha}\big)}_{=:C}
+\underbrace{\log(1/\delta)}_{=:L}}{(\varepsilon-2\alpha)^2}.
\end{align*}

Taking the ratio and absorbing $c_1,c_2$ into $\lesssim$,
\begin{align*}
    \frac{N_{\mathrm{MARL}}}{N_{\mathrm{SARL}}}
& \;\lesssim\;
\frac{\varepsilon^2}{(\varepsilon-2\alpha)^2}\cdot
\frac{C+L}{A+L} \\
& \;=\;
\frac{1}{\bigl(1-\tfrac{2\alpha}{\varepsilon}\bigr)^2}\cdot
\underbrace{\frac{C}{A}}_{=\ \kappa_d\kappa_\ell}\cdot
\underbrace{\frac{1+\frac{L}{C}}{1+\frac{L}{A}}}_{=\ \mathcal{C}_\alpha(\varepsilon,\delta)},
\end{align*}

which is the expression in the theorem.

If the accuracy terms dominate the confidence terms ($A\gg L$ and $C\gg L$),
then $\mathcal{C}_\alpha(\varepsilon,\delta)=1+o(1)$, so the ratio condition
$$N_{\mathrm{MARL}}\lesssim N_{\mathrm{SARL}}$$ 
hold (up to constants) if,
\[
\frac{1}{\bigl(1-\tfrac{2\alpha}{\varepsilon}\bigr)^2}\ \kappa_d\kappa_\ell \ \lesssim\ 1
\quad\Longleftrightarrow\quad
\kappa_d\kappa_\ell \ \lesssim\ \bigl(1-\tfrac{2\alpha}{\varepsilon}\bigr)^2.
\]
Removing the universal constant hidden in $\lesssim$ (by tightening constants in
the base theorems) yields the stated “if and only if’’ criterion in the theorem.
\end{proof}

\section{Experiment Details}\label{append:exp}

This appendix provides further details on our experimental setup, synthetic task generation, and hyperparameter tuning procedures.

\subsection{Testbed}

Experiments were performed on a Dell PowerEdge C4140 server with the following specifications:

\begin{itemize}
    \item \textbf{CPU:} Dual Intel Xeon Gold 6230 processors (20 cores and 40 threads each).
    \item \textbf{GPU:} Four NVIDIA Tesla V100 SXM2 GPUs, each with 32GB memory and NVLink support.
    \item \textbf{Memory:} 256GB RAM (8 $\times$ 32GB RDIMM modules).
    \item \textbf{Storage:} Dual 1.92TB SSDs (6Gbps SATA interface).
    \item \textbf{Networking:} Dual 1Gbps NICs and a Mellanox ConnectX-5 EX dual-port 40/100GbE QSFP28 adapter with GPUDirect.
    \item \textbf{Operating System:} Ubuntu 18.04 LTS.
\end{itemize}

\subsection{Synthetic Noisy Arithmetic Tasks}

We construct synthetic sequence-to-sequence regression tasks based on noisy arithmetic operations. Given a real-valued input sequence $\xb = (\xb_1, \xb_2, \dots, \xb_T)$, the target sequence $\yb = (y_1, y_2, \dots, y_T)$ is generated according to two structural assumptions—independent and dependent subtasks.

\subsubsection{Independent Noisy Arithmetic Task}

Each output is computed independently based on a local input subsequence. Formally, for each $t$:
\[
y_t = \wb_t^\top \xb_t + \xi_t,\quad \text{with} \quad \xi_t \sim \mathcal{N}(0,\sigma^2).
\]

In this scenario, each subtask (predicting $y_t$) is conditionally independent given the input, representing a fully decomposable setting.

\subsubsection{Dependent Noisy Arithmetic Task}

Each output depends on both a local input subsequence and previous outputs, introducing dependencies across subtasks:
\[
y_t = \wb_t^\top \xb_t + \mathrm{average}(y^{(<t)}) + \xi_t,\quad \text{with}\quad \xi_t \sim \mathcal{N}(0,\sigma^2).
\]

Here, the output at each step incorporates the average of previously generated outputs, creating explicit interdependencies between subtasks.

\subsubsection{Comparison of Task Structures}

The independent task setting simulates parallelizable subtasks that can be learned separately, while the dependent setting explicitly introduces task coupling, requiring joint reasoning or sequential optimization across subtasks. This difference is key to examining the comparative effectiveness of SARL and MARL strategies in handling different task structures.

\subsection{Hyperparameter Tuning}

All models were trained using the Adam optimizer. The primary hyperparameter we tuned was the learning rate $\eta$. We performed a grid search within a predefined range, selecting the optimal value based on performance on an independent validation dataset. This tuning process was conducted separately from the testing phase, ensuring unbiased evaluation of the results.

\subsection{Details of Additional Experiments}
\label{sec:additional-experiment-details}

We have added (1) synthetic alignment sweeps, (2) synthetic
agent-count sweeps, and (3) a lightweight real-world LLM experiment on GSM8K.
This subsection summarizes the setup of each experiment.

\subsubsection{Synthetic Experiments: Varying the Alignment Factor $\alpha$}

To examine the effect of task alignment, we modify the synthetic task generator
to include a tunable dependence parameter $\lambda \in [0,1]$ that directly
controls the induced misalignment factor~$\alpha$. For each agent $i$, the
subtask output is generated via:
\begin{equation}
    y_i \;=\; \wb_i^\top \xb_i \;+\; 
    \lambda \cdot \mathrm{average}\!\bigl(y^{(<i)}\bigr)
    \;+\; \xi_i,
\end{equation}
where $\xi_i$ is Gaussian noise and $\mathrm{average}(y^{(<i)})$ denotes the
mean of all previously generated outputs. The parameter $\lambda$ determines the
strength of the dependency:
\begin{itemize}
    \item $\lambda = 0 \;$ \textrm{(strong alignment)}: independent subtasks, yielding $\alpha = 0$;
    \item $\lambda = 1 \;$ \textrm{(weak alignment)}: fully dependent subtasks, yielding large $\alpha$.
\end{itemize}
We evaluate MARL and SARL across several values of $\lambda$.

\subsubsection{Synthetic Experiments: Sweeping the Number of Agents $K$}

To evaluate the impact of the number of agents, we construct two types of
synthetic tasks consistent with the theoretical decompositions:

\paragraph{Independent subtasks (Eq.~(3.2)):}
\[
    y_i = \wb_i^\top \xb_i + \xi_i.
\]

\paragraph{Dependent subtasks (Eq.~(3.1)):}
\[
    y_i = \wb_i^\top \xb_i + \mathrm{average}(y^{(<i)}) + \xi_i.
\]

We sweep $K \in \{1,2,3,4\}$ while keeping the \emph{total} model capacity
fixed by scaling per-agent dimensionality across runs. 

\subsubsection{Real-World LLM Experiment on GSM8K}

To test whether our theoretical insights transfer to practical LLM-based RL, we
performed a small-scale experiment using the \texttt{Qwen} model on the GSM8K
math reasoning dataset. We construct two tasks reflecting high- and low-alignment
regimes.

\paragraph{Independent structure (high alignment).}
We randomly sample and concatenate two \emph{unrelated} GSM8K math problems
into a single input. Agent~1 solves the first problem, and Agent~2 solves the
second. The subtasks are independent, inducing a small alignment factor~$\alpha$.

\paragraph{Dependent structure (low alignment).}
We decompose each problem into a two-stage pipeline:
\[
    \text{thinking} \;\rightarrow\; \text{solving}.
\]
Agent~1 generates the chain-of-thought, and Agent~2 produces the final answer.
Because the solution quality depends heavily on the preceding reasoning, this
construction induces strong forward dependence and a large alignment factor.

\paragraph{Training setups.}
For both structures, we compare:
\begin{itemize}
    \item \textbf{SARL}: a single Qwen model trained end-to-end on the complete task.
    \item \textbf{MARL}: two Qwen-based agents, each trained on its own segment
    of the decomposition.
\end{itemize}

\section{Additional Discussion}

\subsection{Effect of Parameter Sharing on the Dimension Terms}
\label{sec:shared-parameters}

A natural question concerns how our analysis treats the parameter dimensions
$\{d_i\}_{i=1}^K$ when agents share part of their architectures, such as a common
encoder, shared transformer layers, or shared embedding tables. In our framework,
the quantity $d_i$ denotes the \emph{effective dimension} of the trainable
parameter subset $\Theta_i$ associated with agent~$i$. Importantly, $d_i$ does not
correspond to the raw number of neural network parameters, but rather to the size
of the induced hypothesis class as measured through covering numbers.

\paragraph{Shared vs.\ non-shared parameters.}
When multiple agents share a common set of parameters (e.g., a shared backbone),
the shared block contributes to the covering number of the joint hypothesis class
only \emph{once}. Consequently, the effective dimension of a MARL system is not the
naive sum $\sum_i d_i$ but instead the covering number of the \emph{union} of all
trainable parameter subsets. Our derivations therefore do not assume that agents
are parameter-disjoint, nor do they require independent architectures. The
covering-number analysis automatically accounts for overlaps in the parameter
spaces.

\paragraph{Impact on sample complexity.}
Parameter sharing can only reduce the effective capacity of the MARL hypothesis
class. In the independent-subtask regime (Theorem~\ref{thm:MARL-pac-indep}), the
dominant term is $d_e = \max_i d_i$, and shared parameters naturally reduce this
quantity. In the dependent-subtask regime (Theorem~\ref{thm:MARL-pac-dep}), the
corresponding term $\sum_i d_i$ decreases whenever agents reuse a shared backbone.
In either case, parameter sharing strengthens (rather than weakens) MARL's
relative advantage compared to SARL, as it decreases the size of the MARL policy
class without altering the SARL complexity.

\paragraph{Effect on structural conclusions.}
The key insight is that our \emph{relative} comparison between SARL and MARL
depends on how sample complexity scales with structural factors such as subtask
independence, sequential dependence, and task alignment—not on whether parameters
are shared or disjoint. Parameter sharing modifies only constant factors through
the covering numbers, but it does not alter the qualitative conclusions:
MARL retains its benefits under independent decompositions and incurs cumulative
costs under dependent ones, regardless of architectural sharing.

\paragraph{Additional clarifications.}
We have added explicit discussion of these considerations in the main text and in
the appendix, emphasizing how covering-number methods naturally incorporate shared
parameters and why the MARL--SARL comparison remains valid under heterogeneous or
partially shared architectures.

\subsection{Practical Estimation of the Alignment Factor \texorpdfstring{$\alpha$}{alpha}}
\label{sec:estimate-alpha}

Recall that the alignment factor $\alpha$ (Eq.~\eqref{eq:alpha-gap}) quantifies the
worst-case discrepancy between the original unified reward $R(x,y)$ and the reward induced by
the MARL-style independent decomposition:
\begin{equation}
\alpha \;=\; 
\sup_{x,y} 
\Bigg| 
R(x,y) \;-\; \frac{1}{K}\sum_{i=1}^K r_i\!\left(x, y^{(i)}\right)
\Bigg|.
\end{equation}
A small value of $\alpha$ indicates that the decomposition closely matches the underlying task
structure, whereas a large $\alpha$ reflects significant misalignment and potentially increased
sample complexity (Theorem~\ref{thm:MARL-pac-imperfect} and Proposition~\ref{prop:imperfect_alignment}).
In this subsection, we describe two practical procedures for estimating~$\alpha$ in LLM-based RL
settings.

\paragraph{Monte Carlo estimation.}
Given a collection of rollouts $\mathcal{S}=\{(x_j, y_j)\}_{j=1}^n$ generated from one or more
policies, an empirical upper bound on~$\alpha$ can be obtained via
\begin{equation}
\widehat{\alpha}_{\mathrm{MC}}
\;=\;
\max_{(x_j,y_j)\in\mathcal{S}}
\left|
R(x_j,y_j)
\;-\;
\frac{1}{K}
\sum_{i=1}^K r_i\!\left(x_j, y_j^{(i)}\right)
\right|.
\end{equation}
In practice, using a high quantile (e.g., the 95th percentile) instead of a strict maximum often
yields a more robust and stable estimate, especially in stochastic or high-variance environments.
This Monte Carlo approximation can be updated online as additional rollouts are collected.

\paragraph{Gradient-based maximization.}
A complementary approach seeks to approximate the worst-case discrepancy
\begin{equation}
\Delta(x,y) 
\;=\;
\left|
R(x,y)
\;-\;
\frac{1}{K}
\sum_{i=1}^K r_i\!\left(x, y^{(i)}\right)
\right|,
\end{equation}
by performing gradient ascent on $(x,y)$ (or on $y$ conditioned on $x$) with respect to
$\Delta(x,y)$. This method can reveal adversarial cases or rare task instances where the imposed
independent decomposition diverges most sharply from the true reward. Such adversarial examples
provide a principled approximation of~$\alpha$ beyond what is observed in typical Monte Carlo
sampling.

\paragraph{Discussion.}
Both estimation procedures are compatible with standard LLM rollout pipelines and do not require
modification to the underlying training framework. While neither approach guarantees an exact
evaluation of the supremum defining $\alpha$, together they offer practical and informative
estimates that can be used to assess the validity of an imposed task decomposition and to
interpret the conditions for MARL--SARL sample-efficiency comparisons.

\subsection{Discussion on the Use of Covering Numbers}
\label{sec:covering-number-discussion}

A natural question arises from the use of covering-number--based PAC bounds, which are known
to be loose in absolute magnitude. We emphasize that in the context of this
paper---namely, a \emph{structural comparison} between SARL and MARL---covering
numbers remain both appropriate and highly informative.

\paragraph{Relative efficiency rather than exact sample counts.}
Our goal is not to provide numerically tight estimates of the exact number of
samples needed for training LLM-based RL systems. Instead, we aim to derive
closed-form \emph{scaling laws} that reveal how sample complexity changes as a
function of key structural quantities: agent dimensionality $d_i$, number of
agents $K$, decomposition structure (independent vs.\ dependent), and alignment
level~$\alpha$. For such \emph{comparative} analysis, covering-number bounds are
well suited because they directly characterize how the capacity of a policy
class scales with its structural components. Tight constants are therefore less
important than capturing the correct scaling behavior.

\paragraph{Advantages over alternative complexity measures.}
One might consider alternative complexity notions such as Rademacher
complexity. However, Rademacher complexity depends on the \emph{underlying data
distribution}, making it difficult to isolate intrinsic structural effects such
as reward decomposition or misalignment. In particular, decomposing the reward
into $K$ subtasks would interact non-trivially with the rollout-induced
distribution, and the misalignment factor $\alpha$ could not be expressed
cleanly. In contrast, covering numbers depend on structural properties of the
hypothesis class itself, making them substantially more appropriate for
analyzing decomposition, sequential dependence, and misalignment---the core
drivers of the MARL--SARL comparison.

\subsection{Mapping Theoretical Hyperparameters to Practical LLM--RL Quantities}
\label{sec:hyperparameter-mapping}

The theoretical bounds derived depend on a
collection of hyperparameters, including the effective dimensions
$d_i, d$, the radius parameters $B_i, B$, the Lipschitz constants
$L_{\text{step}}, L_{\text{seq}}$, and the sequence length~$T$.
To facilitate practical interpretation, we provide below a mapping between these
quantities and their counterparts in modern LLM--RL pipelines.

\paragraph{Effective dimensions $d_i$ and $d$.}
In our analysis, $d_i$ denotes the \emph{effective trainable dimensionality} of
agent~$i$'s policy class, not the total number of parameters in a full LLM.
In practice, this corresponds to the number of free parameters in the
\emph{trainable subset} of the model, such as LoRA modules, adapter layers, or
task-specific heads. When multiple agents share parameters (e.g., a common
encoder or shared transformer blocks), the shared subset contributes to the
covering number \emph{once}. Thus, the effective dimension corresponds to the
covering number of the \emph{union} of trainable parameter subsets rather than
the naïve sum of raw parameter counts.

\paragraph{Radius parameters $B_i$ and $B$.}
The constants $B_i$ and~$B$ capture the radius of the feasible parameter spaces
for each agent. In LLM--RL practice, compactness of the parameter domain is
induced by standard techniques such as:
\begin{itemize}
    \item \textbf{weight decay} or $\ell_2$ regularization,
    \item \textbf{gradient clipping}, which prevents excessively large updates,
    \item \textbf{KL-regularization} toward a reference model
          (as used in PPO, GRPO, DPO, and RLHF pipelines),
\end{itemize}
all of which ensure that the learned policy remains in a bounded neighborhood of
the initialization. This matches the compactness assumption required in the PAC
analysis.

\paragraph{Lipschitz constants $L_{\text{step}}$ and $L_{\text{seq}}$.}
These constants encode the smoothness of the policy with respect to its
parameters at the token level and sequence level, respectively. In practice,
these quantities are influenced by:
\begin{itemize}
    \item \textbf{gradient clipping norms},
    \item \textbf{normalization layers} (LayerNorm, RMSNorm),
    \item \textbf{temperature} and other sampling controls,
    \item \textbf{learning-rate schedules} and optimizer hyperparameters.
\end{itemize}
Empirically, the Lipschitz constants can be approximated via sensitivity
analysis of log-probabilities under small parameter perturbations.

\paragraph{Sequence length $T$.}
The parameter $T$ in our bounds corresponds to the maximum number of decoding
steps (or reasoning steps) undertaken within each agent's subtask. In practical
LLM--RL settings, this is determined by:
\begin{itemize}
    \item \textbf{context window limits}, 
    \item \textbf{maximum generation length} in decoding,
    \item \textbf{structure of decomposed subtasks}
          (e.g., planner output length, reasoning-chain length, solution depth).
\end{itemize}

\begin{table}[t]
\centering
\renewcommand{\arraystretch}{1.15}
\begin{tabular}{p{3cm} p{11cm}}
\toprule
\textbf{Symbol} & \textbf{Meaning} \\
\midrule

$\xb \in \mathcal{X}$ 
& Input prompt sampled from distribution $\mathbb{D}$. \\

$\yb = (a_1,\dots,a_T)$ 
& Generated output sequence from an LLM policy. \\

$\yb^{(i)}$ 
& Segment $i$ of the decomposed output. \\

$\yb^{(<i)}$ 
& Concatenation of segments $1$ to $i-1$ (previous agent outputs). \\

$\pi_\theta$ 
& SARL policy parameterized by $\theta \in \Theta$ . \\

$\pi_{\theta_i}$ 
& MARL agent-$i$ policy with parameters $\theta_i\in\Theta_i$. \\

$\theta = (\theta_1,\dots,\theta_K)$ 
& Joint MARL parameter vector. \\

$d$ 
& Effective dimension of SARL parameter space $\Theta \subset \mathbb{R}^d$. \\

$d_i$ 
& Effective dimension of agent-$i$ parameter space $\Theta_i \subset \mathbb{R}^{d_i}$. \\

$B,\, B_i$ 
& Radius bounds for parameter sets $\|\theta\|_2\le B$, $\|\theta_i\|_2\le B_i$ . \\

$T_{\max}$ 
& Maximum generation length in SARL . \\

$T_{\max,i}$ 
& Maximum generation length for agent $i$ in MARL. \\

$L_{\text{step}}$ 
& Lipschitz constant of token-level policy w.r.t. parameters . \\

$L_{\text{seq}} = T_{\max} L_{\text{step}}$ 
& Sequence-level Lipschitz constant for SARL. \\

$L_{\text{seq},i} = T_{\max,i} L_{\text{step}}$ 
& Sequence-level Lipschitz constant for agent $i$ (MARL). \\

$R(x,y)$ 
& Unified SARL reward for a full sequence (bounded in $[0,1]$). \\

$r_i(x, y^{(i)}, y^{(<i)})$ 
& MARL segment reward under \emph{dependent} decomposition (Eq.~3.1). \\

$r_i(x, y^{(i)})$ 
& MARL segment reward under \emph{independent} decomposition (Eq.~3.2). \\

$R_{\text{dep}}(x,y)$ 
& Decomposed dependent reward (Eq.~3.1). \\

$R_{\text{indep}}(x,y)$ 
& Decomposed independent reward (Eq.~3.2). \\

$J(\pi)$ 
& Population reward objective $\mathbb{E}_{x,y}[R(x,y)]$ (SARL). \\

$J_{\text{MARL}}(\pi)$ 
& MARL population objective using $R_{\text{dep/indep}}$. \\

$\widehat{J}_n(\pi)$ 
& Empirical objective from $n$ samples. \\

$N_{\text{SARL}}(\varepsilon,\delta)$ 
& PAC sample complexity for SARL ). \\

$N_{\text{MARL}}(\varepsilon,\delta)$ 
& PAC sample complexity for MARL . \\

$K$ 
& Number of agents (segments) in MARL. \\

\bottomrule
\end{tabular}
\caption{Summary of notation used throughout the paper.}
\end{table}

\end{document}